\newtheorem{Lemma}{Lemma}
\newtheorem{Proposition}{Proposition}
\newtheorem{Theorem}{Theorem}
\newtheorem{Definition}{Definition}
\newtheorem{Example}{Example}
\newtheorem{Remark}{Remark}
\newtheorem{Assumption}{Assumption}
\newcolumntype{P}[1]{>{\centering\arraybackslash}p{#1}}
\title{What is a Good Metric to Study Generalization of Minimax Learners?}
\author{Asuman Ozdaglar \and Sarath Pattathil \and  Jiawei Zhang\thanks{Corresponding author.  Authors are arranged in alphabetical order, and are affiliated with the Department of Electrical Engineering and Computer Science, Massachusetts Institute of Technology, Cambridge, MA, USA. \{asuman, sarathp, jwzhang, kaiqing\}@mit.edu. S.P. acknowledges support from MathWorks Engineering Fellowship. A.O and K.Z. were supported by MIT-DSTA grant 031017-00016. K.Z. also acknowledges  support  from Simons-Berkeley Research Fellowship.} \and Kaiqing Zhang} 
\date{} 
\begin{document}  
 
 \maketitle
 
 \begin{abstract}
 Minimax optimization has served as the backbone of many machine learning (ML) problems. Although the  {\it convergence behavior} of optimization algorithms has been extensively studied in the  minimax settings, their {\it generalization} guarantees in {stochastic minimax optimization problems}, i.e., how the solution trained on empirical data performs on unseen testing data, have been relatively underexplored.  A fundamental question remains elusive: {\it What is a good metric to study generalization of minimax  learners?} In this paper, we aim to answer this question by first showing that {\it primal risk}, a universal  metric  to study generalization in minimization problems, {which has also been adopted recently to study generalization in minimax ones,} fails in simple examples. 
 We thus propose a new metric to study generalization of minimax learners: the {\it primal gap}, {defined as the difference between the primal risk and its minimum over all models,} to circumvent the issues. 
 Next, we derive generalization {error}  bounds for the primal gap in nonconvex-concave settings. As byproducts of our analysis, we also solve two open questions: establishing generalization {error} bounds for primal risk and primal-dual risk{, another existing metric that is only well-defined when the global saddle-point exists,} in the strong sense, i.e., without strong concavity or assuming that the maximization and expectation can be interchanged, while either of these assumptions was needed in the literature. 
 Finally, we  leverage this new metric to compare the generalization behavior of two popular algorithms -- gradient descent-ascent (GDA) and gradient descent-max (GDMax) in stochastic minimax optimization.   
\end{abstract} 



\vspace{-5pt}

 \section{Introduction}\label{sec:intro}  

Stochastic minimax optimization, a classical and fundamental  problem in operations research and game theory, involves solving the following problem:
\begin{equation}
\min_{w\in W}\max_{\theta\in \Theta}E_{z\sim P_z}[f(w,\theta;z)]. \nonumber
\end{equation}
More recently, such minimax formulations have received increasing attention in machine learning, with significant applications in generative adversarial networks (GANs) \citep{goodfellow2014generative}, adversarial learning \citep{mkadry2017towards}, and reinforcement learning \citep{chen2016stochastic,dai2018sbeed}. Most existing works have focused  on the {\it optimization} aspect of the problem, i.e., studying the rates of convergence, robustness, and  optimality of  algorithms {for solving an empirical version of the problem where it approximates the expectation by an average over a sampled dataset}, in various minimax settings including convex-concave \citep{nemirovski2009robust, monteiro2010complexity}, nonconvex-concave \citep{lin2020gradient,rafique2018weakly}, and certain special nonconvex-nonconcave \citep{nouiehed2019solving,yang2020global} problems.  

However, the optimization aspect is not 
{sufficient to achieve} the success of  {stochastic}  minimax optimization in machine learning. In particular, as in classical supervised learning, which is usually studied as a {\it minimization} problem \citep{hastie2009elements}, the out-of-sample {\it generalization} performance is a key metric for evaluating the learned models. The study of generalization guarantees in minimax optimization (and related machine learning problems) has not received significant attention until recently  \citep{arora2017generalization,feizi2020understanding,yin2019rademacher,lei2021stability,farnia2021train,zhang2021generalization}. Specifically, existing works along this line have investigated two types of generalization guarantees:  {\it uniform}  convergence generalization bounds, and {\it algorithm-dependent} generalization bounds. The former is more general and irrespective  of the optimization algorithms being used, while the latter is usually  finer and   really explains what happens in practice, when optimization algorithms play an indispensable role.  In fact, the former might not  be able to explain generalization {performance}  {in deep learning, e.g., these bounds can increase with the training dataset size and easily become vacuous in practice  \citep{nagarajan2019uniform},} making the latter a more favorable metric for understanding the success of minimax optimization in machine learning.



Algorithm-dependent generalization for minimax optimization has been studied recently in \citep{farnia2021train,lei2021stability,xing2021algorithmic,yang2022differentially}. These papers build on the algorithmic stability framework developed in \citep{bousquet2002stability}{, which are further investigated in \citep{hardt2016train}}. In particular, these works have studied {\it primal risk} and/or (variants of)  {\it primal-dual risk} under different convexity and smoothness assumptions of the objective. Primal risk (see formal definition in \S\ref{sec:prelim}) is a natural extension of {the definition of risk} from minimization problems. Primal-dual risk, on the other hand, is defined similarly but based on the duality gap of the solution. It is know that it is {well-defined and can be optimized to zero only} when the {global} saddle-point exists (i.e., $\min$ and $\max$ can be interchanged). 
Based on these metrics, \citep{farnia2021train,lei2021stability} compare the performance of specific algorithms, e.g., gradient descent-ascent (GDA) and gradient descent-max (GDMax). We provide a more thorough literature review in   Section \ref{sec:related_works}. 

Although these metrics are natural extensions of generalization metrics from the {\it minimization} setting, they might not be the most suitable ones for studying generalization in stochastic  {\it minimax}  optimization, especially in the {\it nonconvex} settings that is pervasive in machine/deep learning applications, where the global saddle-point might not exist. In particular, we are interested in  the following fundamental question:
\begin{center}
	{\it What is a good metric to study generalization of minimax learners\footnote{Hereafter, we use {\it learner} and {\it learning algorithm} interchangeably.}?}  
\end{center}
 In this paper, we make an initial attempt  to answering this question, by identifying the inadequacies of {the existing metric,}  
 and proposing a new metric, the {\it primal gap} that overcomes these inadequacies. We then provide generalization error bounds for the newly proposed metric, and discuss how it captures information not included in the other existing metrics. We  summarize our contributions as follows.

 \vspace{-6pt} 
 
 \paragraph{Contributions.} First, we introduce an example through which we identify the inadequacies of {\it primal risk}, a well-studied metric for generalization in stochastic minimax optimization, in capturing the generalization behavior of {\it nonconvex-concave} minimax problems. Second, to address the issue, we propose a new metric -- the {\it primal gap}, which provably avoids the issue in the example, and derive its generalization error bounds. Next, we leverage this new metric to compare the generalization behavior of GDA and GDMax, two popular algorithms for minimax optimization and GAN training, and answer the question of {\it when does GDA generalize better than GDMax?} Moreover, we also address two open questions in the literature: establishing generalization error bounds  for primal risk and primal-dual risk  without strong concavity or assuming that the maximization and expectation can be interchanged, while at least one of these assumptions was needed in the literature  \citep{farnia2021train,lei2021stability,xing2021algorithmic,yang2022differentially}.  Finally, under certain assumptions of the max learner, our results also generalize to the nonconvex-nonconcave  setting.  

\begin{table}
\begin{center}
\begin{tabular}{ | P{5.2cm} | P{2.2cm}| P{1.3cm} | P{6.2cm} |} 
  \hline
  Reference & Assumption & Metric & Rate \\ 
  \hline
  \citep{farnia2021train} & NC-$\mu$-SC & PR & $L\sqrt{\kappa^2 + 1} \epsilon$ \\ 
  \hline
  \citep{lei2021stability}& NC-$\mu$-SC & PR & $L (1 + \kappa) \epsilon$ \\ 
  \hline
  \citep{lei2021stability}& $\mu$-SC-SC & PD & $\sqrt{2} L(1+ \kappa) \epsilon $\\  
  \hline
  This work (Theorem \ref{main3})& NC-C  & PG & $\sqrt{4L\ell C_p^2}\cdot \sqrt{\epsilon} + \epsilon L +4L_\theta^*C_e/\sqrt{n}$ \\ 
  \hline
  This work (Lemma \ref{main1})  & NC-C & PR & $\sqrt{4L\ell C_p^2}\cdot \sqrt{\epsilon} + \epsilon L$ \\ 
  \hline
  This work (Theorem \ref{thm:pd_cc})& C-C  & PD & $ \left( \sqrt{4L\ell C_p^2} +  \sqrt{4L\ell (C_p^w)^2} \right) \sqrt{\epsilon} + 2\epsilon L$ \\ 
  \hline
\end{tabular}
\vspace{4pt} 
\caption{Generalization bounds for $\epsilon$-stable algorithms. PR stands for Primal Risk, PD stands for the primal-dual risk and PG stands for the primal gap. NC-$\mu$-SC stands for nonconvex-$\mu$ strongly concave. $\mu$-SC-SC stands for $\mu$ strongly convex-$\mu$ strongly concave. NC-C stands for nonconvex-concave. C-C convex-concave. $L$ is the Lipschitz constant of the function $f$. $\kappa$ stands for the condition number $L/\mu$. The constants in the in the theorems have been defined in the appropriate sections. Note that there are other results in \citep{farnia2021train, lei2021stability} for cases where the expectation and $\max$ operator can be interchanged. This case is almost identical to the minimization problem and we thus  do not include it in the table.}
\end{center}
\end{table}

\subsection{Related work}
\label{sec:related_works} 

\paragraph{Algorithms for minimax optimization.} There is a vast literature on algorithms for minimax optimization. The most popular algorithms include the Extragradient (EG), the Optimistic Gradient Descent Ascent (OGDA) and the Gradient Descent Ascent  and their variants. The EG algorithms introduced in \citep{korpelevich1976extragradient}, has been analyzed in several papers including \citep{monteiro2010complexity, mokhtari2020unified, mokhtari2020convergence, golowich2020last} for (strongly)convex-(strongly)concave problems. Another popular algorithm is OGDA introduced in \citep{popov1980modification} and has been analyzed in several recent works including \citep{daskalakis2017training, hsieh2019convergence, golowich2020tight}. Once again, all these works focus on the (strongly)convex-(strongly)concave setting. Stochastic versions of these algorithms in similar settings have also been analyzed in several papers including \citep{nemirovski2009robust, hsieh2019convergence, fallah2020optimal}. A few papers including \citep{lin2020gradient, zhang2020single, huang2022efficiently, zhang2021complexity, ostrovskii2021efficient, kong2019complexity, zhang2020single} analyze gradient based algorithms in the nonconvex-(strongly)concave cases. Some papers including \citep{rafique2018weakly, yang2021faster, ostrovskii2021nonconvex, grimmer2020landscape} analyze special cases of nonconvex-nonconcave (like nonconvex-PL) for algorithms like GDA and its variants. However, in this paper, we are interested in the generalization performance of these algorithms. We summarize below the most related literature that studies the generalization behavior in minimax optimization problems. 

\paragraph{Algorithm-independent  generalization.} Specific to the machine learning problems of GAN and adversarial training, there have been several papers studying the uniform convergence generalization bounds. \citep{arora2017generalization} establish a uniform convergence generalization bound which depends on the number of discriminator parameters. \citep{wu2019generalization} connect the stability-based theory  to differential privacy (\citep{shalev2010learnability}) in GANs and numerically study the generalization behavior in GANs.  \citep{zhang2017discrimination,bai2018approximability} analyze  the Rademacher complexity of the players to show the  uniform convergence bounds for GANs. In the simpler Gaussian setting, \citep{feizi2020understanding} and \citep{schmidt2018adversarially} derive bounds for GANs and adversarial training, respectively. The uniform convergence bounds for adversarial training have also been studied under several statistical learning frameworks, e.g., PAC-Bayes \citep{farnia2018generalizable}, Rademacher complexity \citep{yin2019rademacher}, margin-based \citep{wei2019improved}, and VC analysis \citep{attias2019improved}. 
Recently, \citep{zhang2021generalization}  investigate the generalization of empirical  saddle point (ESP) solution in strongly-convex-concave problems using a stability-based approach. Note that these results are not specific to the optimization algorithms being used.

\paragraph{Algorithm-dependent generalization.} Algorithm specific generalization bounds  for minimax optimization have attracted increasing attention. Based on the algorithmic stability framework in  \citep{bousquet2002stability}, \citep{farnia2021train} have established generalization bounds of standard gradient descent-ascent and proximal point algorithms under the convex-concave  setting, and those of stochastic GDA and GDMax under the nonconvex-strongly concave setting. Concurrently, \citep{lei2021stability} derive  high-probability generalization  bounds for both convex-concave and weakly convex-weakly concave settings, with possibly nonsmooth objectives, also through  the lens of algorithmic stability.  Both works hinged on the metrics of {\it primal risk} and {\it primal-dual risk}. As shown in the present work, the former is not necessarily suitable to characterize the generalization behavior of minimax optimization, while the latter  {is known to be appropriate only when the saddle point exists, which is usually not the case in the nonconvex settings that are common in machine learning.} 
Following this line of work, \citep{xing2021algorithmic} provide generalization bounds specifically for  adversarial training, which is essentially the primal risk, also using the algorithmic stability framework. Recently,  \citep{yang2022differentially}  study  the generalization of stochastic GDA under differential privacy constraints.


\section{Preliminaries}\label{sec:prelim}


\subsection{Problem formulation}
In this paper, we consider the following (stochastic) minimax problem:
\begin{equation}
\label{problem:og_min_max}
\min_{w\in W}\max_{\theta\in \Theta}E_{z\sim P_z}f(w,\theta;z).
\end{equation}

We make the following assumption on the sets $W$ and $\Theta$ throughout the paper.
\begin{Assumption}
$W$ and $\Theta$ are convex, closed sets, and we further assume that $W$ is compact with $\|w\|\le M(W), \forall w\in W$. Here $M(W)$ is a constant dependent on the set $W$.
\end{Assumption}
  
Let $r(w,\theta)=E_{z\sim P_z}f(w,\theta;z)$.   
For a training dataset $S=\{z_1,\cdots,z_n\}$ with $n$ i.i.d. variables drawn from $P_z$, we define $r_S(w,\theta)=\frac{1}{n}\sum_{i=1}^nf(w,\theta;z_i)$. Next, we define the following quantity:

\begin{Definition}[Primal risk (empirical/population)]
\textbf{Primal population risk} is given by\footnote{Note that we slightly abuse the notation here by allowing $r$ and $r_S$ to have inputs that can be both $w$ and $(w,\theta)$. The distinction will be clear from context.} 
$$r(w)=\max_{\theta\in \Theta}E_{z\sim P_z}f(w,\theta;z),$$ 
and the \textbf{primal empirical risk} is given by: 
$$r_S(w)=\max_{\theta\in \Theta}\frac{1}{n}\sum_{i=1}^nf(w,\theta;z_i).$$  
\end{Definition} 

Throughout this paper, we use $(w_S,\theta_S)$ to denote a solution of the minimax  problem: $\min_{w\in W}\max_{\theta\in \Theta}r_S(w,\theta)$. Notice that $(w_S,\theta_S)$ need not be a global saddle-point of $r_S$. Furthermore,  we use $(w^*,\theta^*)$ to denote a solution of $\min_{w\in W}\max_{\theta\in \Theta}r(w,\theta)$. Once again, notice that $(w^*,\theta^*)$ may not be a saddle point of $r$.

The goal in Problem \eqref{problem:og_min_max}   is to minimize the primal population risk $r(w)$. Note that this function can be decomposed as 
\begin{align}
\label{eq:generalization_decomp}
r(w)=r_S(w)+(r(w)-r_S(w)).
\end{align}

In practice, we only have access to $r_S(w,\theta)$, and our goal is to design algorithms for minimizing $r(w)$ using dataset $S$.
Suppose $A$ is a learning algorithm initialized at $(w,\theta)=(0,0)$. We define $(w_S^A,\theta_S^A)$ to be the output of Algorithm $A$ using dataset $S$.

From Equation \eqref{eq:generalization_decomp}, it is clear if we ensure $r_S(w_S^A)$  as well as $r(w_S^A)-r_S(w_S^A)$ are small, this would guarantee that  $r(w_S^A)$ is small, which is the goal of Problem \eqref{problem:og_min_max}. Note that we can always ensure that $r_S(w_S^A)$ is small by using a good optimization Algorithm $A$ (if the problem is tractable). The main goal in the study of generalization is therefore to estimate the generalization error of the primal  risk, as defined below. 

\begin{Definition} 
\label{def:exp_risk}
The generalization error for the primal risk is defined as:
\begin{align}
\zeta_{gen}^P(A)=E_SE_A[r(w_S^A)-r_S(w_S^A)].
\end{align}
Here the expectations are taken over the randomness in the dataset $S$, as well as any randomness used in the Algorithm $A$.
\end{Definition}

This metric has been used to study generalization in stochastic minimization problems, i.e., when the maximization set $\Theta$ is a singleton, as well as several recent works in stochastic minimax optimization (see \citep{hardt2016train, farnia2021train,lei2021stability}). 

We are interested in the question of  when the solution to the empirical problem $w_S^A$ has good {\it generalization behavior}, i.e., when  $E[ r(w_S^A) - \min_{w\in W}  r(w)]$ is small -- $w_S^A$ is an approximate minimizer of the primal population risk $r$. In the next subsection, we briefly describe why the generalization error of the primal risk $\zeta_{gen}^P(A)$ {is a good measure to study the generalization behavior} in minimization problems. 

\subsubsection{$\zeta_{gen}^P(A)$ for minimization problems} 

Consider a stochastic  optimization problem of the form 
\begin{align}
\label{problem:minimization}
\min_{w\in W} \ E_{z\sim P_z} [g(w ;z)]. 
\end{align}
We define the (minimization) primal risk (population and empirical version respectively) as: $r(w) = E_{z\sim P_z} g(w ;z)$, and $r_S(w) =\frac{1}{n}\sum_{i=1}^n g(w;z_i)$. The generalization error $\zeta_{gen}^{P, min}(A)$ for the (minimization) primal risk is the same as in Definition \ref{def:exp_risk} using the (minimization) primal risk.

Assume that the generalization error of the primal risk for an Algorithm $A$ is small, say $\zeta_{gen}^{P, min}(A) \leq \epsilon$. This implies that (from Definition \ref{def:exp_risk}): $
E[r(w_S^A)] \leq E[r_S(w_S^A)] + \epsilon$. 
Note that the expectation is with respect to $S$ and $A$. Now, in order to show that $w_S^A$ has  good generalization behavior, we first see that:
\begin{align}
E[ r(w_S^A) - \min_{w\in W}  r(w)] \leq E[r_S(w_S^A)] + \epsilon - \min_{w\in W}  r(w). 
\end{align}
However, note that for minimization problems, since $E[r_S] = r$, we have that\footnote{Here we use the fact that $E_z[\min_x f(x,z)] \leq \min_x E_z[f(x,z)]$.}   
$\min_{w \in W} r(w) \geq E[\min_{w \in W} r_S(w)]$, 
which gives us:
\begin{align*}
E[ r(w_S^A) - \min_{w\in W}  r(w)] \leq  E[r_S(w_S^A)] + \epsilon - E[\min_{w \in W} r_S(w)] =  E[r_S(w_S^A) - \min_{w \in W} r_S(w)] + \epsilon 
= \epsilon.
\end{align*} 

Therefore, for minimization problems, if the generalization error for primal risk is small, the solution to the empirical risk minimization problem has good generalization behavior. Next, we highlight some results in the literature which discusses  generalization error bounds of the primal risk. These results depend on the concept of algorithmic stability we use later. 

\subsection{Stability of algorithms}

Stability analysis is a powerful tool to analyze the generalization behavior of algorithms (see \citep{bousquet2002stability}). In this section, we will review some definitions and theoretical results about stability  bounds existing in the current literature. More specifically, in this paper, we adopt the following definition of stability:

\begin{Definition}[$\epsilon$-stable Algorithm] 
Suppose that $A$ is a randomized  algorithm for solving the stochastic  minimax problem. We define $(w_S^A,\theta_S^A)$ as the output of Algorithm $A$ using dataset $S$. We say $S$ and $S'$ are neighboring dataset if they defer only in one sample.
An Algorithm $A$ is defined to be $\epsilon$-stable if $E_A\|w_S^A-w_{S'}^A\|\le \epsilon$ and $E_A\|\theta_S^A-\theta_{S'}^A\|\le \epsilon$ for any neighboring datasets $S$ and $S'$.
\end{Definition}
\citep{hardt2016train} gives the following basic result for the generalization error of $r_S(w)$.  

\begin{Theorem}[\citep{hardt2016train}]\label{hardt}
Consider the (stochastic) minimization problem  defined in \ref{problem:minimization}. Suppose $g(\cdot;z)$ is ${\bar{L}}$-Lipschitz continuous, i.e., {$\forall z$}, it holds that $\| g(w_1; z) - g(w_2; z) \| \leq \bar{L} \|w_1 - w_2 \|, \forall w_1, w_2 \in W$. 
Then, for an $\epsilon$-stable Algorithm $A$, we have 
$|E_SE_A[r(w_S^A)-r_S(w_S^A)]|\le \bar{L}\epsilon$.  
\end{Theorem}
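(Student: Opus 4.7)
The plan is to use the classical ghost-sample / coupling argument that underlies all algorithmic-stability generalization bounds, exploiting symmetry between the training sample and an independent copy. I would introduce an independent "ghost" sample $S'=(z_1',\dots,z_n')$ drawn i.i.d.\ from $P_z$ and, for each index $i$, the modified dataset $S^{(i)}=(z_1,\dots,z_{i-1},z_i',z_{i+1},\dots,z_n)$, which by construction differs from $S$ in exactly one entry and is therefore a neighboring dataset in the sense of the stability definition.

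The key identity to establish is a rewriting of the population risk by renaming. Since $z_i'$ is independent of $S$ and has the same distribution as a fresh draw $z$, one has $E_S E_A[r(w_S^A)] = \frac{1}{n}\sum_{i=1}^n E_{S,S'}E_A[g(w_S^A;z_i')]$. Then, because the joint law of $(S,S')$ is invariant under swapping $z_i \leftrightarrow z_i'$ (and this swap turns $S$ into $S^{(i)}$ and $z_i'$ into $z_i$), we get
\begin{equation*}
E_S E_A[r(w_S^A)] \;=\; \frac{1}{n}\sum_{i=1}^n E_{S,S'}E_A\bigl[g(w_{S^{(i)}}^A;z_i)\bigr].
\end{equation*}
At the same time, $E_S E_A[r_S(w_S^A)] = \frac{1}{n}\sum_{i=1}^n E_S E_A[g(w_S^A;z_i)]$, so subtracting yields
\begin{equation*}
E_S E_A[r(w_S^A)-r_S(w_S^A)] \;=\; \frac{1}{n}\sum_{i=1}^n E_{S,S'}E_A\bigl[g(w_{S^{(i)}}^A;z_i)-g(w_S^A;z_i)\bigr].
\end{equation*}

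The remaining step is routine: apply the $\bar L$-Lipschitz hypothesis pointwise in $z_i$ to bound each summand in absolute value by $\bar L\,\|w_{S^{(i)}}^A-w_S^A\|$, then use $\epsilon$-stability (which applies precisely because $S$ and $S^{(i)}$ are neighbors) to conclude $E_A\|w_{S^{(i)}}^A-w_S^A\|\le \epsilon$, giving the bound $\bar L\epsilon$ uniformly in $i$. Averaging over $i$ and taking absolute values outside via Jensen's inequality yields $|E_S E_A[r(w_S^A)-r_S(w_S^A)]|\le \bar L\epsilon$.

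I do not expect any serious obstacle: the whole argument is symmetric renaming plus a one-line Lipschitz bound. The only point that requires care is making the coupling and the swap rigorous when $A$ is randomized, namely fixing a single source of internal randomness for $A$ that is independent of both $S$ and $S'$ so that $w_S^A$ and $w_{S^{(i)}}^A$ are well-defined on the same probability space and the swap $z_i\leftrightarrow z_i'$ genuinely leaves the joint distribution invariant. Once this measure-theoretic bookkeeping is in place, the rest is immediate.
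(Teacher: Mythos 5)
Your argument is correct. Note that the paper does not actually prove this statement --- it imports it directly from \citep{hardt2016train} --- and your reconstruction is precisely the standard ghost-sample/symmetrization proof from that reference (their Theorem 2.2), with the one additional step this setting requires: using the $\bar L$-Lipschitz hypothesis to convert the paper's parameter-space notion of $\epsilon$-stability ($E_A\|w_S^A-w_{S^{(i)}}^A\|\le\epsilon$) into uniform stability of the loss values. Your attention to coupling the algorithm's internal randomness across $S$ and $S^{(i)}$ is the right point of care; nothing is missing.
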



\subsubsection{{When is primal risk a valid  metric for minimax learners?}}

According to the above discussions for minimization problems, we know that the primal risk is a valid  metric to study generalization behavior in these problems, and furthermore, the generalization error bound of the primal risk can be estimated in terms of algorithmic stability. 
However, Theorem \ref{hardt} cannot be directly extended to analyze the generalization behavior of minimax learners because we have an additional maximization step before taking expectation.

A natural question emerges: Under what conditions does primal risk serve as a valid  metric to study generalization behavior of minimax problems. 
One sufficient condition is when the maximization step and expectation can be interchanged, i.e., when
$$\max_{\theta\in \Theta}E_{z\sim P_z}f(w,\theta;z)=E_{z\sim P_z}[\max_{\theta\in \Theta}f(w,\theta;z)]$$
for any distribution $P_z$. 
Letting $f_{\max}(w;z)=\max_{\theta\in \Theta}f(w,\theta;z)$,  we  further have
\begin{eqnarray*}
r(w)=\max_{\theta\in \Theta}E_{z\sim P_z}f(w,\theta;z)=E_{z\sim P_z}[\max_{\theta\in \Theta}f(w,\theta;z)]=E_{z\sim P_z}f_{\max}(w;z).
\end{eqnarray*}
Therefore, the minimax problem in \eqref{problem:og_min_max} is equivalent to the (stochastic) minimization problem with loss function $f_{\max}(w;z)$. Moreover, letting $P(S)$ be the uniform distribution over the dataset $S=\{z_1,\cdots,z_n\}$, we have 
\begin{eqnarray*}
r_S(w)=\max_{\theta\in\Theta}E_{z\sim P(S)} [f(w,\theta;z)]E_{z\sim P(S)} [\max_{\theta\in \Theta} f(w,\theta;z)]=\frac{1}{n}\sum_{i=1}^nf_{\max}(w;z_i).
\end{eqnarray*}
Therefore, $r_S(w)$ is just the empirical primal risk corresponding to the minimization problem with loss function $f_{\max}(w;z)$.
Hence, Theorem \ref{hardt} can be directly used to minimax problems where the maximization and expectation can be interchanged.

\begin{Theorem}\label{exc}
Suppose that $f(w,\theta;z)$ is $\bar{L}$-Lipschitz continuous with respect to $w$, i.e., $|f(w_1,\theta;z)-f(w_2,\theta;z)|\le \bar{L}\|w_1-w_2\|$ for any $w_1,w_2\in W, \theta \in \Theta$ and $z$. If an Algorithm $A$ is $\epsilon$-stable, we have $$E_SE_A[r(w_S^A)-r_S(w_S^A)]\le \bar{L}\epsilon.$$
\end{Theorem}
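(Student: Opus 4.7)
The plan is to reduce Theorem \ref{exc} directly to Theorem \ref{hardt} by treating the equivalent minimization problem with loss $f_{\max}(w;z)=\max_{\theta\in\Theta} f(w,\theta;z)$ that the preceding discussion has already set up. Under the interchange assumption, we have $r(w) = E_{z\sim P_z} f_{\max}(w;z)$ and $r_S(w) = \frac{1}{n}\sum_{i=1}^n f_{\max}(w;z_i)$, so the problem is formally a stochastic minimization problem in $w$ alone with per-sample loss $f_{\max}(\cdot;z)$. Thus if I can (i) verify that $f_{\max}(\cdot;z)$ inherits the $\bar{L}$-Lipschitz constant of $f(\cdot,\theta;z)$, and (ii) ensure the $\epsilon$-stability assumption for $A$ carries over to the $w$-coordinate needed by Theorem \ref{hardt}, then I can just invoke that theorem.

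For (i), the key step is the standard observation that a pointwise supremum of uniformly Lipschitz functions is Lipschitz with the same constant. Concretely, for any $w_1,w_2\in W$ and any $z$, pick $\theta_1 \in \arg\max_\theta f(w_1,\theta;z)$ (or an approximate maximizer if the sup is not attained and take a limit). Then
\begin{align*}
f_{\max}(w_1;z) - f_{\max}(w_2;z) &\le f(w_1,\theta_1;z) - f(w_2,\theta_1;z) \le \bar{L}\|w_1-w_2\|,
\end{align*}
and swapping the roles of $w_1,w_2$ gives $|f_{\max}(w_1;z)-f_{\max}(w_2;z)|\le \bar{L}\|w_1-w_2\|$. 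So $f_{\max}(\cdot;z)$ is $\bar{L}$-Lipschitz in $w$ for every fixed $z$.

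For (ii), the $\epsilon$-stability of $A$ gives $E_A\|w_S^A - w_{S'}^A\|\le \epsilon$ on any neighboring datasets $S,S'$, which is precisely the stability hypothesis needed for the reduced minimization problem (the $\theta$-stability bound plays no role here since $r(w)-r_S(w)$ depends only on $w_S^A$). Applying Theorem \ref{hardt} to the minimization problem with loss $g(w;z):=f_{\max}(w;z)$, Lipschitz constant $\bar{L}$, and the same stable algorithm $A$ outputting $w_S^A$, yields $|E_S E_A[r(w_S^A)-r_S(w_S^A)]|\le \bar{L}\epsilon$, which is exactly the claim.

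I expect the only subtlety, rather than an obstacle, to be a measurability/attainment issue in the pointwise supremum Lipschitz argument when $\Theta$ is unbounded or the sup is not attained; this is handled by choosing $\theta_1$ to be an $\eta$-approximate maximizer and letting $\eta \downarrow 0$, so the bound goes through without any extra assumptions beyond those already imposed on $f$ and $\Theta$.
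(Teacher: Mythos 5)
Your proposal is correct and follows essentially the same route as the paper: reduce to Theorem \ref{hardt} for the equivalent minimization problem with loss $f_{\max}(w;z)=\max_{\theta\in\Theta}f(w,\theta;z)$ (valid under the interchange of maximization and expectation assumed in this subsection), and verify that $f_{\max}(\cdot;z)$ inherits the $\bar{L}$-Lipschitz constant via the standard supremum-of-Lipschitz-functions argument. Your additional remarks --- that only the $w$-component of stability is needed and that non-attainment of the maximum can be handled with approximate maximizers --- are harmless refinements of the same proof.
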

\begin{proof}
From the previous analysis along with Theorem \ref{hardt}, it suffices to show that $f_{\max}(\cdot;z)$ is $\bar{L}$-Lipschitz continuous.
In fact, we have
\begin{align*} 
f_{\max}(w_1;z)-f_{\max}(w_2;z)& =f(w_1,\theta(w_1);z)-f(w_2,\theta(w_2);z)\\ 
&\le f(w_1,\theta(w_1);z)-f(w_2,\theta(w_1);z)\le  \bar{L}\|w_1-w_2\|, 
\end{align*}
where $\theta(w)\in \arg\max_{\theta\in \Theta}f(w,\theta;z)$, the first inequality is because of the definition of $\theta(w)$ and the second inequality is because of the Lipschitz continuity of $f$ with respect to $w$.
Using the same argument, we can prove
$$f_{\max}(w_2; z)-f_{\max}(w_1;z)\le \bar{L}\|w_1-w_2\|.$$
Therefore, we prove the $\bar{L}$-Lipschitz continuity of $f_{\max}(\cdot;z)$ and hence finish the proof.
\end{proof}

By the above discussion, we know that if maximization and expectation can be interchanged, the minimax problem can be reduced to a minimization problem and hence the primal risk is a valid metric for studying the generalization behavior of minimax learners and the generalization error can be estimated using the same method as for minimization problems.
In practice, the adversarial-training problems can be such an example of minimax problems. 

\begin{Example}[Adversarial-training]\label{adv}
We consider the adversarial training problem  \citep{mkadry2017towards}.  Suppose we have loss function $g(w;z)$ for a supervised learning problem. Here $z$ denotes the training sample and $w$ denotes the model parameter. 
Due to the noise in the data or due to an adversarial attack, for any sample $z$, we consider an uncertainty  set $B(z,\epsilon_0)$ around it. The goal is to train a model that is robust to the data with possible perturbation in the uncertainty set. 
Let $\theta_z$ be some adversarial sample from the set $B(z,\epsilon_0)$ and let $\theta$ be an infinite dimensional vector (functional) with the component $\theta_z$  corresponding to the sample  $z$. 
Define the function $\iota_{B}(v)$ to be the indicator function of the set $B$, i.e., $\iota_B(v)=0$ if $v\in B$ and $\iota_B(v)=\infty$ otherwise.
The goal of adversarial training is to solve the following minimax problem:
\begin{equation}
\min_{w}\max_{\theta}~~E_{z\sim P_z}f(w,\theta;z),
\end{equation}
where $f(w,\theta;z)=g(w;\theta_z)+\iota_{B(z,\epsilon_0)}(\theta_z)$. 
For any distribution $P_z$  over $z$'s, we have
\begin{align*}
\max_{\theta}~E_{z\sim P_z}f(w,\theta;z)&=\max_{\theta}~E_{z\sim P_z}[g(w;\theta_z)+\iota_{B(z,\epsilon_0)}(\theta_z)]=E_{z\sim P_z}[\max_{\theta_z}~(g(w;\theta_z)+\iota_{B(z,\epsilon_0)}(\theta_z))]\\
&=E_{z\sim P_z}[\max_{\theta}~f(w,\theta;z)], 
\end{align*} 
where the second and the third equalities use the fact that $\theta_{z'}$ does not contribute to $f(w,\theta;z)$ if $z\ne z'$.
Therefore, the expectation and maximization can be interchanged in adversarial training problems. This implies that the results of Theorem \ref{exc} can be applied and therefore primal risk is a {valid} metric to study the generalization behavior in such problems.
\end{Example}

Unfortunately, maximization and expectation are not necessarily interchangeable for many minimax problems. 
If they are not interchangeable,  it is unclear how to estimate the generalization error bound of the primal risk. In fact, whether primal risk is still a good metric for studying generalization behavior in such problems remains elusive.

In the next section, we will see how to estimate generalization error bound of primal risk for nonconvex-concave and even nonconvex-nonconcave problems. To the best of our knowledge, this is the first result which provides generalization error bounds for the primal risk without assuming the interchangeability 
 or strong concavity of the inner maximization problems (see e.g.,  \cite{lei2021stability}). 
Furthermore, we will see that even in some simple minimax problems, the generalization error bound of  the primal risk can fail to capture the generalization behavior of minimax learners. We then propose a {new metric   and use  its generalization error to properly  characterize the generalization behavior of minimax learners.}     

 \section{Primal Gap: A New Metric to Study  Generalization}\label{sec:metric} 
  
The key idea behind the success of $\zeta_{gen}^P(A)$ as {a way to characterize}   
to  study generalization for minimization learners is   that $E[r_S(w)] = r(w)$ for any $w$,  
which is no longer the case in the minimax case. In fact,  we first show via example that a good bound for the generalization error of primal risk does not imply good generalization behavior for minimax learners. 

\subsection{{Primal risk can fail for  minimax learners}} 
\label{subsec:example}

We provide an example where the generalization error of the primal risk is small, but the final solution to the empirical problem has poor generalization behavior. In this example, the minimizer of $r_S(w)$ is suboptimal for $r(w)$ with high probability, and $E_S[r(w_S)-r(w^*)]$ is large. 

\begin{Example}[Analytical example] 
\label{example:main_ex}
Let $y\sim N(0,1/\sqrt{n})$ be a Gaussian random variable in $\mathbb{R}$. Define the truncated Gaussian variable $z\sim P_z$ as follows: $z=y$ if $|y|<\lambda \log n/\sqrt{n}$ and $z=\lambda \log n/\sqrt{n}$ if $y\ge \lambda \log n/\sqrt{n}$.
Let $f(w,\theta;z)=\frac{1}{2}w^2- \left( \frac{1}{2n^2}\theta^2-z\theta+1 \right) w$, where $w\in W=[0,1]$, $\theta\in \Theta = [-\lambda n,\lambda {n}]$ with a sufficiently large $\lambda>0$, and $z_i\sim P_z$ be i.i.d truncated Gaussian variables.
Then, we have $r_S(w,\theta)=\frac{1}{2} w^2- \left( \frac{1}{2n^2}\theta^2-\frac{\sum_{i=1}^nz_i}{n}\theta+1 \right) w$,  
and
\begin{equation}\label{rw}
r(w,\theta)=\frac{1}{2} w^2- \left( \frac{1}{2n^2}\theta^2+1 \right) w.
\end{equation}
Note that this leads to the primal population risk function: $r(w) = \frac{1}{2}w^2 - w$. 

It is not hard to see that  we always have $r_S(w)\ge r(w)$. Note that this means $\zeta_{gen}^P(A) \leq 0$, and thus we have {a small generalization error for primal risk}.  
However, we can prove that for large enough $\lambda$,  
\begin{equation}\label{pop}
E_S[r(w_S)-r(w^*)]\ge 0.02.
\end{equation}
This means that $w_S$ has a constant error compared to $w^*$  in terms of the population risk, despite that its generalization error is small.  
This phenomenon is due to that $\min_{w\in W}r_S(w)-\min_{w\in W}r(w)>c$ for some $c>0$, and hence minimizing $r_S(w)$ is very different from minimizing $r(w)$.  
\end{Example}

This example shows that the generalization error of primal risk is not a good measure to study generalization in minimax learners. The main drawback is that $\min_{w}r_S(w)$ and $\min_wr(w)$ can be very different. We now introduce another more practical example, from GAN training, to further illustrate this point.

\begin{Example}[GAN-training example]\label{example:gan}
Suppose that we have a real distribution $P_r$ in $\mathbb{R}^d$ which can be represented as $G^*(y)$ with $y\in \mathbb{R}^k$ drawn from  a standard Gaussian distribution $P_0$ and a mapping $G^*:\mathbb{R}^k\rightarrow \mathbb{R}^d$. For an arbitrary generator $G$, we define $P_G$ to be the distribution of the random variable  $G(y)$ with $y\sim P_0$. So our goal is to find a generator $G$ such that $P_G=P_r$. GAN is a popular tool for solving this problem.
Consider a GAN with generator $G$, parametrized by $w$ and discriminator $D$ parametrized by $\theta$.  The goal of GAN training is to find a pair of a generator $G$ and a discriminator $D$ that solves the minimax problem:
\begin{align}
&\min_{G}\max_{D}~~~\{E_{x\sim P_r}\phi(D(x))+E_{x\sim P_G}[\phi(1-D(x))]\} \nonumber \\
&\qquad \qquad \qquad =\min_{w}\max_{\theta}~~~\{E_{x\sim P_r}\phi(D_{\theta}(x))+E_{y\sim P_0}[\phi(1-D_{\theta}(G_w(y)))]\}, \nonumber
\end{align}
where $\phi:\mathbb{R}\rightarrow \mathbb{R}$ is concave, monotonically increasing and $\phi(u)=-\infty$ for $u\le 0$.  To connect to the minimax formulation in \eqref{problem:og_min_max}, we note that $z=(x,y)$, and $P_z=P_r\times P_0$. Also, we denote
$$
r(w, \theta) = E_{x\sim P_r}\phi(D_{\theta}(x))+E_{y\sim P_0}[\phi(1-D_\theta(G_w(y)))]
$$ 
to be the population risk. We now give the empirical version of this problem. Let $S_1=\{x_1,\cdots,x_n\}$ and $S_2=\{y_1,\cdots,y_n\}$. Let $S=S_1\cup S_2$ and $r_S(w,\theta)=\frac{1}{n} \left( \sum_{i=1}^n \phi(D_\theta(x_i)+\phi(1-D_\theta(G_w(y_i))) \right)$. 
We assume that $P_{G_w}$ has the same support set as $P_r$. Moreover, we assume that $\|w-w^*\|\le 0.5$  and $G_w(y)$ is $1$-Lipschitz w.r.t. $w$ for any $y$. Here $w^*$ denotes the parameter for which $G_{w^*} = G^*$. Then, combining  Theorem B.1 in \citep{arora2017generalization} and the Lipschitz continuity of $G_w(y)$ as well as $\|w-w^*\|\le 0.5$, we have that the distance between the sets $S_1$ and $\{ G_w(y_1), G_w(y_2), \cdots, G_w(y_n) \}$ will be larger than $0.6$ with probability greater than $1 - {O}(n^2/e^{d})$.  
Now, if $n$ is only of polynomial size of $d$, the optimal discriminator for disjoint datasets outputs $1$ on one dataset, and $0$ on the other. On the other hand, when $w = w^*$, the optimal discriminator for the population problem outputs $1/2$ for any sample it receives. Combining these two results, we have: 
$$E_S[\min_{w\in W}r_S(w)-\min_{w\in W}r(w)]\ge (1-\delta)\left(2\phi(1)-2\phi(1/2)\right)$$
which is bounded away from $0$.
\end{Example}

 Note that in this example, we also have $E_S[\min_wr_S(w)-\min_wr(w)]>0$, implying that {using}  
$\zeta_{gen}^P(A)$ might not be a good {way to characterize the generalization behavior in GAN training}. To address this issue, we next define a new metric, the primal gap, {and use its generalization error} to study the generalization of minimax learners.   

\subsection{Primal gap {to the rescue}} 

The population and empirical versions of the primal gap are defined as follows:
\begin{Definition}[Primal gap (empirical/population)]
The  \textbf{population primal gap} is defined as $$\Delta(w)=r(w)-\min_{w\in W}r(w),$$  
and the \textbf{empirical primal gap} is defined as $$\Delta_S(w)=r_S(w)-\min_{w\in W}r_S(w).$$
\end{Definition}
Notice that these two primal gaps can always take $0$ at $w_S\in\arg\min_{w\in W}r_S(w)$ and $w^*\in\arg\min_{w\in W}r(w)$ respectively  even if the saddle point of problem \eqref{sec:prelim} does not exist.
Next, we define the expected generalization error of this primal gap as follows:
\begin{Definition}
\label{def:exp_pg}
The generalization error for the primal gap is defined as  $$\zeta_{gen}^{PG}(A)=E_SE_A[\Delta(w_S^A)-\Delta_S(w_S^A)].$$
\end{Definition}

\begin{Remark}\label{remark:interchange}
For Example \ref{adv}, since the maximization and expectation can be interchanged, the minimax problem is equivalent to a minimization problem. Then we have
\begin{align*}
E_S[\min_wr_S(w)]&= E_S[\min_w\max_{\theta}E_{z\sim P_z(S)}f(w,\theta;z)]=E_S[\min_wE_{z\sim P_z(S)}[\max_{\theta}f(w,\theta;z)]]\\
&=E_S[\min_wE_{z\sim P_z(S)}[f_{\max}(w;z)]]\le E_S[E_{z\sim P_z(S)}[f_{\max}(w;z)]] 
\end{align*}
for any $w$. 
Therefore, we have $E_S[\min_wr_S(w)]\le \min_wr(w)$. Consequently, we have $\zeta_{gen}^P\ge \zeta_{gen}^{PG},$ 
which means that good generalization bounds for  the primal risk implies good generalization bounds for the primal gap. Therefore, if the maximization and expectation are interchangeable,  primal risk is sufficient to study the generalization behavior because the generalization error of the primal risk is an upper bound of the generalization error of the primal gap in this case.
\end{Remark}

Now we provide bounds on $\zeta_{gen}^{PG}(A)$ for stable algorithms $A$, and show that in Example \ref{example:main_ex}, $\zeta_{gen}^{PG}(A)$ cannot be small (unlike $\zeta_{gen}^{P}(A)$).

\subsection{Relationship between  generalization and stability}
\label{sec:stable_algo}

We provide bounds for the generalization error of the primal gap (Definition \ref{def:exp_pg}) for $\epsilon$-stable Algorithm $A$. We will focus on the nonconvex-concave case where the following assumptions are made throughout the rest of the paper. 
  
\begin{Assumption}
\label{ass:noncvx_conc}
The function $f$ in Problem \eqref{problem:og_min_max} is nonconvex-concave, i.e., $f(w, \cdot; z)$ is a concave function for all $w\in W$ and for all $z$.
\end{Assumption}
Next we define the notion of {\it capacity}, which will play a key role in the bounds we derive for $\zeta_{gen}^{PG}(A)$.
\begin{Definition}[Capacity]
\label{def:capacity}
For any $w\in W$ and any constraint set $\Theta$, we define 
\begin{align}
\Theta(w)=\arg\max_{\theta\in \Theta}r(w,\theta) \qquad \Theta_S(w)=\arg\max_{\theta\in \Theta}r_S(w,\theta). \nonumber
\end{align} 
We define the capacities $C_p$ and $C_e$   as:
\begin{align}
C_p(\Theta) = \max_{w\in W}\mathrm{dist}(0,\Theta(w)),\qquad C_e(\Theta) = \max_{S}\max_{w\in W}\mathrm{dist}(0,\Theta_S(w)), \nonumber
\end{align}
{where $\mathrm{dist}(p,\mathcal{S})$ denotes the distance between a point $p$ to a set $\mathcal{S}$ in Euclidean space, i.e., $$\mathrm{dist}(p,\mathcal{S}):=\inf_{q\in\mathcal{S}}\|p-q\|_2.$$   
For the specific  constraint set in Problem \eqref{problem:og_min_max}, we {succinctly} denote the capacities as $C_p$ and $C_e$, respectively.} 
\end{Definition}

The norm of the  {model} parameter {(its distance to $0$)} 
is usually viewed as the metric for the complexity of the model. In fact, the norm of the optimal solution determines the Rademacher complexity of the function class in statistical learning theory \citep{vapnik1999overview}. Moreover, in deep learning, minimum-norm solution of overparameterized neural networks is well-known to enjoy better generalization performance \citep{zhang2021understanding}. Hence, we view the capacity constant  $C_e$ and $C_p$ as natural metrics to capture the model complexity  for the best response of the max learner{, i.e., the power of the maximizer},  when using the empirical data set and population data  respectively. 


Now, we are ready to discuss the relationship between the stability bound and the generalization error of algorithms in nonconvex-concave minimax problems. All proofs have been deferred to the appendix. We make the following assumptions throughout the paper:
\begin{Assumption}\label{Lipschitz-smooth}
The gradient of $f$ is $\ell$-Lipschitz-continuous for all $z$, i.e., for all $z$
\begin{align}
\| \nabla f(w_1, \theta_1; z) - \nabla f(w_2, \theta_2; z) \| \leq \ell (\| w_1 - w_2 \| + \|\theta_1 - \theta_2 \|) , ~~~\forall w_1, w_2 \in W,~~~\forall \theta_1, \theta_2 \in \Theta.  \nonumber 
\end{align}
Moreover, fixing $w\in W$, the partial gradient $\nabla_\theta f(w,\cdot;z)$ is $\ell_{\theta\theta}$-Lipschitz continuous with respect to $\theta$ for all $z$, i.e., $\| \nabla_{\theta} f(w, \theta_1; z) - \nabla_{\theta} f(w, \theta_2; z) \| \leq \ell_{\theta \theta} \|\theta_1 - \theta_2 \| , \forall w \in W, \ ~~~\forall \theta_1, \theta_2 \in \Theta$. 
\end{Assumption}

\begin{Assumption}\label{Lipschitz-continuous}
For any $\Theta_1\subseteq \Theta$, 
we assume that $f$ is $L(\Theta_1)$-Lipschitz-continuous 
with respect to $w\in W,\theta\in \Theta_1$ for all $z$, i.e., $\|f(w_1, \theta_1; z) - f(w_2, \theta_2; z) \| \leq L(\Theta_1)(\| w_1 - w_2 \| + \|\theta_1 - \theta_2 \|) , \quad \forall w_1, w_2 \in W, \  \forall \theta_1, \theta_2 \in \Theta_1$, 
 and the gradient $\nabla f(w,\theta;z)$ is uniformly bounded as $\|\nabla_{w,\theta}f(w,\theta;z)\|\le L(\Theta_1)$ for all $z$ and $w\in W, \theta\in \Theta_1$. Moreover, $f(w^*,\cdot;z)$ is $L_\theta^*$-Lipschitz continuous with respect to $\theta$ where $w^*\in \arg\min_{w\in W}r(w)$. We also define $L :=L(B(0,2C_p+1)\cap \Theta)$ and $L_r := L(B(0,r)\cap \Theta)$, where $B(v,r)$ denotes the $l_2$-ball with radius $r$ centered at $v$. 
\end{Assumption}


%

Note that we can decompose the generalization error of the primal gap as follows:
\begin{align}
\zeta_{gen}^{PG}(A) &:= E_SE_A[\Delta(w_S^A)-\Delta_S(w_S^A)] = E_SE_A[r(w_S^A) - r_S(w_S^A)] + E_S[\min_{w\in W}r_S(w)-\min_{w\in W}r(w)] \nonumber \\
&= \zeta_{gen}^{P}(A) + E_S\big[\min_{w\in W}r_S(w)-\min_{w\in W}r(w)\big]. \nonumber
\end{align}

Next, we provide a bound on the generalization error for the primal risk $\zeta_{gen}^{P}(A)$. To the best of our knowledge, this is the first  bound for $\zeta_{gen}^{P}(A)$ in the nonconvex-concave (without strong concavity) setting. 

\begin{Lemma}\label{main1}
The generalization error of the primal risk of an $\epsilon$-stable Algorithm $A$ for a minimax problem with concave maximization problem can be bounded by $\zeta_{gen}^P(A) \leq \sqrt{4L\ell C_p^2}\cdot \sqrt{\epsilon} + \epsilon L$. 
\end{Lemma}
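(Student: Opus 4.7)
The $L\epsilon$ piece of the bound is the familiar Lipschitz-times-stability contribution, while the $\sqrt{\epsilon}$ piece is the price of having only concavity (not strong concavity) in the inner problem: without strong concavity, $\arg\max_\theta r(w,\theta)$ may fail to be single-valued or Lipschitz in $w$, so the Hardt-style swap argument cannot be applied to $r(w)=\max_\theta E_z f(w,\theta;z)$ directly. My plan is to introduce a small quadratic regularization in $\theta$ of tunable strength $\lambda$, run the stability argument on the resulting regularized primal risk (whose inner argmax is Lipschitz), and balance the regularization bias against the stability rate in $\lambda$.

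Concretely, set $\tilde f(w,\theta;z):=f(w,\theta;z)-\tfrac{\lambda}{2}\|\theta\|^2$, with regularized primal risks $\tilde r(w):=\max_{\theta\in\Theta}E_z\tilde f(w,\theta;z)$ and $\tilde r_S(w):=\max_{\theta\in\Theta}\tfrac{1}{n}\sum_i\tilde f(w,\theta;z_i)$. By Definition \ref{def:capacity}, there is $\theta^*(w)\in\Theta(w)$ with $\|\theta^*(w)\|\le C_p$, so $\tilde r(w)\ge r(w,\theta^*(w))-\tfrac{\lambda}{2}\|\theta^*(w)\|^2\ge r(w)-\tfrac{\lambda}{2}C_p^2$, while $\tilde r_S(w)\le r_S(w)$ trivially. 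These two inequalities yield
\[
\zeta_{gen}^{P}(A) \;=\; E[r(w_S^A)-r_S(w_S^A)] \;\le\; E[\tilde r(w_S^A)-\tilde r_S(w_S^A)] + \tfrac{\lambda}{2}C_p^2.
\]
A short optimality comparison of $\tilde\theta^*(w):=\arg\max_{\theta\in\Theta}E_z\tilde f(w,\theta;z)$ against $\theta^*(w)$, using only concavity of $f(w,\cdot;z)$, forces $\|\tilde\theta^*(w)\|\le C_p$, keeping $\tilde\theta^*(w)$ inside the ball on which Assumption \ref{Lipschitz-continuous} provides Lipschitz constant $L$. Moreover, the standard smoothness-plus-strong-concavity argument applied to the first-order conditions at $w_1,w_2$ gives $\|\tilde\theta^*(w_1)-\tilde\theta^*(w_2)\|\le(\ell/\lambda)\|w_1-w_2\|$.

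With these ingredients the Hardt-style swap runs on the regularized problem: using $\tilde r(w_S^A)=E_z\tilde f(w_S^A,\tilde\theta^*(w_S^A);z)$ and the lower bound $\tilde r_S(w_S^A)\ge\tfrac{1}{n}\sum_i\tilde f(w_S^A,\tilde\theta^*(w_S^A);z_i)$, the exchangeability of $z_i$ with an i.i.d.\ copy $z'$ gives
\[
E[\tilde r(w_S^A)-\tilde r_S(w_S^A)] \;\le\; \tfrac{1}{n}\sum_i E_{S,z'}\bigl[\tilde f(w_{S^{(i)}}^A,\tilde\theta^*(w_{S^{(i)}}^A);z_i)-\tilde f(w_S^A,\tilde\theta^*(w_S^A);z_i)\bigr],
\]
where $S^{(i)}$ replaces $z_i$ by $z'$ in $S$. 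Bounding each summand by the Lipschitzness of $\tilde f$ on $B(0,2C_p+1)\cap\Theta$ (with constant $\approx L$), together with the $\epsilon$-stability bound $E_A\|w_S^A-w_{S^{(i)}}^A\|\le\epsilon$ and the $(\ell/\lambda)$-Lipschitz property of $\tilde\theta^*$, produces $E[\tilde r(w_S^A)-\tilde r_S(w_S^A)]\le L\epsilon + L\ell\epsilon/\lambda$. Substituting back gives $\zeta_{gen}^{P}(A)\le L\epsilon + L\ell\epsilon/\lambda + \lambda C_p^2/2$, and minimizing the right-hand side in $\lambda>0$ delivers the claimed $L\epsilon + \sqrt{4L\ell C_p^2\,\epsilon}$.

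The main technical obstacle is the trade-off itself: the regularization bias grows linearly in $\lambda$ while the Lipschitz constant of the inner argmax grows as $1/\lambda$, and the $\sqrt{\epsilon}$ rate is exactly what this balance forces -- this is why a bound better than $O(\sqrt{\epsilon})$ should not be expected under mere concavity. The secondary delicate point is verifying the a priori bound $\|\tilde\theta^*(w)\|\le C_p$, without which the Lipschitz constant of $\tilde f$ on the range of $\tilde\theta^*$ could grow with $\lambda$ and destroy the balance; the proof of this bound needs only concavity of $f(w,\cdot;z)$, optimality of $\tilde\theta^*(w)$ for $\tilde r(w,\cdot)$, and the definition of $C_p$.
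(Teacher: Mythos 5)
Your proof is correct and runs structurally parallel to the paper's: both replace the possibly non-unique, non-Lipschitz argmax of the concave inner problem by a surrogate maximizer that is Lipschitz in $w$ with a tunable constant, pay an approximation bias controlled by $C_p$, push the surrogate through a Hardt-style ``virtual algorithm'' stability argument, and then optimize the free parameter. The only genuine difference is the surrogate itself: the paper takes $\bar\theta(w)$ to be $s$ steps of gradient ascent on $r(w,\cdot)$ started at $0$ (Lipschitz constant $s\ell/\ell_{\theta\theta}$ in $w$, bias $\ell_{\theta\theta}C_p^2/s$), whereas you take the exact maximizer of the Tikhonov-regularized objective $r(w,\theta)-\tfrac{\lambda}{2}\|\theta\|^2$ (Lipschitz constant $\ell/\lambda$, bias $\tfrac{\lambda}{2}C_p^2$); under the identification $\lambda\leftrightarrow\ell_{\theta\theta}/s$ the two trade-offs coincide up to constants. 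Your route never invokes the second smoothness constant $\ell_{\theta\theta}$ and avoids the integer-valued/degenerate-$s$ bookkeeping the paper relegates to a footnote, and your norm bound $\|\tilde\theta^*(w)\|\le\|\theta^*(w)\|\le C_p$ is clean (summing the two optimality inequalities gives it without even using concavity); the paper's construction, in exchange, is an explicitly computable iterate, which matches the algorithmic flavor of its nonconvex-nonconcave extension (Assumptions \ref{oracle1}--\ref{oracle2}). One step you should tighten: the Lipschitz constant of $\tilde f$ in $\theta$ on $B(0,C_p)\cap\Theta$ is $L+\lambda C_p$, not ``$\approx L$'', because of the gradient of the regularizer. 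The clean fix is to note that the term $-\tfrac{\lambda}{2}\|\tilde\theta^*(w)\|^2$ appears identically in $\tilde r(w)$ and in $\tfrac1n\sum_i\tilde f(w,\tilde\theta^*(w);z_i)$ and cancels in their difference, so the swap argument only ever needs the Lipschitz constant $L$ of the unregularized $f$; with that, your bound $L\epsilon+L\ell\epsilon/\lambda+\tfrac{\lambda}{2}C_p^2$ holds exactly and optimizing over $\lambda$ in fact yields $\sqrt{2L\ell C_p^2}\cdot\sqrt{\epsilon}+L\epsilon$, slightly sharper than the stated $\sqrt{4L\ell C_p^2}\cdot\sqrt{\epsilon}+L\epsilon$.
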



Since we already have the generalization error for the primal risk $E_SE_A[r(w_S^A)-r_S(w_S^A)]$ from Lemma \ref{main1}, we only need to estimate
\begin{align}
E_SE_A\big[\min_{w\in W}r_S(w)-\min_{w\in W}r(w)\big]=E_S\big[\min_{w\in W}r_S(w)-\min_{w\in W}r(w)\big] \qquad \text{[Primal Min Error]}.
\label{eq:second_term_pg}
\end{align}

The following theorem gives the generalization bound of the primal gap using the upper bound from Lemma \ref{main1} and bounding the Primal Min Error in Equation \eqref{eq:second_term_pg}. 
\begin{Theorem}\label{main3}
Suppose Algorithm $A$ is $\epsilon$-stable. The generalization error  bound of the primal gap is given by
$$\zeta_{gen}^{PG}(A)\le \sqrt{4L\ell C_p^2}\cdot \sqrt{\epsilon} + \epsilon L +4L_\theta^*C_e/\sqrt{n}.$$
\end{Theorem}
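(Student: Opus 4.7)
The plan is to exploit the decomposition
\begin{equation*}
\zeta_{gen}^{PG}(A)=\zeta_{gen}^{P}(A)+E_S\big[\min_{w\in W}r_S(w)-\min_{w\in W}r(w)\big],
\end{equation*}
already made explicit in the excerpt right before the theorem statement. Lemma \ref{main1} immediately bounds the first summand by $\sqrt{4L\ell C_p^2}\sqrt{\epsilon}+\epsilon L$, so the entire remaining task is to show that the ``Primal Min Error'' satisfies $E_S[\min_{w\in W}r_S(w)-\min_{w\in W}r(w)]\le 4L_\theta^* C_e/\sqrt{n}$.

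To control the Primal Min Error, I would let $w^*\in\arg\min_{w\in W}r(w)$; note that $w^*$ is deterministic (independent of $S$). Then $\min_{w\in W}r_S(w)\le r_S(w^*)$ and $\min_{w\in W}r(w)=r(w^*)$, reducing the problem to bounding $E_S[r_S(w^*)-r(w^*)]$. By Definition \ref{def:capacity} of $C_e$, I can pick $\theta_S^*\in\arg\max_{\theta\in\Theta}r_S(w^*,\theta)$ with $\|\theta_S^*\|\le C_e$, so that $r_S(w^*)=r_S(w^*,\theta_S^*)$. Since $r(w^*)=\max_{\theta\in\Theta}r(w^*,\theta)\ge r(w^*,\theta_S^*)$, I get
\begin{equation*}
r_S(w^*)-r(w^*)\le r_S(w^*,\theta_S^*)-r(w^*,\theta_S^*)\le \sup_{\theta\in B(0,C_e)\cap\Theta}\big[r_S(w^*,\theta)-r(w^*,\theta)\big].
\end{equation*}
The crucial simplification here is that the supremum is now over a \emph{deterministic} bounded set, so a uniform-convergence argument becomes applicable.

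The final step bounds $E_S\bigl[\sup_{\theta\in B(0,C_e)\cap\Theta}(r_S(w^*,\theta)-r(w^*,\theta))\bigr]$ by $4L_\theta^* C_e/\sqrt{n}$. I would apply the standard symmetrization lemma (introducing an independent ghost sample $S'$ and i.i.d.\ Rademacher variables) to bound the expected supremum by twice the Rademacher complexity of the function class $\{z\mapsto f(w^*,\theta;z):\theta\in B(0,C_e)\cap\Theta\}$; then Assumption \ref{Lipschitz-continuous} supplies the $L_\theta^*$-Lipschitz continuity of $f(w^*,\cdot;z)$ on this parameter ball, with diameter $2C_e$, which converts into the advertised $4L_\theta^* C_e/\sqrt{n}$ bound. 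Plugging this back through the top-level decomposition then yields the theorem.

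The hard part is this last Rademacher-complexity estimate: a generic entropy-integral (Dudley) bound for a Lipschitz class on a ball would pick up a $\sqrt{\dim\Theta}$ factor. Achieving the clean dimension-free rate requires handling the stochastic process $\theta\mapsto r_S(w^*,\theta)-r(w^*,\theta)$ directly, exploiting that its Lipschitz constant is $O(L_\theta^*)$ and that the $1/\sqrt{n}$ averaging built into $r_S$ already makes its ``effective'' amplitude over the ball of radius $C_e$ scale like $L_\theta^* C_e/\sqrt{n}$, with the factor of $4$ absorbing the symmetrization factor of $2$ and the diameter factor of $2$.
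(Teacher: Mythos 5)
Your top-level decomposition and the reduction of the Primal Min Error to bounding $E_S[r_S(w^*)-r(w^*)]$ for the fixed point $w^*\in\arg\min_{w\in W}r(w)$ are exactly what the paper does (this is the first chain of inequalities in the proof of Lemma \ref{min-gap}). The gap is in your final step. You reduce to
$E_S\bigl[\sup_{\theta\in B(0,C_e)\cap\Theta}(r_S(w^*,\theta)-r(w^*,\theta))\bigr]$
and assert that symmetrization plus $L_\theta^*$-Lipschitzness of $f(w^*,\cdot;z)$ yields the dimension-free rate $4L_\theta^*C_e/\sqrt{n}$. That assertion is the entire theorem and it is not true for a generic $L_\theta^*$-Lipschitz class parametrized over a ball of radius $C_e$: the Rademacher complexity of such a class scales like $L_\theta^*C_e\sqrt{\dim\Theta/n}$ in the best generic case (and can fail to decay at rate $n^{-1/2}$ at all in high or infinite dimension). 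You acknowledge this yourself and then appeal to the ``effective amplitude'' of the process being $L_\theta^*C_e/\sqrt{n}$ --- but that is a statement about the process at a \emph{single} $\theta$, and the whole difficulty of uniform convergence is that the supremum over an uncountable class can be much larger than any single coordinate. Nothing in your argument uses the one structural property that makes a dimension-free bound possible here, namely concavity of $\theta\mapsto f(w^*,\theta;z)$ (Assumption \ref{ass:noncvx_conc}), so as written the last step does not go through.

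The paper avoids uniform convergence entirely. It defines $\tilde{\theta}_S$ as the output of $s$ steps of gradient ascent on $r_S(w^*,\cdot)$ started at $0$ with stepsize $1/\ell_{\theta\theta}$, and proves two things (Lemma \ref{approximate2}): (i) by non-expansiveness of gradient ascent for \emph{concave} functions, $\tilde{\theta}_S$ is $2sL_\theta^*/(n\ell_{\theta\theta})$-stable under replacement of one sample, and (ii) by the standard convergence rate of gradient ascent, $r_S(w^*)-r_S(w^*,\tilde{\theta}_S)\le \ell_{\theta\theta}C_e^2/s$. Applying the stability-implies-generalization theorem of \citep{hardt2016train} to the virtual algorithm that outputs $(w^*,\tilde{\theta}_S)$ gives $E_S[r_S(w^*,\tilde{\theta}_S)-r(w^*,\tilde{\theta}_S)]\le 2s(L_\theta^*)^2/(n\ell_{\theta\theta})$, and optimizing the sum of the two error terms over $s$ yields $4L_\theta^*C_e/\sqrt{n}$. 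If you want to complete your route, you would need to supply a genuinely dimension-free uniform bound over the ball, which would require exploiting concavity (or some other structure) in a way your sketch does not; otherwise the stability-of-an-approximate-maximizer argument is the missing idea.
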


The first term in the bound above is from the generalization bound of the primal risk, as shown in Lemma \ref{main1}. {Note that the bound in Lemma \ref{main1} only involves $C_p$, as the key in the analysis is to upper-bound the population risk $r(w_S^A)$, which requires bounding the power of the maximizer using the population capacity $C_p$. This reflects the intuition that the power of the maximizer should affect the generalization behavior of minimax learners, and the stronger the maximizer is, the harder for the learner to generalize. On the other hand, the bound in Theorem \ref{main3} additionally involve $C_e$, the empirical capacity. Technically, $C_e$ (instead of $C_p$) appears since we need to bound $\min_w r_S(w)$ (defined on the empirical dataset) in the Primal Min Error term in \eqref{eq:second_term_pg}. The appearance of $C_e$ reflects the intuition that the difference between the maximizers of the empirical and population risks should make a difference in characterizing the generalization of minimax learners. This intuition cannot be captured by the generalization error of the primal risk, as in Lemma \ref{main1}. 
Note that in the minimization case, the Primal Min Error can be upper-bounded directly by zero, and such a distinction disappears, making primal risk a valid metric.}   

\subsection{Revisiting Example   \ref{example:main_ex}}
Recall Example \ref{example:main_ex} in  Section \ref{subsec:example}. In this example, we have that the primal risk {has a small generalization error,}  but the solution $w_S$ does not {generalize well}. 
In particular, as shown in the appendix (Proposition \ref{example-min-gap}), we have 
\begin{equation}\label{11}
E_S[\min_{w\in W}r_S(w)-\min_{w\in W}r(w)]\ge 0.005.
\end{equation}
 
On the other hand, it is easy to compute that $L_\theta^*=\lambda \log n/\sqrt{n}$ and $C_e=\lambda n$. Therefore, by Theorem \ref{main3}, we have an upper bound for the Primal Min Error (see Equation \eqref{eq:second_term_pg}): $E_S[\min_{w\in W}r_S(w)-\min_{w\in W}r(w)]\le 4L_\theta^*C_e/\sqrt{n}=4 \log n,$ 
which is tight up to a $\log$ factor according to \eqref{11}. Therefore, the primal gap has a constant generalization error which is consistent with the observation that the solution to the empirical problem does not have good generalization behavior.   

\vspace{-5pt}

\subsection{Nonconvex-nonconcave case} 

In this section, we extend our results to the nonconvex-nonconcave setting. We will show that under certain assumptions on the inner maximization problem, we can derive generalization error bounds for the primal risk and primal gap in terms of algorithmic stability.

We make the following assumptions on the inner maximization problem:
\begin{Assumption}\label{oracle1}
For any $\gamma>0$, there exists an algorithm which outputs $\theta^{\gamma}_P(w)$, for the inner maximization problem $\max_{\theta\in \Theta}r(w,\theta)$, satisfying the following conditions:
\begin{enumerate}
\item $r(w)-r(w,\theta_P^{\gamma}(w))\le \gamma$.
\item $\|\theta_P^{\gamma}(w)-\theta_P^{\gamma}(w')\|\le \frac{\lambda_p}{\gamma}\|w-w'\|$ with some constant $\lambda_p>0$ for all $w,w'\in W$. 
\end{enumerate}
\end{Assumption}
\begin{Assumption}\label{oracle2}
For any $\gamma>0$, there exists an algorithm which outputs $\theta^{\gamma}_{E}(S)$, for the inner maximization problem $\max_{\theta\in \Theta}r_S(w^*,\theta)$, satisfying the following conditions:
\begin{enumerate}
\item $r_S(w^*)-r_S(w^*,\theta_E^{\gamma}(S))\le \gamma$. 
\item For any neighboring  dataset $S,S'$, 
we have $\|\theta_E^{\gamma}(S)-\theta_E^{\gamma}(S')\|\le \frac{\lambda_e}{n\gamma}$ with some constant $\lambda_e>0$. 
\end{enumerate}
\end{Assumption}

The following lemma gives sufficient conditions for these two assumptions to hold.
\begin{Lemma}\label{sufficient}
Consider constants $D_e \geq \gamma$ and $D_p\geq \gamma$.
\begin{enumerate}
\item Suppose that gradient ascent  with diminishing stepsizes $c_0/t$ for the problem $\max_{\theta\in \Theta}r(w,\theta)$ has convergence rate $r(w)-r(w,\theta^s)\le D_p/s$. Then we define $\theta_p^{\gamma}(w)$ by performing $s=D_p/\gamma$ steps of gradient ascent. Then, $\theta_p^{\gamma}(w)$ satisfies Assumption \ref{oracle1}.
\item Suppose that gradient ascent  with constant stepsize $c_0$ for the problem $\max_{\theta\in \Theta}r(w,\theta)$ has convergence rate $r(w)-r(w,\theta^s)\le D_p\eta^s$ for some constant $0 < \eta < 1$. Then we define $\theta_p^{\gamma}(w)$ by $s=\log (D_p/\gamma)/\log(1/\eta)$ steps of gradient ascent. Then,  $\theta_p^{\gamma}(w)$ satisfies Assumption \ref{oracle1}. 
\item Suppose that gradient ascent  with diminishing stepsizes $c_0/t$ for the problem $\max_{\theta\in \Theta}r_S(w,\theta)$ has convergence rate $r_S(w)-r_S(w,\theta^s)\le D_p/s$. Then we define $\theta_e^{\gamma}(S)$ by performing $s=D_e/\gamma$ steps of gradient ascent. Then,  $\theta_e^{\gamma}(S)$ satisfies Assumption \ref{oracle2}.
\item Suppose that gradient ascent  with constant stepsize $c_0$ for the problem $\max_{\theta\in \Theta}r_S(w,\theta)$ has convergence rate $r_S(w)-r_S(w,\theta^s)\le D_e\eta^s$ for some constant $0 < \eta < 1$. Then we define $\theta_e^{\gamma}(w)$ by  $s=\log (D_e/\gamma)/\log(1/\eta)$ steps of gradient ascent. Then, $\theta_e^{\gamma}(S)$ satisfies Assumption \ref{oracle2}.
\end{enumerate}
\end{Lemma}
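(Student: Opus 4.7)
The four parts share a common template: in each case I would verify condition (1) of the relevant assumption by plugging the chosen value of $s$ into the given convergence rate, and verify condition (2) by tracking how the projected gradient ascent iterate depends on the parameter (either $w$ in parts 1--2, or the dataset $S$ in parts 3--4). Condition (1) is immediate: for instance in part 1, $s=D_p/\gamma$ iterations give $r(w)-r(w,\theta^s)\le D_p/s=\gamma$; the other three parts are analogous, with the linearly-convergent cases using that $D_p\eta^s\le \gamma$ whenever $s\ge \log(D_p/\gamma)/\log(1/\eta)$. The substantive content is condition (2).

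For part 1 (and similarly part 2), the plan is to analyze the per-step Lipschitz dependence of the iterate on $w$. Writing the projected ascent step as $\theta_{t+1}=\mathrm{proj}_\Theta\bigl(\theta_t+\eta_t\nabla_\theta r(w,\theta_t)\bigr)$ and using Assumptions \ref{ass:noncvx_conc} and \ref{Lipschitz-smooth} (so $r(w,\cdot)$ is concave and $\ell$-smooth, i.e., $-r(w,\cdot)$ is convex and $\ell$-smooth), the standard co-coercivity argument gives that $\theta\mapsto\theta+\eta_t\nabla_\theta r(w,\theta)$ is non-expansive whenever $\eta_t\le 2/\ell$ (this is where a mild implicit restriction on $c_0$ is needed). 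Since projection onto the convex set $\Theta$ is also $1$-Lipschitz and $\nabla_\theta r(\cdot,\theta)$ is $\ell$-Lipschitz in $w$, one obtains the recursion
\[
\|\theta^{t+1}(w)-\theta^{t+1}(w')\|\le \|\theta^t(w)-\theta^t(w')\|+\eta_t\,\ell\,\|w-w'\|.
\]
Unrolling from $\theta^0(w)=\theta^0(w')$ and summing yields $\|\theta^s(w)-\theta^s(w')\|\le c_0\ell\,H_s\,\|w-w'\|$ for part 1 (with $H_s=\sum_{t=1}^s 1/t\le 1+\log s$) and $\|\theta^s(w)-\theta^s(w')\|\le c_0\ell\,s\,\|w-w'\|$ for part 2. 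Substituting $s=D_p/\gamma$ respectively $s=\log(D_p/\gamma)/\log(1/\eta)$ produces a Lipschitz constant of order $\log(1/\gamma)$ in both cases, which is dominated by $\lambda_p/\gamma$ for a suitable constant $\lambda_p$ (using $\log(1/\gamma)\le 1/\gamma$ on $(0,1]$, together with boundedness of $\log s$ for large $\gamma$).

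Parts 3 and 4 follow the identical template, but with sensitivity to a one-sample change in the dataset instead of to $w$. If $S$ and $S'$ are neighboring and $w=w^*$ is held fixed, then Assumption \ref{Lipschitz-continuous} gives $\|\nabla_\theta r_S(w^*,\theta)-\nabla_\theta r_{S'}(w^*,\theta)\|\le 2L/n$ for every $\theta$. Combining non-expansiveness of the common projected ascent operator with this per-step deviation $2L\eta_t/n$ and unrolling gives $\|\theta_E^\gamma(S)-\theta_E^\gamma(S')\|\le 2Lc_0 H_s/n$ in part 3 and $\le 2Lc_0 s/n$ in part 4. Plugging in the respective choices of $s$ again yields order $\log(1/\gamma)/n$, which is absorbed into $\lambda_e/(n\gamma)$.

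\textbf{Main obstacle.} The only delicate point is the non-expansiveness of the gradient-ascent map, which requires the stepsize $c_0$ to be compatible with the smoothness $\ell$ (the standard range is $c_0\le 2/\ell$). This is an implicit standing assumption that also makes the stated convergence rates $D_p/s$ or $D_p\eta^s$ plausible, and once granted the remaining work is routine bookkeeping. A secondary subtlety is that the linear rate in parts 2 and 4 usually requires strong concavity of $r(w,\cdot)$, under which the ascent map becomes strictly contractive and the above bounds can be tightened further; in either the merely-concave or strongly-concave regime, the $\log(1/\gamma)$ growth we derive is strictly dominated by the $1/\gamma$ growth required by Assumptions \ref{oracle1} and \ref{oracle2}.
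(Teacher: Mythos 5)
There is a genuine gap in your argument for condition (2), and it is located exactly where you flag your ``main obstacle.'' Lemma \ref{sufficient} sits in the \emph{nonconvex-nonconcave} section of the paper: its entire purpose is to give checkable sufficient conditions for Assumptions \ref{oracle1} and \ref{oracle2} when the inner maximization $\max_{\theta\in\Theta} r(w,\theta)$ is \emph{not} concave (the only structural input being that gradient ascent happens to converge globally at the stated rate, as in PL-type or overparametrized settings). Your non-expansiveness step --- that $\theta\mapsto\theta+\eta_t\nabla_\theta r(w,\theta)$ is $1$-Lipschitz for $\eta_t\le 2/\ell$ via co-coercivity --- is a theorem about concave (equivalently, convex for $-r$) smooth functions and simply does not hold here; invoking Assumption \ref{ass:noncvx_conc} at this point imports concavity into a lemma whose raison d'\^etre is to dispense with it. A telltale sign is that your conclusion is too strong: you obtain a Lipschitz constant of order $\log(1/\gamma)$ for $\theta_p^\gamma(\cdot)$, whereas Assumption \ref{oracle1} only asks for (and the paper only proves) order $1/\gamma$.

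The paper's proof repairs this by using only the $\ell_{\theta\theta}$-smoothness of $\nabla_\theta r(w,\cdot)$ (Assumption \ref{Lipschitz-smooth}), which makes the one-step ascent map $(1+c_0\ell_{\theta\theta}/t)$-Lipschitz rather than non-expansive. The resulting recursion
\[
\delta_t \le (1+c_0\ell_{\theta\theta}/t)\,\delta_{t-1} + \tfrac{c_0\ell}{t}\|w-w'\|
\]
unrolls, via $\prod_{i\le t}(1+c_0\ell_{\theta\theta}/i)\le t^{c_0\ell_{\theta\theta}}$, to a Lipschitz constant that grows polynomially in the number of steps $s$; with $s\approx D_p/\gamma$ this is of order $1/\gamma$, which is precisely the budget $\lambda_p/\gamma$ allowed by Assumption \ref{oracle1} (the paper sets $\lambda_p=2D_p\ell/\ell_{\theta\theta}$), and likewise $\lambda_e/(n\gamma)$ for parts 3--4. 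Your verification of condition (1) and your overall bookkeeping template (unroll a perturbation recursion in $w$ or in the dataset, then substitute the chosen $s$) match the paper; to close the gap you only need to replace non-expansiveness with the expansive per-step factor and check that the polynomially larger constant still fits within $\lambda_p/\gamma$ and $\lambda_e/(n\gamma)$.
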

\begin{Remark}
Note that for some practical nonconvex optimization problems in machine learning, gradient descent indeed converges to the global minima  at a reasonably  fast rate, e.g., in training deep overparametrized neural networks \citep{du2019gradient}, robust least squares problems \citep{el1997robust}, phase retrieval and matrix completion \citep{ma2019implicit}. 
{Our Assumptions \ref{oracle1} and \ref{oracle2} can be viewed as an abstract summary of some benign properties of gradient descent for certain nonconvex optimization problems.}  
\end{Remark} 

Furthermore, we assume that $f(\cdot,\cdot;z)$ is $L$-Lipschitz\footnote{Note that this is different from the $L$ defined for the nonconvex-concave case. Here $L$ captures the Lipschitz constant over the whole constraint set. In the nonconvex-concave case,  $L=L(B(0,2C_p+1))$.}    
continuous in $W\times \Theta$.
This, along with Assumptions \ref{oracle1} and  \ref{oracle2}, allows us to derive the generalization error bounds of the primal risk and primal gap in terms of algorithmic stability.
\begin{Lemma}\label{ncncp}
Suppose that Assumption \ref{oracle1} holds. If a minimax learning Algorithm $A$ is an $\epsilon$-stable algorithm, 
we have 
$$\zeta_{gen}^P(A)\le L\epsilon+\sqrt{L\lambda_p}\sqrt{\epsilon}.$$
\end{Lemma}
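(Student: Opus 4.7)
The plan is to combine the $\gamma$-approximate maximizer supplied by Assumption \ref{oracle1} with the classical stability-to-generalization argument from Theorem \ref{hardt}. Since in the nonconvex-nonconcave setting the primal risk $r(w)=\max_\theta r(w,\theta)$ need not be Lipschitz or even well behaved as a function of $w$, I would work instead with a Lipschitz surrogate, namely the evaluation of $f$ along the approximate maximizer path $w\mapsto\theta_P^\gamma(w)$, and pay a tunable $\gamma$ price for the approximation.

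First, for any $\gamma>0$, define $\phi_\gamma(w;z):=f(w,\theta_P^\gamma(w);z)$. Using part~1 of Assumption~\ref{oracle1} we have $r(w_S^A)\le r(w_S^A,\theta_P^\gamma(w_S^A))+\gamma$, while the definition of the inner $\max$ gives $r_S(w_S^A)\ge r_S(w_S^A,\theta_P^\gamma(w_S^A))$. Subtracting,
\[
r(w_S^A)-r_S(w_S^A)\;\le\;E_{z\sim P_z}[\phi_\gamma(w_S^A;z)]-\frac{1}{n}\sum_{i=1}^n\phi_\gamma(w_S^A;z_i)+\gamma.
\]
Second, I would show $\phi_\gamma(\cdot;z)$ is $L(1+\lambda_p/\gamma)$-Lipschitz in $w$: the $L$-Lipschitz continuity of $f$ in $(w,\theta)$ jointly, composed with the $(\lambda_p/\gamma)$-Lipschitz property of $\theta_P^\gamma$ from part~2 of Assumption~\ref{oracle1}, gives $|\phi_\gamma(w_1;z)-\phi_\gamma(w_2;z)|\le L(1+\lambda_p/\gamma)\|w_1-w_2\|$.

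Third, I would apply the standard stability swap argument underlying Theorem~\ref{hardt} to the surrogate $\phi_\gamma$. Crucially, the map $\theta_P^\gamma$ is a pre-specified deterministic function of $w$ that does not depend on the dataset, so introducing the neighboring dataset $S^{(i)}$ and taking expectations, the generalization error of $\phi_\gamma$ along $w_S^A$ reduces to $E\big[|\phi_\gamma(w_{S^{(i)}}^A;z_i)-\phi_\gamma(w_S^A;z_i)|\big]$, and by $\epsilon$-stability of $A$ in the $w$-component together with the Lipschitz bound above, this is at most $L(1+\lambda_p/\gamma)\epsilon$. Combining,
\[
\zeta_{gen}^P(A)\;\le\;L\epsilon+\frac{L\lambda_p\epsilon}{\gamma}+\gamma.
\]
Finally, I would optimize over $\gamma>0$: by AM--GM the sum $L\lambda_p\epsilon/\gamma+\gamma$ is minimized at $\gamma=\sqrt{L\lambda_p\epsilon}$, which collapses the two $\gamma$-dependent terms into a constant multiple of $\sqrt{L\lambda_p}\sqrt{\epsilon}$ and yields the stated bound $\zeta_{gen}^P(A)\le L\epsilon+\sqrt{L\lambda_p}\sqrt{\epsilon}$.

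The main obstacle is step two, the Lipschitz surrogate construction: it is exactly the blow-up of the Lipschitz constant as $\gamma\to 0$ that forces the non-trivial $\sqrt{\epsilon}$ rate (rather than the $\epsilon$ rate one obtains in the minimization case of Theorem~\ref{hardt}). Conceptually, this is where Assumption~\ref{oracle1} does the real work: the quantitative Lipschitz control on the approximate-argmax trajectory $\theta_P^\gamma(\cdot)$ is what allows the standard minimization-style swap argument to be imported into a minimax setting in which neither concavity nor interchangeability of $\max$ and $E$ holds.
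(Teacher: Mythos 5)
Your proposal is correct and follows essentially the same route as the paper: the paper packages your surrogate $\phi_\gamma(w;z)=f(w,\theta_P^\gamma(w);z)$ as a ``virtual algorithm'' outputting $(w_S^A,\theta_P^\gamma(w_S^A))$, shows it is $(1+\lambda_p/\gamma)\epsilon$-stable, invokes Theorem~\ref{hardt}, and then adds the $\gamma$ approximation penalty and optimizes $\gamma=\sqrt{L\lambda_p\epsilon}$, exactly as you do. The only (shared) blemish is that this choice of $\gamma$ actually yields $2\sqrt{L\lambda_p}\sqrt{\epsilon}$ rather than $\sqrt{L\lambda_p}\sqrt{\epsilon}$, a constant-factor slip present in the paper's own proof as well.
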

Similarly, we can derive the generalization bound for the primal gap given the above assumptions.
\begin{Theorem}\label{ncncpg}
Suppose Assumptions \ref{oracle1} and \ref{oracle2} hold. Then we have
$$\zeta_{gen}^{PG}(A)\le \zeta_{gen}^P(A)+\sqrt{L\lambda_e}\big/\sqrt{n}.$$
\end{Theorem}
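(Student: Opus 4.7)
The plan is to split $\zeta_{gen}^{PG}(A)$ into its two canonical pieces, as already done in Section~\ref{sec:stable_algo}:
\[
\zeta_{gen}^{PG}(A) = \zeta_{gen}^P(A) + E_S\big[\min_{w\in W} r_S(w) - \min_{w\in W} r(w)\big].
\]
Lemma~\ref{ncncp} handles the first summand, so the task reduces to bounding the ``Primal Min Error'' by $\sqrt{L\lambda_e}/\sqrt{n}$. The target rate immediately suggests the mechanism: Assumption~\ref{oracle2} supplies an inner-max oracle whose $S$-to-$S'$ stability scales as $\lambda_e/(n\gamma)$, and a tradeoff over $\gamma$ should produce the $1/\sqrt{n}$ rate with a $\sqrt{L\lambda_e}$ prefactor. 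This mirrors the $\gamma+(\text{stability})$ tradeoff that already underlies Lemma~\ref{ncncp}.

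To execute this, fix $w^*\in\arg\min_{w\in W} r(w)$. Since $\min_{w\in W} r_S(w)\le r_S(w^*)$ deterministically, it suffices to bound $E_S[r_S(w^*) - r(w^*)]$. For any $\gamma>0$, Assumption~\ref{oracle2} furnishes $\theta_E^{\gamma}(S)$ that is a $\gamma$-approximate maximizer of $r_S(w^*,\cdot)$. Using $r_S(w^*)\le r_S(w^*,\theta_E^{\gamma}(S))+\gamma$ on one side and $r(w^*)=\max_{\theta\in\Theta} r(w^*,\theta)\ge r(w^*,\theta_E^{\gamma}(S))$ on the other, I obtain
\[
r_S(w^*) - r(w^*) \;\le\; \big(r_S(w^*,\theta_E^\gamma(S)) - r(w^*,\theta_E^\gamma(S))\big) + \gamma.
\]
Taking expectation over $S$ leaves only the generalization error of $\theta_E^\gamma(\cdot)$ viewed as a stochastic ``algorithm'' for the fixed-$w^*$ minimization problem with loss $f(w^*,\theta;z)$.

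Next, I would apply the standard stability-to-generalization bound of Theorem~\ref{hardt} to this reduced, single-level problem. Because $w^*$ is deterministic (it depends only on the population objective), a one-sample swap $S\leftrightarrow S'$ only affects $\theta_E^\gamma(S)$, and Assumption~\ref{oracle2} guarantees that this change is at most $\lambda_e/(n\gamma)$. Combined with $L$-Lipschitz continuity of $f(w^*,\cdot;z)$, this yields
\[
\big|E_S\big[r_S(w^*,\theta_E^\gamma(S)) - r(w^*,\theta_E^\gamma(S))\big]\big| \;\le\; L\lambda_e/(n\gamma).
\]
Hence $E_S[\min_{w\in W} r_S(w) - \min_{w\in W} r(w)] \le \gamma + L\lambda_e/(n\gamma)$, and optimizing over $\gamma>0$ at $\gamma=\sqrt{L\lambda_e/n}$ delivers the claimed $\sqrt{L\lambda_e}/\sqrt{n}$ rate (possibly up to a benign numerical factor that can be absorbed into $\lambda_e$).

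\textbf{Main obstacle.} The technical subtlety is purely a book-keeping one: Theorem~\ref{hardt} is phrased for an algorithm whose output is evaluated against its own empirical and population objectives, whereas here the primal argument $w^*$ is \emph{fixed} and only the ``dual algorithm'' $\theta_E^\gamma(\cdot)$ depends on $S$. I expect this to be clean because $w^*$ carries no $S$-dependence, but the symmetrization step must be written out carefully so that the one-sample change between $S$ and $S'$ is attributed entirely to $\theta_E^\gamma(S)$ rather than to $w^*$. No nonconvex-concave structure is used in this step; everything is driven by the global Lipschitzness of $f(w^*,\cdot;z)$ assumed in the nonconvex-nonconcave setting and the abstract stability guaranteed by Assumption~\ref{oracle2}.
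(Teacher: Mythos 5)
Your proposal is correct and is essentially the proof the paper has in mind: it omits the argument as ``similar to the proof of Lemma~\ref{ncncp} and Theorem~\ref{main3},'' and your route---decomposing $\zeta_{gen}^{PG}$ into $\zeta_{gen}^P$ plus the Primal Min Error, bounding the latter via the fixed-$w^*$ virtual algorithm $S\mapsto(w^*,\theta_E^\gamma(S))$ with Theorem~\ref{hardt}, and optimizing $\gamma+L\lambda_e/(n\gamma)$---is exactly the abstraction of the paper's Lemma~\ref{min-gap} argument with the gradient-ascent maximizer replaced by the oracle of Assumption~\ref{oracle2}. The only blemish is the factor of $2$ from optimizing over $\gamma$, which the stated bound drops; this is the same constant-level looseness already present in the paper's own Lemma~\ref{ncncp}.
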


The proof of this theorem is similar to the proof of Lemma \ref{ncncp} and Theorem \ref{main3} and hence omitted.

\vspace{-4pt}

 \section{Comparison of GDA and GDMax}\label{sec:comparison}

In Section \ref{sec:stable_algo}, we provide generalization bounds for the primal gap for any $\epsilon$-stable algorithm. In this section, we focus on two algorithms in particular -- GDA and GDMax. These two algorithms are described in Algorithms \ref{alg_gda} and \ref{alg_gdmax} in Appendix \ref{appendix:gda_gdmax}.

We note that though analyzing the {\it optimization} properties of GDA/stochastic GDA for solving the empirical minimax problem is an important topic, our focus in this paper is on studying the generalization behavior of these algorithms. We assume that the empirical version of the stochastic minimax problem can be solved by GDA and GDMax, i.e., we assume that GDA and GDMax satisfy the following assumption:

\begin{Assumption}\label{convergence_GDA}
Let $A$ be a minimax learner, such as GDA or GDMax. Then we assume that $A$ has the following convergence rate: $E_A[r_S(w^t)-\min_{w\in W}r_S(w)]\le (\phi_{A}(M(W))+\phi_{A}(C_e))/\psi_{A}(t)$,   
where $M(W)$ is the maximum of the norms of $w$, and $\phi_{A}(s)$, $\psi_{A}(s)$ are nonnegative, increasing functions that tend to infinity  as $s\rightarrow \infty$.
\end{Assumption} 

For simplicity, throughout this section, we assume that $\|f(w,\theta;z)\|\le 1$ for all $w,\theta$, and $z$. The next theorem provides a bound for the population primal gap $\Delta(w_S^A) := r(w_S^A)-\min_{w\in W}r(w)$. Note that the goal of any algorithm is to make this gap as small as possible.

For an Algorithm $A$ and  subsets $W_0\subseteq W, \Theta_0\subseteq \Theta$, we define $A(W_0,\Theta_0)$ as the algorithm which restricts $A$ to solve  \eqref{problem:og_min_max} under constraint sets $W_0$ and $\Theta_0$. Specifically, $A(W,\Theta)$ is just $A$. 

\begin{Theorem} 
\label{populated_primal_gap}
Let $w_S^{A,t},\theta^{A,t}_S$ be the $t$-th iterate generated by Algorithm $A$  using dataset $S$.
Assume that $\{\theta^{A,t}_S\}\subseteq \Theta_0=  \Theta_\theta^A$ for $t\le T$ with probability $1-\delta$ (due to the randomness in $S$) and $B(0,C_p)\subseteq \Theta_\theta^A$. Here $B(v,r)$ denotes the $l_2$-ball with radius $r$ centered at $v$. Let $A_0=A(W,\Theta_0)$. Then after $T$ iterations of Algorithm $A$, the population primal gap can be bounded as:
$$E_S[r(w^{A,T}_S)-\min_{w\in W}r(w)]\le  \underbrace{(\phi_{A_0}(M(W)) + \phi_{A_0}(C_e(\Theta_\theta^A)))/\psi_{A_0}(T)+ 4L_\theta^*C_e(\Theta_\theta^A)/\sqrt{n}}_{{II}} + \underbrace{\zeta_{gen}^P(A_0)}_{{I}} + \delta,$$
where $\zeta_{gen}^P(A_0)=E_SE_A[r(w_S^{A_0,T})-r_S(w_S^{A_0,T})]$ is the generalization error of the primal risk of Algorithm $A_0$.
\end{Theorem}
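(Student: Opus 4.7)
The plan is to work with the standard three-way decomposition
\begin{align}
r(w_S^{A,T}) - \min_{w \in W} r(w)
&= \underbrace{\bigl[r(w_S^{A,T}) - r_S(w_S^{A,T})\bigr]}_{\text{generalization}} + \underbrace{\bigl[r_S(w_S^{A,T}) - \min_{w\in W} r_S(w)\bigr]}_{\text{optimization}} \nonumber \\
&\quad + \underbrace{\bigl[\min_{w\in W} r_S(w) - \min_{w\in W} r(w)\bigr]}_{\text{Primal Min Error}}, \nonumber
\end{align}
take expectation over $S$, and bound the three resulting pieces separately. This is the same template used to prove Theorem \ref{main3}; the new wrinkle here is that we must read off optimization rates from a (possibly restricted) algorithm, and so we must first argue that we may legitimately replace $A$ by $A_0 = A(W, \Theta_0)$.

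That replacement is the central reduction. Let $E$ denote the event (over the randomness of $S$) that every $\theta$-iterate of $A$ up through time $T$ lies in $\Theta_0 = \Theta_\theta^A$; by hypothesis $P(E) \geq 1 - \delta$. On $E$ the projection onto $\Theta_0$ is inactive at every step, so the sample paths of $A$ and $A_0$ coincide and $w_S^{A,T} = w_S^{A_0, T}$. On $E^c$, the boundedness $\|f\| \leq 1$ makes $r(w_S^{A,T}) - \min_w r(w)$ a bounded random variable, contributing at most an additive $\delta$ (up to a harmless constant absorbed into the statement). In parallel, the assumption $B(0,C_p)\subseteq \Theta_0$ guarantees that for every $w \in W$ the outer maximum $r(w) = \max_{\theta\in\Theta} r(w,\theta)$ is attained inside $\Theta_0$, so $r(w) = \max_{\theta \in \Theta_0} r(w,\theta)$ and in particular $\min_w r(w)$ equals its $\Theta_0$-restricted counterpart. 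Thus, on $E$, the entire decomposition may be read as being about the restricted problem solved by $A_0$.

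With $A$ swapped for $A_0$ inside the expectation, the three terms are handled by three earlier ingredients. The generalization term becomes exactly $\zeta_{gen}^P(A_0)$ by Definition \ref{def:exp_risk}. The optimization term is bounded by Assumption \ref{convergence_GDA} applied to $A_0$, which produces $(\phi_{A_0}(M(W)) + \phi_{A_0}(C_e(\Theta_\theta^A)))/\psi_{A_0}(T)$ because the empirical capacity for the $\Theta_0$-restricted problem is $C_e(\Theta_\theta^A)$. The Primal Min Error is bounded by $4L_\theta^* C_e(\Theta_\theta^A)/\sqrt{n}$ using the same argument that produces the last term of Theorem \ref{main3}, again applied to the $\Theta_0$-restricted problem; crucially, the Lipschitz constant $L_\theta^*$ does not change upon restriction because $B(0,C_p)\subseteq \Theta_0$ forces the population minimizer $w^*$ to be the same in both problems.

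The main obstacle I anticipate is book-keeping the two constraint sets consistently: one must check that the Primal Min Error estimate from the proof of Theorem \ref{main3} -- which chooses near-optimal inner maximizers inside a capacity-sized ball and applies Lipschitz continuity of $f(w^*,\cdot;z)$ -- still goes through with $\Theta$ replaced by $\Theta_0$, so that the resulting constant is $C_e(\Theta_\theta^A)$ rather than $C_e(\Theta)$ and the Lipschitz constant remains $L_\theta^*$. Once this is verified, summing the three bounds on $E$ and adding the contribution from $E^c$ yields exactly the claimed inequality, with term I being $\zeta_{gen}^P(A_0)$, term II collecting the optimization rate and the Primal Min Error, and the residual $\delta$ coming from the bad event.
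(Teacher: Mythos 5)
Your proposal is correct and follows essentially the same route as the paper: the identical three-way decomposition into optimization error, primal-risk generalization error, and Primal Min Error, each bounded respectively by Assumption \ref{convergence_GDA}, $\zeta_{gen}^P(A_0)$, and the Theorem \ref{main3} argument applied to the restricted problem, with the pass from $A$ to $A_0$ costing an additive $\delta$ via boundedness of $r$. Your added care about why $B(0,C_p)\subseteq\Theta_0$ keeps the population problem (and hence $w^*$ and $L_\theta^*$) unchanged under restriction is a detail the paper leaves implicit, but it does not change the argument.
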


\begin{Remark}
Theorem \ref{populated_primal_gap} builds a closer connection between generalization behavior and the dynamics of the minimax learner $A$.  
It shows that suitable restriction to the max learner can lead to better minimax learner, in terms of generalization. We make this clear in the comparison of GDA and GDMax by analyzing the three terms in Theorem \ref{populated_primal_gap}. 
\end{Remark}

\vspace{-4pt}

\subsection{Analyzing the term $I$}
First, we study the generalization error bound of the primal risk, i.e., $\zeta_{gen}^P$ in Theorem \ref{populated_primal_gap}.
For GDA, we can estimate $\zeta_{gen}^P$ by using Lemma \ref{main1}. Therefore, it suffices to estimate the stability of GDA. We do this in the following lemma:
\begin{Lemma}
\label{lemma:gda_stab}
Let $c_0=\max\{\alpha_0,\beta_0\}$, If we use diminishing stepsizes $\alpha_t = \alpha_0/t$ and $\beta_t = \beta_0/t$ for GDA for $T$ iterations, we have the stability bound $\epsilon^{GDA} \leq 2L_{\Theta_\theta^{GDA}}T^{c_0\ell}/(n\ell)$.
\end{Lemma}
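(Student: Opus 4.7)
}
The plan is to adapt the algorithmic-stability template of \citet{hardt2016train} for SGD to the joint iterate of GDA. Fix two neighboring datasets $S,S'$ that differ only in the $i_0$-th sample, and let $(w_S^t,\theta_S^t)$ and $(w_{S'}^t,\theta_{S'}^t)$ denote the iterates produced by GDA (with the identical random seed and initialization $(0,0)$). Define the \emph{joint} distance
$$
\Delta_t \;:=\; \bigl\|(w_S^t,\theta_S^t)-(w_{S'}^t,\theta_{S'}^t)\bigr\|,
$$
so $\Delta_0=0$. Because projection onto the convex sets $W$ and $\Theta$ is non-expansive and the Euclidean norm dominates each coordinate block, it suffices to bound $E_A[\Delta_T]$; the resulting bound serves as the common stability constant $\epsilon^{GDA}$ (giving both $E_A\|w_S^T-w_{S'}^T\|$ and $E_A\|\theta_S^T-\theta_{S'}^T\|$ at the same time).

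Next I would derive a one-step recurrence for $\Delta_t$. Write the GDA update in the compact form $z_{t+1}=P_{W\times\Theta}\bigl(z_t+(-\alpha_t g_t^w,\beta_t g_t^\theta)\bigr)$, where $g_t^w$ and $g_t^\theta$ are the empirical gradient averages over $S$ (resp.\ $S'$). Split the difference of the empirical gradients into (a) the $n-1$ shared samples, which contribute at most $\tfrac{n-1}{n}\ell\,\Delta_t$ by Assumption~\ref{Lipschitz-smooth}, and (b) the single differing sample $z_{i_0}$ vs.\ $z'_{i_0}$, which contributes at most $\tfrac{2}{n}L_{\Theta_\theta^{GDA}}$ by the uniform gradient bound in Assumption~\ref{Lipschitz-continuous}. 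Since $\|(-\alpha_t u,\beta_t v)\|\le \max(\alpha_t,\beta_t)\,\|(u,v)\|=\tfrac{c_0}{t}\|(u,v)\|$, the sign flip on the $\theta$-block is innocuous and the triangle inequality plus non-expansiveness of projection yields
$$
\Delta_{t+1}\;\le\;\Bigl(1+\tfrac{c_0\ell}{t}\Bigr)\Delta_t \;+\;\tfrac{2c_0 L_{\Theta_\theta^{GDA}}}{n\,t}.
$$

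Finally, I would unroll the recurrence. Since $\Delta_0=0$,
$$
\Delta_T \;\le\; \frac{2c_0 L_{\Theta_\theta^{GDA}}}{n}\sum_{t=1}^{T-1}\frac{1}{t}\prod_{s=t+1}^{T-1}\Bigl(1+\tfrac{c_0\ell}{s}\Bigr).
$$
Using $\log(1+x)\le x$ gives $\prod_{s=t+1}^{T-1}(1+c_0\ell/s)\le \exp\!\bigl(c_0\ell\sum_{s=t+1}^{T-1}1/s\bigr)\le (T/t)^{c_0\ell}$, so
$$
\Delta_T\;\le\;\frac{2c_0 L_{\Theta_\theta^{GDA}}\,T^{c_0\ell}}{n}\sum_{t=1}^{T-1}t^{-(1+c_0\ell)}\;\le\;\frac{2L_{\Theta_\theta^{GDA}}\,T^{c_0\ell}}{n\,\ell},
$$
where the last inequality uses the standard integral bound $\sum_{t\ge 1}t^{-(1+a)}\le 1/a$ (for $a=c_0\ell>0$) together with the factor $c_0$ out front, which cancels. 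Since both $\|w_S^T-w_{S'}^T\|$ and $\|\theta_S^T-\theta_{S'}^T\|$ are at most $\Delta_T$, this yields the claimed stability bound $\epsilon^{GDA}\le 2L_{\Theta_\theta^{GDA}}T^{c_0\ell}/(n\ell)$.

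The main obstacle is conceptual rather than computational: unlike gradient descent, GDA is not the gradient flow of any single Lyapunov function because of the ascent sign on $\theta$, so one cannot directly invoke the non-expansiveness of gradient steps on a convex objective. My workaround is to carry out the entire argument at the level of joint norms, where the sign flip disappears under $\|(-\alpha_t u,\beta_t v)\|\le c_0\|(u,v)\|/t$, and to rely only on the joint $\ell$-smoothness of $f$ rather than any convexity. A minor nuisance is that the argument implicitly needs $\theta_S^t,\theta_{S'}^t\in\Theta_\theta^{GDA}$ so that the gradient bound $L_{\Theta_\theta^{GDA}}$ applies, which is exactly the containment assumption invoked in Theorem~\ref{populated_primal_gap}.
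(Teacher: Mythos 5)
Your proposal follows essentially the same route as the paper: you derive the identical one-step recurrence
$$\Delta_{t+1}\le\Bigl(1+\tfrac{c_0\ell}{t}\Bigr)\Delta_t+\tfrac{2c_0 L_{\Theta_\theta^{GDA}}}{nt}$$
for the joint iterate distance (shared samples handled by joint smoothness, the one differing sample by the uniform gradient bound, the ascent sign absorbed into $\max(\alpha_t,\beta_t)=c_0/t$), and then solve it. The only divergence is in how the recurrence is solved, and there your final step has a slip. The paper telescopes the shifted quantity $\delta_t+2L_{\Theta_\theta^{GDA}}/(\ell n)$, noting that $\delta_{t+1}+2L_{\Theta_\theta^{GDA}}/(\ell n)\le(1+c_0\ell/t)\bigl(\delta_t+2L_{\Theta_\theta^{GDA}}/(\ell n)\bigr)$, which immediately yields the stated constant after multiplying the factors $(1+c_0\ell/t)\le\exp(c_0\ell/t)$. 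You instead unroll the recurrence and invoke $\sum_{t\ge1}t^{-(1+a)}\le 1/a$ with $a=c_0\ell$; this is not the standard integral bound and is false in general --- the $t=1$ term alone contributes $1$, and the correct comparison with the integral gives $\sum_{t\ge1}t^{-(1+a)}\le 1+1/a$ (the infinite sum in fact always exceeds $1/a$). With the corrected series bound your argument yields $2L_{\Theta_\theta^{GDA}}T^{c_0\ell}(1+c_0\ell)/(n\ell)$, i.e., the right order and the right dependence on $T$, $n$, $\ell$, but a constant larger by the factor $(1+c_0\ell)$ than the lemma claims. This is a constant-factor blemish rather than a conceptual gap; to recover the exact stated constant, replace the series summation by the paper's telescoping of $\Delta_t+2L_{\Theta_\theta^{GDA}}/(\ell n)$. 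Everything else in your derivation (non-expansiveness of the projections, the $(n-1)/n$ versus $1/n$ split of the gradient difference, bounding each block norm by the joint norm, and the containment $\theta^t\in\Theta_\theta^{GDA}$ needed for the Lipschitz constant) matches the paper's intended argument.
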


{Now, since we have a bound for $\zeta_{gen}^P(A)$ for $\epsilon$-stable Algorithm $A$ in Lemma \ref{main1}, we can substitute the stability bound for GDA from Lemma \ref{lemma:gda_stab} in this expression to get a bound on $\zeta_{gen}^P(GDA)$ for GDA. We do this in the next proposition.} 
We can bound $\zeta_{gen}^P(A_0)$ for GDA by substituting the stability bound in Lemma \ref{lemma:gda_stab} into  Lemma \ref{main1} (letting $\epsilon=\epsilon^{GDA}$). 

\begin{Proposition}
\label{eq:primal_risk_GDA}
Let $c_0=\max\{\alpha_0,\beta_0\}$ and assume that $f(\cdot,\cdot;z)$ is $L_{\Theta_\theta^{GDA}}$-Lipschitz-continuous inside the set $W\times \Theta_\theta^{GDA}$. 
For GDA with diminishing stepsizes $\alpha_0/t,\beta_0/t$ run for $T$ iterations (denoted by $GDA_T$), the generalization error of the primal risk can be bounded by:
$$\zeta_{gen}^P(GDA_T)\le (L_{\Theta_\theta^{GDA}})^{3/2}\sqrt{8  C_p^2/\ell}\sqrt{T^{c_0\ell}/n} + 2L_{\Theta_\theta^{GDA}}^2T^{c_0\ell}/(n\ell).$$
\end{Proposition}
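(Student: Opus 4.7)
The plan is a direct composition of two previously-established bounds: Lemma~\ref{main1}, which controls the primal-risk generalization error of any $\epsilon$-stable algorithm in the nonconvex--concave setting by $\sqrt{4L\ell C_p^2}\sqrt{\epsilon}+\epsilon L$, and Lemma~\ref{lemma:gda_stab}, which supplies the concrete stability estimate $\epsilon^{GDA}\leq 2L_{\Theta_\theta^{GDA}}T^{c_0\ell}/(n\ell)$ for diminishing-stepsize GDA run for $T$ iterations. The proposition is essentially what one obtains by instantiating the first bound with the second, as the preceding sentence in the paper already hints.

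First I would verify that Lemma~\ref{main1}'s hypotheses --- Assumptions~\ref{ass:noncvx_conc}, \ref{Lipschitz-smooth}, and \ref{Lipschitz-continuous} --- are in force, and that the restriction of GDA to the set $W\times \Theta_\theta^{GDA}$ is consistent with the Lipschitz hypothesis stated in the proposition. Since Lemma~\ref{main1}'s proof only invokes the Lipschitz constant of $f$ at GDA iterates and at the max-response $\theta(w)$ for $w\in W$, all of which lie in $W\times\Theta_\theta^{GDA}$ by the standing assumption on the algorithm's trajectory, the constant $L$ in Lemma~\ref{main1} may be replaced by the localized constant $L_{\Theta_\theta^{GDA}}$.

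Next I would plug $\epsilon=\epsilon^{GDA}=2L_{\Theta_\theta^{GDA}}T^{c_0\ell}/(n\ell)$ and $L=L_{\Theta_\theta^{GDA}}$ into the expression $\sqrt{4L\ell C_p^2}\sqrt{\epsilon}+\epsilon L$. The additive $\epsilon L$ term collapses directly to $2L_{\Theta_\theta^{GDA}}^2T^{c_0\ell}/(n\ell)$, which is exactly the second summand of the claimed bound. Pulling the constants of the $\sqrt{4L\ell C_p^2}\cdot\sqrt{\epsilon}$ term inside a single radical and regrouping yields $(L_{\Theta_\theta^{GDA}})^{3/2}\sqrt{8C_p^2/\ell}\sqrt{T^{c_0\ell}/n}$, the first summand.

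The main --- and essentially only --- obstacle is careful bookkeeping with the Lipschitz constants: Lemma~\ref{main1} is stated with the global $L=L(B(0,2C_p+1)\cap\Theta)$ coming from Assumption~\ref{Lipschitz-continuous}, while the proposition uses the (potentially tighter) constant $L_{\Theta_\theta^{GDA}}$ appropriate to the region that the GDA iterates actually visit. Justifying this substitution amounts to checking that every appeal to Lipschitzness inside the proof of Lemma~\ref{main1} can be localized to $W\times\Theta_\theta^{GDA}$; once this is done, the rest is algebraic simplification.
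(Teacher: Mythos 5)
Your proposal coincides with the paper's own proof: the paper establishes this proposition by exactly the one-line substitution you describe, plugging the stability bound $\epsilon^{GDA}\le 2L_{\Theta_\theta^{GDA}}T^{c_0\ell}/(n\ell)$ from Lemma~\ref{lemma:gda_stab} into the bound of Lemma~\ref{main1} with the Lipschitz constant localized to $W\times\Theta_\theta^{GDA}$. One bookkeeping caveat: carrying out the algebra, $\sqrt{4L\ell C_p^2}\cdot\sqrt{2LT^{c_0\ell}/(n\ell)}$ simplifies to $L\sqrt{8C_p^2}\cdot\sqrt{T^{c_0\ell}/n}$ (the $\ell$'s cancel), which matches the stated first summand $(L_{\Theta_\theta^{GDA}})^{3/2}\sqrt{8C_p^2/\ell}\cdot\sqrt{T^{c_0\ell}/n}$ only up to a factor of $\sqrt{L_{\Theta_\theta^{GDA}}/\ell}$; so your derivation is sound, and the mismatch appears to be in the proposition's printed constant rather than in your argument.
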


However, for GDMax, we can not compute a uniform stability bound that vanishes as $n$ goes to infinity. In fact,
 we can show from the following simple example that $\zeta_{gen}^P(\text{GDMax})$ can be a constant that is independent of $n$, which means that for the case where $r(w,\theta)$ is nonconvex-concave, the generalization error of  primal risk of GDMax can be undesirable.
 
\begin{Example}[Constant generalization error of primal risk for GDMax]
\label{example2}
Consider a dataset $S$ with $n$ elements. Define the objective function:
$f(w,\theta;z)=\left( \frac{w}{n^2}-z \right) \theta- \frac{\theta^2}{2n},$
where $w\in W=[-n \sqrt{n},n \sqrt{n}]$, $\theta\in \Theta=\mathbb{R}$ and $z$ is drawn from the uniform distribution over $\{-1/\sqrt{n},1/\sqrt{n}\}$. 
We have
$$r_S(w)=\frac{n^2}{2} \left( \frac{w}{n^2} -\frac{1}{n} \sum_{i=1}^nz_i \right)^2,$$
and $r(w)=\frac{w^2}{2n^2}$. 
Therefore, $\min_{w\in W}r(w)=0$. From the definition of the function $f$ and the sets $W$ and $\Theta$, we have $\ell=1/n^2$, $L=\mathcal{O}(1/\sqrt{n})$. 

Note that one step of GDMax can attain the minimizer of $r_S(w)$ (since it is a one dimensional quadratic problem), i.e., $w_S=n\sum_{i=1}^nz_i$ and $r_S(w_S) = 0$. Furthermore, we have $E_S r(w_S)  = E[\frac{(\sum_{i=1}^nz_i)2}{2}] = 1/2 > 0$. Thus, $\zeta_{gen}^P(\text{GDMax}) =  E[r(w_S) - r_S(w_S)] = 1/2 > 0$ cannot be made small.  
\end{Example}

Therefore, from Proposition \ref{eq:primal_risk_GDA} and Example \ref{example2}, we see that the bound for the expected population primal gap contains the term $\zeta_{gen}^P$ which cannot be bounded for GDMax, whereas can be bounded for GDA which leads us to the conclusion that GDA generalizes better than GDMax for such problems. However, it is possible to bound $\zeta_{gen}^P(\text{GDMax})$ in certain problems, and in this case the other  terms in Theorem \ref{populated_primal_gap} become crucial. We analyze them next.

\subsection{Analyzing the term $II$}\label{subsec:C_e_C_p}

As shown in Example \ref{example:main_ex}, sometimes GDMax can have a good generalization bound for the primal risk. Therefore, we need to analyze the other two terms in Theorem \ref{populated_primal_gap}, i.e., $(\phi_A(M_w)+ \phi_A(C_e(\Theta_\theta^A)))/\psi_A(T)$ and $L_\theta^*C_e(\Theta_\theta^A)/\sqrt{n}$. For these two terms, since $L_\theta^*$ is fixed, the constant $C_e(\Theta_\theta^A)$ is the key term which differentiates the performance of different algorithms.

By definition, the constant $C_e(\Theta_\theta^{GDMax})$ for GDMax is nearly $C_e$ (See Definition \ref{def:capacity}). Therefore, the population primal gap after $T$ steps of GDMax is dominated by $C_e$ if $C_e$ is large. However, the set $\Theta_\theta^{GDA}$ for GDA can be much smaller than $\Theta$, which implies that $C_e(\Theta_\theta^{GDA})$ can be much smaller than $C_e$. This phenomenon  can be seen from Example \ref{example:main_ex}: If we perform one step of GDMax with primal stepsize $1$, we can attain $w^1=w_S$. Then $E_S[r(w^1_S)-\min_{w\in W}r(w)]\ge 0.005$ from \eqref{pop}.  For GDA, we can see that $w^1=1$ after one step of GDA with stepsize $1$. Therefore, GDA generalizes better than GDMax. Generally, we have the following estimate of $C_e(\Theta_\theta^{GDA})$.
\begin{Lemma}\label{gda_bounded}
Let $L_0=\max_{z}\|\nabla f(w_0,\theta_0;z)\|$. Let $c_0=\max\{\alpha_0,\beta_0\}$. If we use diminishing stepsizes $\alpha_t = \alpha_0/t$ and $\beta_t = \beta_0/t$ for GDA, then after $T$ steps we have $\|\theta^t\|\le T^{c_0\ell}L_0/\ell$ for $t\in [T]$.
\end{Lemma}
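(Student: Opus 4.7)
}

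The plan is to derive a one-step recursion for $\|\theta^t\|$ from the GDA update, control the gradient magnitude via Lipschitz smoothness around the initialization $(w_0,\theta_0)$, and then apply a discrete Gronwall-type argument. Starting from the $\theta$-update $\theta^{t+1}=\Pi_\Theta\bigl(\theta^t+\beta_t\nabla_\theta f(w^t,\theta^t;z)\bigr)$, and using that the projection onto the convex set $\Theta$ (containing the initialization) is nonexpansive with respect to $\theta_0$, I would obtain the basic inequality
\begin{equation*}
\|\theta^{t+1}-\theta_0\|\le \|\theta^t-\theta_0\|+\beta_t\bigl\|\nabla_\theta f(w^t,\theta^t;z)\bigr\|.
\end{equation*}

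Next, I would bound the partial gradient by expanding around the initialization using Assumption \ref{Lipschitz-smooth}:
\begin{equation*}
\|\nabla_\theta f(w^t,\theta^t;z)\|\le \|\nabla f(w_0,\theta_0;z)\|+\ell\bigl(\|w^t-w_0\|+\|\theta^t-\theta_0\|\bigr)\le L_0+\ell\bigl(\|w^t-w_0\|+\|\theta^t-\theta_0\|\bigr).
\end{equation*}
To avoid carrying $\|w^t-w_0\|$ as a separate quantity, I would set up a symmetric recursion for the quantity $Z_t:=\|w^t-w_0\|+\|\theta^t-\theta_0\|$ by running the same argument on the $w$-update (using $\alpha_t\le c_0/t$) and adding. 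Since $\|\nabla_w f\|,\|\nabla_\theta f\|\le \|\nabla f\|\le L_0+\ell Z_t$ and $\max(\alpha_t,\beta_t)\le c_0/t$, this produces the recursion
\begin{equation*}
Z_{t+1}\le Z_t\bigl(1+c_0\ell/t\bigr)+c_0L_0/t,\qquad Z_0=0,
\end{equation*}
(absorbing routine factors of 2 into $c_0$) which is the discrete analogue of the Gronwall ODE $\dot Z=(c_0\ell/t)Z+c_0L_0/t$.

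I would then unroll this recursion using $1+x\le e^x$ together with $\sum_{k=s+1}^{t}1/k\le \ln(t/s)$, which yields
\begin{equation*}
Z_t\le c_0L_0\sum_{s=1}^{t}\frac{1}{s}\prod_{k=s+1}^{t}\Bigl(1+\frac{c_0\ell}{k}\Bigr)\le c_0L_0\,t^{c_0\ell}\sum_{s=1}^{t}s^{-(1+c_0\ell)}\le t^{c_0\ell}\cdot \frac{L_0}{\ell}\cdot(1+c_0\ell),
\end{equation*}
where the last step uses $\sum_{s\ge 1}s^{-(1+c_0\ell)}\le 1+1/(c_0\ell)$. Since $\theta_0=0$ by the standing initialization of the algorithm, $\|\theta^t\|\le Z_t$, and the claimed bound $\|\theta^t\|\le T^{c_0\ell}L_0/\ell$ follows (with constants absorbed by the monotonicity $t\le T$).

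The main obstacle is the bookkeeping needed to decouple the interaction between the $w$- and $\theta$-updates: one must commit either to tracking both coordinates jointly through $Z_t$ or to exploiting the boundedness of $W$ (via $\|w^t\|\le M(W)$) to reduce to a one-dimensional recursion in $\|\theta^t\|$; either way, the final step is the same discrete-Gronwall summation, and the polynomial-in-$T$ rate with exponent $c_0\ell$ is inherited from the harmonic sum $\sum 1/k\approx \log k$.
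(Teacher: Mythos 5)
Your proposal is correct and follows essentially the same route as the paper: both bound the distance of the iterate from the initialization via the smoothness inequality $\|\nabla f(w^t,\theta^t;z)\|\le L_0+\ell\,d_t$, derive the recursion $d_{t+1}\le(1+c_0\ell/t)d_t+c_0L_0/t$, and close it with a discrete Gronwall step (the paper telescopes $d_{t+1}+L_0/\ell\le(1+c_0\ell/t)(d_t+L_0/\ell)$ rather than unrolling the sum explicitly, but this is cosmetic). Your final constant carries an extra factor of $(1+c_0\ell)$ relative to the stated bound, but the paper's own product bound $\prod_{k\le T}(1+c_0\ell/k)\le T^{c_0\ell}$ is loose at the same level, so this is not a substantive gap.
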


Therefore, if $C_e$ is much larger than $C_p$, using GDA with $C_p\le T^{c_0\ell}L_0/\ell\le C_e$ is better than GDMax. We make this more concrete in the context of GAN training next.

\subsection{GAN training}

We now study the specific case of GAN training to explore why GDA might generalize  better than GDMax. This is numerically  verified in the literature, such as \cite{farnia2021train}.   Specifically, we revisit Example \ref{example:gan}, and  consider a special case: $D$ is restricted to be a over-parametrized linear function with respect to $\theta$.  Define the descriminator  $D(x)=\Phi^T(x)v+b_0$, where $\Phi(x)=\left[\Phi_1(x),\cdots,\Phi_m(x)\right]^T\in \mathbb{R}^{m}$ is the feature matrix and $b_0\in \mathbb{R}$. 
Also suppose that $G$ is parametrized by $w$ and $G^*=G_{w^*}$. Then the GAN problem can be written as $\min_{w\in W}\max_{\theta\in \Theta}~r(w,\theta),$ where  
$$r(w,\theta)=E_{x\sim P_r}[\phi(v^T \Phi(x)+b_0)]+E_{y\sim P_0}[\phi(1-v^T\Phi(G_w(y))-b_0)].$$
Here $\theta = (v, b_0)$.
Assume that $\sqrt{\sigma_{\max }\left(E_{x\sim P_{G_w}}\Phi(x)\Phi^T(x)\right)}\le \bar{\sigma}_{\max}/\sqrt{m}$, where $\sigma_{\max}(\cdot)$ denotes the largest singular value of a matrix and $\bar{\sigma}_{\max}>0$ is a constant. Also assume that $E_{x\sim P_{G_w}}\Phi(x)\Phi^T(x)$ is full rank. 
Also, 
we assume that $|\phi'(\lambda)|\le L_\phi$ for any $\lambda\in [0,1]$. Therefore, we have $E[\|\nabla_{\theta} f(w,\theta;z)\|^2]\approx L_{\phi} ^2\bar{\sigma}^2_{\max}$.
Then it is reasonable to assume that $\|\nabla f\|\le \mathcal{O}(1)$. 

\begin{Lemma}\label{capacity_gan}
Suppose $\Phi(x)$ is sub-Gaussian and the matrix $$Q_S = \begin{bmatrix} \Phi(x_1)&\Phi(x_2)\cdots &\Phi(x_n)&\Phi(G_w(y_1))&\cdots &\Phi(G_w(y_n)) \end{bmatrix}$$ 
is full column rank ($m>n$) with probability $1$. Then with probability at least $1-C\delta$ with some constant $C$, we have $\|\theta_S(w^*)\|\ge  \Omega(\sqrt{n})$, where $\theta_S(w^*)\in \arg\max_{\theta\in \Theta}r_S(w^*,\theta)$.
\end{Lemma}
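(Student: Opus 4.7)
The key observation is that at $w^*$ we have $G_{w^*}=G^*$, so the $2n$ feature vectors $\{\Phi(x_i)\}_{i=1}^n$ and $\{\Phi(G^*(y_i))\}_{i=1}^n$ are i.i.d.\ samples of the \emph{same} sub-Gaussian distribution; the maximizer of $r_S(w^*,\cdot)$ must nonetheless drive the linear discriminator toward a ``large'' value on the first half and a ``small'' value on the second half. This essentially random-labeling separation requires a large-norm discriminator, which is what will give $\|\theta_S(w^*)\|=\Omega(\sqrt{n})$.

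The plan has three steps. First, since $Q_S$ is full column rank (so $m\ge 2n$), the linear system $Q_S^\top v + b_0\mathbf{1}=t$ is solvable for every target $t\in\mathbb{R}^{2n}$. Choosing $t=(1,\dots,1,0,\dots,0)$ produces a feasible $(v^*,b^*)$ with $r_S(w^*,v^*,b^*)=2\phi(1)$, hence $r_S(w^*,\theta_S(w^*))\ge 2\phi(1)$. Second, let $\theta_S(w^*)=(v_S,b_S)$ and define $a_i := v_S^\top\Phi(x_i)+b_S$, $a'_i := v_S^\top\Phi(G^*(y_i))+b_S$ with averages $\bar a, \bar{a'}$. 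Two applications of Jensen's inequality (concavity of $\phi$) to $\frac{1}{n}\sum_i[\phi(a_i)+\phi(1-a'_i)]\ge 2\phi(1)$ give $\phi((\bar a + 1-\bar{a'})/2)\ge\phi(1)$, and monotonicity of $\phi$ forces $\bar a - \bar{a'}\ge 1$, equivalently
\begin{align*}
v_S^\top \bar d \;\ge\; 1, \qquad \bar d := \tfrac{1}{n}\sum_{i=1}^n[\Phi(x_i)-\Phi(G^*(y_i))].
\end{align*}
Third, because $\Phi(x_i)$ and $\Phi(G^*(y_i))$ are i.i.d.\ draws from the same distribution at $w^*$, $\bar d$ is the mean of $n$ i.i.d.\ centered sub-Gaussian vectors with covariance $2\Sigma$, where $\Sigma := E[\Phi\Phi^\top]$. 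The assumption $\sigma_{\max}(\Sigma)\le \bar\sigma_{\max}^2/m$ implies $\mathrm{tr}(\Sigma)\le \bar\sigma_{\max}^2$, so $E\|\bar d\|^2\le 2\bar\sigma_{\max}^2/n$; Markov's inequality (or a sharper sub-Gaussian tail for vectors) yields $\|\bar d\|\le C\bar\sigma_{\max}/\sqrt{n}$ with probability at least $1-C\delta$. Combining with Cauchy--Schwarz applied to $v_S^\top\bar d\ge 1$ gives $\|\theta_S(w^*)\|\ge\|v_S\|\ge 1/\|\bar d\|=\Omega(\sqrt{n})$ with the same probability.

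The main obstacle I anticipate lies in Step 1: the explicit comparison point $(v^*,b^*)$ realizing the target $(1,\dots,1,0,\dots,0)$ must lie in the constraint set $\Theta$, otherwise the lower bound $r_S(w^*,\theta_S(w^*))\ge 2\phi(1)$ fails. If $\Theta$ restricts $\|v\|$ below the norm required by this specific target, one can shrink the target to $(\tfrac12+\eta,\dots,\tfrac12-\eta)$ for a fixed $\eta>0$; Step 2 then yields $v_S^\top\bar d\ge 2\eta$ in place of $1$, and the final bound remains $\|v_S\|\ge 2\eta/\|\bar d\|=\Omega(\sqrt{n})$ with only a different multiplicative constant. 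A minor but pleasant point is that the bias $b_S$ cancels in Step 2 because only $\bar a - \bar{a'}$ enters the Jensen computation; thus the argument lower-bounds $\|v_S\|$ directly, which suffices since $\|\theta_S(w^*)\|\ge\|v_S\|$.
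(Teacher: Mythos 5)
Your proof is correct (modulo the standard caveats that $\phi$ must be \emph{strictly} increasing for the Jensen step to force $\bar a-\bar a'\ge 1$, and that the comparison point in Step 1 must be feasible in $\Theta$, which you flag and handle), but it takes a genuinely different route from the paper. The paper argues that the maximizing discriminator \emph{exactly} interpolates the labels, i.e., $Q_S^\top v_S = u - b_0 e$ with $u=(1,\dots,1,0,\dots,0)^\top$, notes $\|u-b_0e\|\ge\sqrt{n}/2$ for any $b_0$, and then invokes random matrix theory for sub-Gaussian feature matrices to get $\sigma_{\max}(Q_S)=\mathcal{O}(1)$ with probability $1-C\delta$, whence $\|v_S\|\ge \|u-b_0e\|/\sigma_{\max}(Q_S)=\Omega(\sqrt{n})$. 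You instead use only the weaker fact that the optimal \emph{value} is at least $2\phi(1)$ (achievability of the target, which is where full column rank enters), convert it via Jensen and monotonicity into the single linear constraint $v_S^\top\bar d\ge 1$ with the bias cancelling, and then control $\|\bar d\|$ by elementary second-moment concentration of the mean of i.i.d.\ feature differences. What each buys: the paper's route exposes the geometric picture (a well-conditioned random feature matrix forces large-norm interpolants) but leans on the unproved claim that the maximizer interpolates $(1,\dots,1,0,\dots,0)$ exactly --- not automatic for a strictly increasing, unbounded $\phi$ and an unconstrained linear discriminator --- and on an operator-norm bound for $Q_S$; your route is insensitive to what the maximizer actually does beyond its value, replaces matrix concentration by a scalar Markov bound, and isolates the real mechanism: at $w^*$ the two empirical feature clouds come from the same law, so their mean difference is $O(1/\sqrt{n})$ and any linear functional separating the two averages by a constant must have norm $\Omega(\sqrt{n})$.
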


Now, for $\theta \in \arg\max_{\theta' \in \Theta}r(w^*,\theta')$, it can be easily seen that $v=0,b_0=1/2$ in this case. Therefore, $C_p\approx 1/2$. Finally, combining the previous discussion on GDA in Lemma \ref{gda_bounded}, and using the fact that $C_e$ is large from Lemma \ref{capacity_gan}, we see from Theorem \ref{populated_primal_gap} that GDA can generalize better than GDMax. More detailed discuss of the GAN-training example and Lemma \ref{capacity_gan} can be found in Section \ref{appendix:gda_gdmax}. 

%




\section{Conclusions}\label{sec:conclusion}

In this paper, we first demonstrate the shortcomings of {one popular metric, the primal risk, in terms of   characterizing the generalization behavior of minimax learners.} 
We then propose a new metric, the primal gap, {whose generalization error}  overcomes these shortcomings and captures the generalization behavior of algorithms that solve stochastic  minimax problems. Finally, we use this newly proposed metric to study the generalization behavior of two different algorithms -- GDA and GDMax, and study cases where GDA has a better generalization behavior than GDMax. {Future directions include further investigation of the proposed new metric, the primal gap, and deriving its (tighter) generalization error bounds in other structured stochastic minimax optimization problems in machine learning.} 


{\small
\bibliographystyle{plainnat}   
\bibliography{refs} }



\appendix


%



\section{Existing Related Results}

From \citep{farnia2021train}, we have the following theorem showing the connection between stability and generalization for minimax problems.

\begin{Theorem}[\citep{farnia2021train}]
\label{thm:farnia_1}
Consider an Algorithm $A$ which is $\epsilon$-stable. We have the following two claims:
\begin{enumerate}
\item If the maximization and the expectation can be swapped when computing $r(w)$, then $$E_SE_A[\zeta_{gen}^P(A)]\le \epsilon.$$ 
\item If $f(\cdot,\cdot;z)$ is nonconvex-strongly-concave and $f$ is $\mu$-strongly-concave with respect to $\theta$, then $$E_SE_A[\zeta_{gen}^P(A)]\le L\sqrt{\kappa^2+1}\epsilon.$$
\end{enumerate}
\end{Theorem}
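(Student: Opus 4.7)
For Claim 1, the plan is to leverage the reduction already carried out earlier in the paper. When the maximization and expectation can be swapped, Theorem \ref{exc} shows that the primal risk $r(w)$ and empirical primal risk $r_S(w)$ are exactly the population and empirical averages of the per-sample loss $f_{\max}(w;z) := \max_{\theta\in\Theta} f(w,\theta;z)$, and that $f_{\max}(\cdot;z)$ inherits the Lipschitz continuity of $f$ in $w$. The bound then follows immediately by applying Theorem \ref{hardt} of Hardt et al.\ to the reduced stochastic minimization problem with loss $f_{\max}$, so that $|E_SE_A[\zeta_{gen}^P(A)]| \le \epsilon$ (with the Lipschitz constant normalized to $1$ in the statement).

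For Claim 2, the key is to construct an appropriate surrogate loss and transfer the minimization-style stability argument through it. First, I would use $\mu$-strong concavity in $\theta$ together with $\ell$-smoothness of $f$ to show that the population inner maximizer $\theta^*(w) := \arg\max_{\theta\in\Theta} r(w,\theta)$ is unique and $\kappa$-Lipschitz in $w$ with $\kappa = \ell/\mu$. This is the standard implicit-function argument: combining the optimality condition $\nabla_\theta r(w,\theta^*(w)) = 0$ (or its variational-inequality analog on $\Theta$), strong monotonicity of $-\nabla_\theta r(w,\cdot)$, and joint smoothness of $r$ yields $\mu\|\theta^*(w_1) - \theta^*(w_2)\| \le \ell\|w_1 - w_2\|$.

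Next, define the surrogate loss $\tilde\ell(w;z) := f(w,\theta^*(w);z)$, where $\theta^*(w)$ depends only on $w$ (through the population distribution). By construction $E_z[\tilde\ell(w;z)] = r(w,\theta^*(w)) = r(w)$, and by definition of $r_S$ as an empirical maximum, $\frac{1}{n}\sum_i \tilde\ell(w;z_i) \le \max_\theta \frac{1}{n}\sum_i f(w,\theta;z_i) = r_S(w)$. Combining these gives the pointwise one-sided inequality
\begin{equation*}
r(w_S^A) - r_S(w_S^A) \le E_z[\tilde\ell(w_S^A;z)] - \frac{1}{n}\sum_{i=1}^n \tilde\ell(w_S^A;z_i),
\end{equation*}
and the right-hand side is exactly the generalization gap of the (stochastic) minimization problem with loss $\tilde\ell$. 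Using joint $L$-Lipschitz continuity of $f$ together with the $\kappa$-Lipschitz property of $\theta^*$, we have $|\tilde\ell(w_1;z) - \tilde\ell(w_2;z)| \le L\sqrt{\|w_1-w_2\|^2 + \kappa^2\|w_1-w_2\|^2} = L\sqrt{1+\kappa^2}\,\|w_1-w_2\|$. Applying Theorem \ref{hardt} to $\tilde\ell$ for the $\epsilon$-stable algorithm $A$ then produces the claimed bound $L\sqrt{\kappa^2+1}\,\epsilon$.

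The main obstacle will be cleanly establishing the $\kappa$-Lipschitz continuity of $\theta^*(w)$ in the constrained setting, where the first-order condition becomes a variational inequality on $\Theta$; the argument still goes through via strong monotonicity of the projected gradient, but it requires a bit more care than the unconstrained case. A secondary subtlety worth flagging in the write-up is that $\tilde\ell$ uses the \emph{population} inner maximizer $\theta^*(w)$, which depends on the unknown distribution $P_z$; this is not a problem because $\tilde\ell$ is used purely as an analytical device to upper bound $r(w_S^A) - r_S(w_S^A)$, not as an objective ever computed by the learner.
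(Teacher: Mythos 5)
Your proposal is correct and follows essentially the same route the paper takes: Claim 1 is the reduction to a stochastic minimization with loss $f_{\max}$ exactly as in Theorem \ref{exc}, and Claim 2 is the reduction via the unique, $\kappa$-Lipschitz best response $\theta^*(w)$ and the surrogate loss $f(w,\theta^*(w);z)$, which is precisely the argument the paper sketches (citing \citep{farnia2021train} and \citep{lin2020gradient}) at the start of the proof of Lemma \ref{main1}. Your explicit one-sided inequality $\frac{1}{n}\sum_i \tilde\ell(w;z_i)\le r_S(w)$ is a welcome bit of extra care that the paper's sketch glosses over, but it does not change the approach.
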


\begin{Remark}
In \citep{lei2021stability}, the authors proved a generalization bound in a weak sense, i.e., they consider the weak duality gap:
$$(\max_{\theta\in \Theta}E_SE_Ar(w_S^A,\theta)-\min_{w\in W}E_SE_Ar(w,\theta_S^A))-(\max_{\theta\in \Theta}E_SE_Ar_S(w_S^A,\theta)-\min_{w\in W}E_SE_Ar_S(w,\theta_S^A)).$$
However, notice that the expectation is inside the min and max operators. It does not deal with the coupling of the maximization and expectation.
\end{Remark}
\begin{Remark}
According to Theorem \ref{thm:farnia_1}, the generalization bound for $\zeta_{gen}^P$ scales with the condition number $\kappa_\theta$, and therefore cannot give useful bounds in the absence of strong concavity (when $\kappa_\theta \rightarrow \infty$).
\end{Remark}

\begin{Remark}
The generalization bounds for  $\zeta_{gen}^P$ of algorithms for problems in terms of stability without strong concavity is still open to the best of our knowledge. As mentioned in \citep{lei2021stability}, finding generalization bounds without the strong concavity assumption is an interesting open problem.
\end{Remark}

\section{Analysis of Example \ref{example:main_ex}}
\label{sec:example_analysis}

In this section, we analyze the toy example given in Example \ref{example:main_ex}.

\begin{Proposition}
For the risk function and data distribution given in Example \ref{example:main_ex}, we have
$$E_S[r(w)-r_S(w)]\le 0$$
for any $w\in W$.
\end{Proposition}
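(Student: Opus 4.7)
The plan is to observe that the inequality $r(w) \ge r_S(w)$ actually holds \emph{pointwise} in $S$ and $w$, not merely in expectation. Since $r(w)-r_S(w) \le 0$ for every realization of $S$, taking expectation gives the result immediately. So the work is to make this pointwise comparison explicit from the closed-form expressions.

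First, I would compute $r(w)$. From \eqref{rw},
\begin{equation*}
r(w,\theta) = \tfrac{1}{2}w^2 - \tfrac{\theta^2}{2n^2}w - w.
\end{equation*}
Since $w \in W = [0,1]$, the coefficient of $\theta^2$ is $\le 0$, so the inner maximization in $\theta \in \Theta$ is attained at $\theta=0$, yielding $r(w)=\tfrac{1}{2}w^2 - w$.

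Next, I would compute $r_S(w)$. Letting $\bar z := \tfrac{1}{n}\sum_{i=1}^n z_i$,
\begin{equation*}
r_S(w,\theta) \;=\; \tfrac{1}{2}w^2 - w \;+\; w\!\left(\bar z\,\theta - \tfrac{\theta^2}{2n^2}\right).
\end{equation*}
Hence
\begin{equation*}
r_S(w) \;=\; \tfrac{1}{2}w^2 - w \;+\; w\cdot \max_{\theta\in\Theta}\!\left(\bar z\,\theta - \tfrac{\theta^2}{2n^2}\right).
\end{equation*}
The bracketed expression takes the value $0$ at $\theta=0 \in \Theta$, so its maximum over $\Theta$ is nonnegative. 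Since $w \ge 0$, this yields $r_S(w) \ge \tfrac{1}{2}w^2 - w = r(w)$ for every dataset $S$ and every $w \in W$. Therefore $r(w)-r_S(w)\le 0$ deterministically, and in particular $E_S[r(w)-r_S(w)]\le 0$.

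There is essentially no obstacle here: the proposition is a direct consequence of the fact that $r_S(w,\cdot)$ differs from $r(w,\cdot)$ only by the extra linear-in-$\theta$ term $w\bar z\,\theta$, which can only enlarge the inner maximum (it is nonnegative at $\theta=0$). The one point to note is that the formula $r(w,\theta)$ in \eqref{rw} uses $E[z]=0$, which holds thanks to the symmetric truncation of the Gaussian; this is used only to identify $r(w,\theta)$ and does not need to be revisited in the argument itself.
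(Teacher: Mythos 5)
Your proof is correct and follows essentially the same route as the paper: the paper's argument is exactly the pointwise observation $r_S(w)\ge r_S(w,0)=\tfrac{1}{2}w^2-w=r(w)$ for every realization of $S$, which is what you establish (with the maximizing $\theta=0$ for the population problem and the sign of $w$ made slightly more explicit). No gaps.
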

\begin{proof}
For a fixed $w$, $r(w)=w^2/2-w$. 
On the other hand, 
\begin{eqnarray}
r_S(w)&=&\max_{\theta\in \Theta}r(w,\theta)\\
&\ge& r_S(w,0)\\
&=&r(w).
\end{eqnarray}
Therefore, we have the desired result.
\end{proof}
Next, we prove that $|\sum_{i=1}^nz_i|$ will stay in the interval $[0.5, \lambda]$ with high probability.
\begin{Lemma}\label{interval}
For large enough $\lambda>2$, we have
$$\mathrm{Pr}\bigg(\bigg|\sum_{i=1}^nz_i\bigg|\in [0.5,\lambda]\bigg)>0.4,\qquad \mathrm{Pr}\bigg(\bigg|\sum_{i=1}^nz_i\bigg|\in [2,\lambda]\bigg)>0.01.$$
\end{Lemma}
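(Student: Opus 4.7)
The plan is to reduce the problem to a standard Gaussian calculation by arguing that the truncation at level $\lambda \log n / \sqrt{n}$ is so far in the tail that it changes $\sum_i z_i$ only on a vanishingly small event. Under the natural interpretation that $y_i \sim N(0, 1/n)$ (i.e., $1/\sqrt{n}$ is the standard deviation), the sum $Y := \sum_{i=1}^n y_i$ is exactly a standard normal by independence.

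First I would compute the two target Gaussian probabilities directly: writing $\Phi$ for the standard normal CDF,
\begin{align}
\Pr\!\left(|Y|\in[0.5,\lambda]\right) &= 2\bigl(\Phi(\lambda) - \Phi(0.5)\bigr), \nonumber \\
\Pr\!\left(|Y|\in[2,\lambda]\right) &= 2\bigl(\Phi(\lambda) - \Phi(2)\bigr). \nonumber
\end{align}
As $\lambda \to \infty$, the first tends to $2(1-\Phi(0.5)) \approx 0.617$ and the second to $2(1-\Phi(2)) \approx 0.0456$. Hence I can fix $\lambda_0 > 2$ so that for all $\lambda \geq \lambda_0$ both expressions exceed, say, $0.42$ and $0.02$ respectively, leaving room to absorb a small truncation error.

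Next I would control the event $\mathcal{E} := \{z_i = y_i \text{ for all } i\}$. Since $\sqrt{n}\, y_i \sim N(0,1)$, the standard sub-Gaussian tail bound yields
$$\Pr(z_i \neq y_i) \leq \Pr\!\left(|\sqrt{n}\, y_i| \geq \lambda \log n\right) \leq 2 e^{-\lambda^2 \log^2 n /2},$$
so a union bound gives $\Pr(\mathcal{E}^c) \leq 2n \, e^{-\lambda^2 \log^2 n /2}$, which is $o(1)$ as $n \to \infty$ for any fixed $\lambda > 0$. On $\mathcal{E}$ we have $\sum_i z_i = Y$, so
$$\Pr\!\left(\left|\sum_i z_i\right| \in [0.5,\lambda]\right) \geq \Pr(|Y|\in[0.5,\lambda]) - \Pr(\mathcal{E}^c) > 0.42 - o(1) > 0.4,$$
for $\lambda \geq \lambda_0$ and $n$ sufficiently large, and the bound for $[2,\lambda]$ follows identically.

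No real obstacle is expected: the argument is a straightforward Gaussian-plus-truncation estimate. The only mild care needed is choosing $\lambda_0$ large enough so that the slack produced by taking $\lambda \to \infty$ in the Gaussian CDF computation strictly dominates the tiny truncation error, which is automatic because the latter decays super-polynomially in $n$ while the former is a fixed constant.
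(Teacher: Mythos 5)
Your proposal is correct and follows essentially the same route as the paper's proof: reduce to the standard normal $\sum_i y_i \sim N(0,1)$, read off the two interval probabilities from the Gaussian CDF, and absorb the truncation by a union bound over the $n$ samples together with the Gaussian tail estimate, which is super-polynomially small in $n$. The only cosmetic differences are that you compute $2(\Phi(\lambda)-\Phi(\cdot))$ explicitly rather than citing a normal table, and you state more cleanly than the paper does that the conclusion also requires $n$ sufficiently large.
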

\begin{proof}
Let $y_i\sim N(0,1/\sqrt{n}),i=1,\cdots,n$ be $n$ i.i.d. variables. Then $\sum_{i=1}^ny_i\sim N(0,1)$.
According to the table of Normal distribution, we have $\mathrm{Pr}(|\sum_{i=1}^ny_i|\in [0.5,\lambda])\geq 0.41$.
By the definition of $z_i$, we have 
$$\mathrm{Pr}(|\sum_{i=1}^nz_i|\in [0.5,\lambda])\geq \mathrm{Pr}(|\sum_{i=1}^ny_i|\in [0.5,\lambda],|y_i|<3\log n/\sqrt{n})+\mathrm{Pr}(\max_{i\in [n]}(|y_i|)\geq 3\log n/\sqrt{n}).$$
For the first term, we have
\begin{align*}
&\mathrm{Pr}(|\sum_{i=1}^ny_i|\in [0.5,\lambda],|y_i|<3\log n/\sqrt{n})\\
&\geq \mathrm{Pr}(|\sum_{i=1}^ny_i|\in [0.5,\lambda])-\mathrm{Pr}(\max_{i\in [n]}(|y_i|)\ge 3\log n/\sqrt{n})\\
&\geq 0.41-\sum_{i=1}^n\mathrm{Pr}(|y_i|\geq 3\log n/\sqrt{n})\\
&\geq 0.41-ne^{-\gamma 9\log^2n}\geq 0.41-1/n^{\lambda\gamma-1}.
\end{align*}
Taking $\lambda$ sufficiently large yields the desired result, where the first inequality is because of the union bound and the second inequality is due  to the tail bound of Normal distribution.
Therefore, $\mathrm{Pr}(|\sum_{i=1}^nz_i|\in [0.5,\lambda])>0.4$ for sufficiently large $n$. The second statement follows similarly, noting from the table of Normal distribution that $\mathrm{Pr}(|\sum_{i=1}^ny_i|\in [0.5,\lambda])\geq 0.046$.  
\end{proof}
\begin{Proposition}
For sufficiently large $\lambda>0$, we have
$$E_S[r(w_S)-\min_{w\in W}r(w)]\ge 0.001.$$ 
\end{Proposition}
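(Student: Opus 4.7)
The proof plan is to identify an event of non-negligible probability on which the empirical minimizer $w_S$ is forced to sit at a boundary far from $w^\ast$, and then use the fact that $r(w)-r(w^\ast)\ge 0$ everywhere else in $W$ to lower-bound the expectation.

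First, I would nail down the reference objects. From the closed form $r(w,\theta)=\tfrac12 w^2-(\tfrac{1}{2n^2}\theta^2+1)w$ displayed in Example \ref{example:main_ex}, taking the maximum over $\theta\in\Theta$ (noting $w\ge 0$ and $-w\theta^2/(2n^2)\le 0$) gives $r(w)=\tfrac12 w^2-w$, so $w^\ast=1$ and $r(w^\ast)=-\tfrac12$. Hence for every $w\in W$,
\begin{equation*}
r(w)-r(w^\ast)=\tfrac12(w-1)^2\ge 0,
\end{equation*}
which means it suffices to isolate a positive-probability event on which $w_S$ is bounded away from $1$.

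Next, I would compute $w_S$ explicitly. For $w\ge 0$, maximizing $r_S(w,\theta)=\tfrac12 w^2-w\bigl(\tfrac{1}{2n^2}\theta^2-\bar z\theta+1\bigr)$ over $\theta\in[-\lambda n,\lambda n]$ amounts to minimizing the inner quadratic $q(\theta)=\tfrac{1}{2n^2}\theta^2-\bar z\theta+1$ (where $\bar z=\tfrac1n\sum_i z_i$). Its unconstrained minimizer $\theta^\circ=n^2\bar z$ is feasible precisely when $|n\bar z|\le\lambda$. On this feasibility event the minimum value equals $g(\bar z):=1-(n\bar z)^2/2$, and hence $r_S(w)=\tfrac12 w^2-wg(\bar z)$ with empirical minimizer $w_S=\mathrm{clip}_{[0,1]}\bigl(g(\bar z)\bigr)$.

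Now I would apply the second statement of Lemma \ref{interval}: for $\lambda$ sufficiently large, $\Pr\bigl(|n\bar z|\in[2,\lambda]\bigr)>0.01$. On this event the unconstrained optimizer is feasible (so the formula for $g$ above is valid) and $g(\bar z)=1-(n\bar z)^2/2\le 1-2=-1<0$, forcing $w_S=0$. Consequently $r(w_S)-r(w^\ast)=\tfrac12(0-1)^2=\tfrac12$ there. Combining with non-negativity on the complementary event yields
\begin{equation*}
E_S\bigl[r(w_S)-\min_{w\in W}r(w)\bigr]\ge \tfrac12\cdot\Pr\bigl(|n\bar z|\in[2,\lambda]\bigr)\ge 0.005>0.001,
\end{equation*}
completing the proof. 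The argument has no real obstacle; the only care needed is to verify that $|n\bar z|\le\lambda$ (automatic on the event of interest) so that the inner maximizer is given by the simple quadratic formula rather than the boundary branch of $g$.
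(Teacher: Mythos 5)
Your proof is correct and follows essentially the same route as the paper's: condition on a positive-probability event from Lemma \ref{interval} on which $w_S$ is forced away from $w^\ast=1$, then use non-negativity of $r(w)-r(w^\ast)$ on the complementary event. The only difference is that you invoke the second part of that lemma ($|\sum_i z_i|\in[2,\lambda]$, probability $>0.01$, which pins $w_S=0$ and gives a gap of $1/2$), whereas the paper uses the first part ($|\sum_i z_i|\in[0.5,\lambda]$, probability $>0.4$, giving $w_S\le 0.9$ and a gap of $0.005$); both yield a bound comfortably above $0.001$.
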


\begin{proof}
If $|\sum_{i=1}^nz_i|\in  [0.5,\lambda]$, we have
$$w_S=\max(0,1-(\sum_{i=1}^nz_i)^2/2)\le 0.9.$$
In this case, we have
\begin{equation}\label{0.05}
r(w_S)-\min_{w\in W}r(w)\ge 0.005,
\end{equation}
by direct calculation.
Therefore, we have
\begin{eqnarray}
&&E_S[r(w_S)-\min_{w\in W}r(w)]\\
&\ge& \mathrm{Pr}(|\sum_{i=1}^nz_i|\in [0.5,\lambda])\cdot 0.05+\mathrm{Pr}(|\sum_{i=1}^nz_i|\notin [0.5,\lambda])\cdot 0\\
&\ge&0.02,
\end{eqnarray}
where the first inequality is because of \eqref{0.05} and the fact that $r(w_S)-\min_{w\in W}r(w)\ge 0$ for any $S$.
\end{proof}
\begin{Proposition}\label{example-min-gap}
For sufficiently large $\lambda>0$,  we have:  
$$E_S[\min_{w\in W}r_S(w)-\min_{w\in W}r(w)]\ge 0.005$$
for Example \ref{example:main_ex}.
\end{Proposition}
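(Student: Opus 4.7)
The plan is to compute $\min_{w\in W} r_S(w)$ in closed form on a high-probability event, compare it with $\min_{w\in W} r(w)=-1/2$, and then apply the tail estimate from Lemma \ref{interval}.

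First, I would solve the inner maximization. For fixed $w\in [0,1]$, the mapping $\theta\mapsto r_S(w,\theta)$ is $-w$ times a convex quadratic in $\theta$, with unconstrained maximizer $\theta^\star(S) = n\sum_{i=1}^n z_i$. Whenever $|\sum_i z_i|\le \lambda$, this $\theta^\star(S)$ lies in $\Theta=[-\lambda n,\lambda n]$, and a direct computation gives
\begin{equation*}
r_S(w) \;=\; \tfrac{1}{2}w^2 \;-\; w\Big(1 - \tfrac{1}{2}\big(\textstyle\sum_i z_i\big)^2\Big).
\end{equation*}
Minimizing this quadratic over $w\in[0,1]$: if $|\sum_i z_i|\ge \sqrt{2}$ the linear coefficient is non-negative, so the minimum is $0$ attained at $w_S=0$; otherwise $w_S = 1-\tfrac{1}{2}(\sum_i z_i)^2\in[0,1]$ and the minimum equals $-\tfrac{1}{2}(1-\tfrac{1}{2}(\sum_i z_i)^2)^2$.

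Next, I would use the global inequality $r_S(w)\ge r(w)$ for every $w\in W$, which was already noted in the first proposition of this appendix section (it follows since $r_S(w,0)=-w=r(w,0)$ and hence $r_S(w)\ge r(w,0)=r(w)$). This implies $\min_{w\in W} r_S(w)\ge \min_{w\in W} r(w) = -1/2$ almost surely, so the empirical minimum gap is non-negative pointwise in $S$. Therefore, discarding the (negligible) event on which $|\sum_i z_i|>\lambda$ is harmless, and
\begin{equation*}
E_S\!\left[\min_{w\in W} r_S(w) - \min_{w\in W} r(w)\right] \;\ge\; \mathrm{Pr}\!\left(\big|\textstyle\sum_i z_i\big|\in[2,\lambda]\right)\cdot \tfrac{1}{2},
\end{equation*}
because on that event $|\sum_i z_i|\ge \sqrt{2}$ and $|\sum_i z_i|\le \lambda$, so the analysis above yields $\min_w r_S(w)=0$ and the gap is exactly $1/2$.

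Finally, I apply the second tail bound in Lemma \ref{interval}, giving $\mathrm{Pr}(|\sum_i z_i|\in[2,\lambda])>0.01$ for large enough $\lambda$, which combined with the above inequality yields the claimed lower bound $E_S[\min_w r_S(w)-\min_w r(w)]\ge 0.01\cdot \tfrac{1}{2}=0.005$. The main technical point to handle carefully is the interaction between the constraint $\theta\in[-\lambda n,\lambda n]$ and the unconstrained optimum $n\sum z_i$: one needs $\lambda$ large enough so that the typical realization $|\sum z_i|=O(1)$ keeps $\theta^\star(S)$ interior, which is precisely the content of Lemma \ref{interval}; the pointwise inequality $r_S(w)\ge r(w)$ rules out any harmful negative contribution from the low-probability event $|\sum z_i|>\lambda$.
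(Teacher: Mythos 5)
Your proposal is correct and follows essentially the same route as the paper: compute $r_S(w)$ explicitly on the event $|\sum_i z_i|\in[2,\lambda]$ to get $\min_w r_S(w)=0$ versus $\min_w r(w)=-1/2$, use the pointwise inequality $r_S(w)\ge r(w)$ to discard the complementary event, and invoke the second tail bound of Lemma \ref{interval}. Your write-up is somewhat more careful about the interaction between the constraint set $\Theta$ and the unconstrained maximizer, but the argument is the same.
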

\begin{proof}
If $|\sum_{i=1}^nz_i|\ge \lambda>2$, we have
$w_S=0$ and hence $r_S(w_S)=0$.
If $|\sum_{i=1}^nz_i|\le \lambda$, we have
$$r_S(w_S)-r(w^*)\ge r_S(w_S)-r(w_S)=w_S(\sum_{i=1}^nz_i)^2/2\ge 0.$$
Therefore, $\min_{w\in W}r_S(w)\ge \min_{w\in W}r(w)$ for any $S$. 
By  Lemma \ref{interval}, we can prove that $\mathrm{Pr}(|\sum_{i=1}^nz_i|\in [2,\lambda])\ge 0.01$ for sufficiently large $\lambda$.
Notice that for $|\sum_{i=1}^nz_i|\in [2,\lambda]$, $r_S(w_S)-\min_{w\in W}r(w)=1/2$. Therefore, we have
$$E_S[\min_{w\in W}r_S(w)-\min_{w\in W}r(w)]\ge \mathrm{Pr}(|\sum_{i=1}^nz_i|\in [2,\lambda])\cdot 1/2\ge 0.005.$$
This completes the proof. 
\end{proof}

\section{Proofs in Section \ref{sec:metric}}
\subsection{Proof of Lemma \ref{main1}}
In this subsection, we assume that $A$ is an $\epsilon$-stable algorithm. For any $w\in W$, let $\Theta_S(w)=\arg\max_{\theta\in \Theta}r_S(w,\theta)$ and $\Theta(w)=\arg\max_{\theta\in \Theta}r(w,\theta)$ be the solution sets of the problems. 
Let $\theta(w)$ be any element in $\Theta(w)$. 
Then
\begin{eqnarray*}
E_AE_S[r(w_S^A)-r_S(w_S^A)]&=&E_AE_S[r(w_S^A,\theta(w_S^A))-r_S(w_S^A,\theta_S(w_S^A))]\\
&\le&E_AE_S[r(w_S^A,\theta(w_S^A))-r_S(w_S^A,\theta(w_S^A))],
\end{eqnarray*}
where the inequality is because $r_S(w_S^A,\theta_S(w_S^A))\ge r_S(w_S^A,\theta)$ for any $\theta$.
Let $f$ be $\mu$-strongly concave with respect to $\theta$. We denote the condition number by $\kappa_\theta = \ell_{\theta\theta}/\mu$.

In the strongly concave case, $\Theta(w)$ has a unique element $\theta(w)$, which is $\kappa_\theta$-Lipschitz continuous with respect to $w$ (see \citep{lin2020gradient}).

Then, defining $\tilde{f}(w,z)=f(w,\theta(w);z)$, the minimax problem reduces to the usual minimization problem on the function $\tilde{f}$.  The stability and the Lipschitz continuity of $\theta(w)$ with respect to $w$ yield the generalization bound of $L\sqrt{\kappa^2+1}\epsilon$. This is the result shown in Theorem $1$ of \citep{farnia2021train}.

However, if the maximization problem is not strongly concave, we lose the Lipschitz continuity and the uniqueness.
To overcome this difficulty, we define an approximate maximizer $\bar{\theta}(w)$ to $r(w,\theta)$.
Concretely speaking, we define  $\bar{\theta}(w)$ to be the point after $s$ steps of gradient ascent for the function $r(w,\cdot)$ with a stepsize $1/\ell_{\theta\theta}$ and being  initialized at $0$.
Then we have the following lemma:

\begin{Lemma}\label{approximate1}
For any $w \in W$, we have\footnote{\label{footnote:ref}For point 2, it holds when $s > 0$. For $s = 0$, we have the bound $r(w)-r(w,\bar{\theta}(w))\le  \ell_{\theta\theta} C_p^2$. We do not separate this degenerate case for ease of presentation.}
\begin{enumerate}
\item $\|\bar{\theta}(w)-\bar{\theta}(w')\|\le s \frac{\ell}{\ell_{\theta\theta}}\|w-w'\|$.
\item $r(w)-r(w,\bar{\theta}(w))\le \ell_{\theta\theta} C_p^2/s$. 
\end{enumerate}
\end{Lemma}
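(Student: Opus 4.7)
\textbf{Proof proposal for Lemma \ref{approximate1}.}

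Let $\theta_0(w) = 0$ and define the projected-gradient-ascent iterates on the function $r(w, \cdot)$ by
$\theta_{k+1}(w) = \Pi_\Theta\bigl(\theta_k(w) + \tfrac{1}{\ell_{\theta\theta}}\nabla_\theta r(w, \theta_k(w))\bigr)$, so that $\bar{\theta}(w) = \theta_s(w)$. Introduce the (unprojected) step operator $G_w(\theta) := \theta + \tfrac{1}{\ell_{\theta\theta}}\nabla_\theta r(w, \theta)$. Two basic ingredients will do all the work: (i) since $r(w,\cdot)$ is concave and has $\ell_{\theta\theta}$-Lipschitz gradient in $\theta$, the operator $G_w$ is nonexpansive in $\theta$, i.e.\ $\|G_w(\theta_1)-G_w(\theta_2)\|\le \|\theta_1-\theta_2\|$; and (ii) by the joint $\ell$-Lipschitz-gradient assumption on $f$ (hence on $r$), $\|G_w(\theta)-G_{w'}(\theta)\|\le \tfrac{\ell}{\ell_{\theta\theta}}\|w-w'\|$. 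Combined with the nonexpansiveness of $\Pi_\Theta$, these give the one-step bound
\[
\|\theta_{k+1}(w)-\theta_{k+1}(w')\|\le \|\theta_k(w)-\theta_k(w')\| + \tfrac{\ell}{\ell_{\theta\theta}}\|w-w'\|.
\]
Induction starting from $\theta_0(w)=\theta_0(w')=0$ immediately yields Part~1: $\|\bar{\theta}(w)-\bar{\theta}(w')\|\le s\tfrac{\ell}{\ell_{\theta\theta}}\|w-w'\|$.

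For Part~2, I will invoke the standard convergence rate of projected gradient ascent on a smooth concave function. Pick any $\theta^\star(w)\in\arg\max_{\theta\in\Theta}r(w,\theta)$ whose distance to $0$ attains $\mathrm{dist}(0,\Theta(w))$; by Definition~\ref{def:capacity} we have $\|\theta^\star(w)\|=\mathrm{dist}(0,\Theta(w))\le C_p$. The classical analysis gives
\[
r(w)-r(w,\theta_s(w)) \le \frac{\ell_{\theta\theta}\,\|\theta_0-\theta^\star(w)\|^2}{s} \le \frac{\ell_{\theta\theta}\, C_p^2}{s},
\]
which is exactly Part~2. (The usual bound carries a factor $1/2$, which is absorbed into the stated constant.)

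Neither step should pose a serious obstacle: the main subtlety is simply being careful about the initialization $\theta_0=0$ and the fact that we are free to take $\theta^\star(w)$ to be the maximizer of minimum norm, which lets $C_p$ (and not the diameter of $\Theta$) enter the bound. For $s=0$ a separate one-line argument handles Part~2 as noted in footnote~\ref{footnote:ref}: $r(w)-r(w,0) \le \ell_{\theta\theta} \|\theta^\star(w)\|^2/2 \le \ell_{\theta\theta} C_p^2$, using smoothness and optimality of $\theta^\star(w)$.
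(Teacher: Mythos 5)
Your proposal is correct and follows essentially the same route as the paper: Part~1 by induction on the iterates, decomposing each step into a nonexpansive gradient-ascent map in $\theta$ plus a perturbation of size $\tfrac{\ell}{\ell_{\theta\theta}}\|w-w'\|$ coming from the $\ell$-Lipschitz gradient, and Part~2 by the standard $O(\ell_{\theta\theta}\|\theta_0-\theta^\star\|^2/s)$ convergence rate of gradient ascent on smooth concave functions together with the choice of the minimum-norm maximizer so that $\|\theta^\star(w)\|\le C_p$. The only cosmetic difference is that you carry the projection $\Pi_\Theta$ explicitly (handled by its nonexpansiveness), whereas the paper writes the unprojected iteration; this does not change the argument.
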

\begin{proof}
To prove the first part, let $\theta_0=\theta_0'=0$. Define $\theta_t,\theta_t'$ recursively as follows:
$$\theta_{t+1}=\theta_t+\nabla_\theta r(w,\theta_t)/\ell_{\theta\theta}$$
and
$$\theta_{t+1}'=\theta'_t+\nabla_\theta r(w',\theta_t')/\ell_{\theta\theta}.$$
We prove $\|\theta_t-\theta_t'\|\le t \frac{\ell}{\ell_{\theta\theta}} \|w-w'\|$ by induction.
For $t=0$, $\theta_0-\theta_0'=0$. Assume the induction hypothesis $\|\theta_{t-1}-\theta_{t-1}'\|\le (t-1) \frac{\ell}{\ell_{\theta\theta}}\|w-w'\|$ holds. We have
\begin{align*}
\|\theta_t-\theta_t'\| &=\|(\theta_{t-1}+\nabla_\theta r(w,\theta_{t-1})/\ell_{\theta\theta})-(\theta_{t-1}'+\nabla_\theta r(w,\theta_{t-1}')/\ell_{\theta\theta}) \\
& \qquad \qquad +(\nabla_\theta r(w,\theta_{t-1}')-\nabla_\theta r(w',\theta_{t-1}'))/\ell_{\theta\theta}\| \\
&\le\|(\theta_{t-1}+\nabla_\theta r(w,\theta_{t-1})/\ell_{\theta\theta})-(\theta_{t-1}'+\nabla_\theta r(w,\theta_{t-1}')/\ell_{\theta\theta})\| \\
& \qquad \qquad +\|(\nabla_\theta r(w,\theta_{t-1}')-\nabla_\theta r(w',\theta_{t-1}'))/\ell_{\theta\theta}\| \\
&\le\|\theta_{t-1}-\theta_{t-1}'\|+\ell \|w-w'\|/\ell_{\theta\theta}\\
&\le (t-1) \frac{\ell}{\ell_{\theta\theta}}\|w-w'\|+\frac{\ell}{\ell_{\theta\theta}}\|w-w'\|\\
&= t\frac{\ell}{\ell_{\theta\theta}}\|w-w'\|,
\end{align*} 
where the first inequality follows from the triangle inequality, the second inequality follows from non-expansiveness of gradient ascent for concave functions and the $\ell$-Lipschitz continuity of $\nabla r$, and the third inequality follows from the induction hypothesis.

Therefore, letting $t=s$ completes the proof of the first part. The second part of this lemma is just the convergence result for gradient ascent on smooth concave functions (see e.g., \citep{nesterov2013introductory}).
\end{proof}
Consider a virtual algorithm $\bar{A}$: for any $S$, the algorithm returns $w=w_S^A$ and $\theta=\bar{\theta}(w_S^A)$.

\begin{Lemma}\label{virtual1}
The stability of this virtual algorithm is
$\epsilon \sqrt{\left( {s \frac{\ell}{\ell_{\theta\theta}}} \right)^2+1}$.
\end{Lemma}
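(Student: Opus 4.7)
The plan is to unpack the definition of the virtual algorithm $\bar{A}$ and bound the displacement between its outputs on neighboring datasets $S$ and $S'$. Since $\bar{A}$ sets $w_S^{\bar{A}} = w_S^A$, the $w$-component inherits the stability of $A$ verbatim, giving $\mathbb{E}_A \|w_S^{\bar{A}} - w_{S'}^{\bar{A}}\| = \mathbb{E}_A \|w_S^A - w_{S'}^A\| \le \epsilon$. For the $\theta$-component, $\theta_S^{\bar{A}} = \bar{\theta}(w_S^A)$ is a deterministic function of $w_S^A$, so I can simply apply part (1) of Lemma \ref{approximate1} pointwise to obtain
\[
\|\bar{\theta}(w_S^A) - \bar{\theta}(w_{S'}^A)\| \;\le\; \frac{s\ell}{\ell_{\theta\theta}} \, \|w_S^A - w_{S'}^A\|.
\]

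To combine these componentwise bounds into the single stability quantity stated, I would interpret the distance between outputs with the joint Euclidean norm on $W\times\Theta$:
\[
\bigl\|(w_S^{\bar{A}},\theta_S^{\bar{A}}) - (w_{S'}^{\bar{A}},\theta_{S'}^{\bar{A}})\bigr\| = \sqrt{\|w_S^A - w_{S'}^A\|^2 + \|\bar{\theta}(w_S^A) - \bar{\theta}(w_{S'}^A)\|^2}.
\]
Substituting the Lipschitz bound from Lemma \ref{approximate1}(1) and factoring out $\|w_S^A - w_{S'}^A\|$ gives the pointwise (over the randomness of $A$) inequality
\[
\bigl\|(w_S^{\bar{A}},\theta_S^{\bar{A}}) - (w_{S'}^{\bar{A}},\theta_{S'}^{\bar{A}})\bigr\| \;\le\; \|w_S^A - w_{S'}^A\|\,\sqrt{1 + \bigl(s\ell/\ell_{\theta\theta}\bigr)^2}.
\]
Taking expectation $\mathbb{E}_A$ and using that $A$ is $\epsilon$-stable produces the claimed bound $\epsilon\sqrt{(s\ell/\ell_{\theta\theta})^2 + 1}$.

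This argument is essentially mechanical; the only nontrivial input is Lemma \ref{approximate1}(1), whose Lipschitz constant $s\ell/\ell_{\theta\theta}$ was itself established by an induction exploiting non-expansiveness of gradient ascent on the concave function $r(w,\cdot)$ together with the $\ell$-smoothness of $\nabla r$. Beyond invoking that lemma, no genuine obstacle arises: the $w$-coordinate of $\bar{A}$ is just a pass-through of $A$, and the $\theta$-coordinate is a deterministic Lipschitz image of $w_S^A$, so the two bounds combine cleanly through a single application of the vector-norm identity $\sqrt{a^2+b^2}$.
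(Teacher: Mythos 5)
Your proof is correct and takes the same route as the paper: the paper's own proof is the one-line remark that the claim is ``direct from the first part of Lemma \ref{approximate1},'' and your argument simply fills in the mechanical details (pass-through of the $w$-component, Lipschitz image for the $\theta$-component, combination via the joint Euclidean norm, then expectation over $A$). Nothing is missing.
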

\begin{proof}
It is direct from the first part of Lemma \ref{approximate1}.
\end{proof}
Then we have the generalization  bound of $r_S(w,\theta)$:
\begin{Lemma}\label{easy-generalization1}
We have
$$E_SE_A[r(w_S^A,\bar{\theta}(w_S^A))-r_S(w_S^A,\bar{\theta}(w_S^A))]\le \epsilon L\sqrt{\left( {s \frac{\ell}{\ell_{\theta\theta}}} \right)^2+1}.$$
\end{Lemma}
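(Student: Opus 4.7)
The plan is to reduce this to the standard stability-to-generalization bound for minimization problems (Theorem \ref{hardt}), applied to the virtual algorithm $\bar{A}$ whose output $(w_S^A, \bar\theta(w_S^A))$ we treat as a single parameter of a minimization problem with loss $f((w,\theta); z)$. The point of this reformulation is that, although the original minimax problem has a max inside an expectation which is what blocks a direct invocation of Theorem \ref{hardt}, the quantity we are now bounding has no outer maximization at all: both $r$ and $r_S$ are evaluated at the same joint point $(w_S^A, \bar\theta(w_S^A))$, so it is literally the generalization error of the joint-parameter learner $\bar{A}$ with loss $f$.

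First I would verify the two hypotheses required by Theorem \ref{hardt}. For the Lipschitz hypothesis, Assumption \ref{Lipschitz-continuous} gives $\|\nabla_{w,\theta} f(w,\theta;z)\| \le L$ uniformly over $z$ and over the relevant domain, which by the mean value theorem implies that $f(\cdot,\cdot;z)$, viewed as a function of the concatenated variable $(w,\theta)$, is $L$-Lipschitz with respect to the Euclidean norm on the product space. For the stability hypothesis, Lemma \ref{virtual1} already tells us that for neighboring datasets $S, S'$,
\begin{equation*}
E_A \left\| (w_S^A, \bar\theta(w_S^A)) - (w_{S'}^A, \bar\theta(w_{S'}^A)) \right\| \le \epsilon \sqrt{(s\ell/\ell_{\theta\theta})^2 + 1},
\end{equation*}
since the $w$-component has stability $\epsilon$ by assumption and the $\bar\theta$-component has stability at most $(s\ell/\ell_{\theta\theta})\epsilon$ by the Lipschitz-in-$w$ bound of Lemma \ref{approximate1}, and the Euclidean norm of the concatenation gives exactly the stated square root.

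Having verified both hypotheses, I would then invoke Theorem \ref{hardt} with the identification $w \leftrightarrow (w,\theta)$, $g((w,\theta);z) \leftrightarrow f(w,\theta;z)$, $\bar L \leftrightarrow L$, and $\epsilon \leftrightarrow \epsilon\sqrt{(s\ell/\ell_{\theta\theta})^2 + 1}$. This immediately produces the claimed bound
\begin{equation*}
E_S E_A \bigl[ r(w_S^A, \bar\theta(w_S^A)) - r_S(w_S^A, \bar\theta(w_S^A)) \bigr] \le \epsilon L \sqrt{(s\ell/\ell_{\theta\theta})^2 + 1}.
\end{equation*}
There is no genuine obstacle in this step once the virtual algorithm has been set up; the only mild subtlety is making sure the Lipschitz norm matches the norm in which stability is measured, which is why having $\|\nabla f\| \le L$ in the full Euclidean norm (rather than only a separate additive Lipschitz-in-$w$ plus Lipschitz-in-$\theta$ decomposition) is what lets us obtain the $\sqrt{(\cdot)^2 + 1}$ factor instead of a weaker $\bigl(1 + s\ell/\ell_{\theta\theta}\bigr)$ factor. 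The harder conceptual work -- constructing $\bar\theta(w)$ and establishing its quantitative Lipschitz behavior in $w$ despite the absence of strong concavity -- was already done in Lemma \ref{approximate1}, so this lemma is essentially a wrapper that applies the classical stability argument to the cleverly chosen virtual learner.
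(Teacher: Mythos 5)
Your proposal is correct and follows essentially the same route as the paper: the paper's proof likewise uses Assumption \ref{Lipschitz-continuous} together with the stability of the virtual algorithm $\bar{A}$ from Lemma \ref{virtual1} to bound the per-sample loss difference by $\epsilon L\sqrt{(s\ell/\ell_{\theta\theta})^2+1}$, and then invokes the standard stability-to-generalization argument of \citep{hardt2016train}. Your added remark about needing the Euclidean-norm gradient bound $\|\nabla_{w,\theta}f\|\le L$ (rather than the additive Lipschitz decomposition) to obtain the $\sqrt{(\cdot)^2+1}$ factor is a useful clarification of a point the paper leaves implicit, but it does not change the argument.
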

\begin{proof}
For any $z$, by Assumption \ref{Lipschitz-continuous},  we have
$$\|f(w_S^{\bar{A}},\theta_S^{\bar{A}};z)-f(w_{S'}^{\bar{A}},\theta_{S'}^{\bar{A}};z)\|\le \epsilon L\sqrt{\left( {s \frac{\ell}{\ell_{\theta\theta}}} \right)^2+1}.$$ 
The result follows directly from the standard stability theory in \citep{hardt2016train}.
\end{proof}

Now we are ready to derive the generalization {error} bound of the Primal Risk for an Algorithm $A$ with $\epsilon$-stability.
First, we have
\begin{align*}
E_SE_A[r(w_S^A)-r_S(w_S^A)] &\le E_SE_A[r(w_S^A)-r_S(w_S^A,\bar{\theta}(w_S^A))]\\
&\le E_SE_A[(r(w_S^A,\bar{\theta}(w_S^A)+\ell_{\theta\theta} C_p^2/s)-r_S(w_S^A,\bar{\theta}(w_S^A))]\\
&= E_SE_A[r(w_S^A,\bar{\theta}(w_S^A))-r_S(w_S^A,\bar{\theta}(w_S^A))]+\ell_{\theta\theta} C_p^2/s\\
&\le \epsilon L\sqrt{\left( {s \frac{\ell}{\ell_{\theta\theta}}} \right)^2+1}+\ell_{\theta\theta} C_p^2/s \\
&\leq \epsilon L{s \frac{\ell}{\ell_{\theta\theta}}} + \frac{\ell_{\theta\theta} C_p^2}{s} + \epsilon L 
\end{align*}
where the first inequality is because $r_S(w_S^A)=\max_{\theta}r_S(w_S^A,\theta)$, the second inequality is because of the second part of Lemma \ref{approximate1} and the last inequality is because of Lemma \ref{easy-generalization1}.
Optimizing over\footnote{Here we assume that the optimal $s$ is a real number greater than $0$. Constraining $s$ to be an integer and also incorporating $0$ does not change the result and we ignore this case here. See also Footnote \ref{footnote:ref}.} $s$, the generalization error  is bounded by $\zeta_{gen}^P(A)\le \sqrt{4L\ell C_p^2}\cdot \sqrt{\epsilon} + \epsilon L$. This  completes  the proof. \hfill\qed

\subsection{Proof of Theorem \ref{main3}}
Recall that the empirical primal gap is defined as
$$\Delta_S(w)=r_S(w)-\min_{w\in W}r_S(w)$$
and the  population primal gap is given by
$$\Delta(w)=r(w)-\min_{w\in W}r(w).$$

Suppose we are given an $\epsilon$-stable Algorithm $A$. We then want to derive the generalization error 
$$\zeta_{gen}^{PG}(A) = E_SE_A[\Delta(w_S^A)-\Delta_S(w_S^A)].$$

Since we already have the generalization error  for the primal risk $E_SE_A[r(w_S^A)-r_S(w_S^A)]$ in Theorem \ref{main1}, we only need to estimate
$$E_SE_A[\min_{w\in W}r_S(w)-\min_{w\in W}r(w)]=E_S[\min_{w\in W}r_S(w)-\min_{w\in W}r(w)]$$ 
to get a generalization {error} bound on the primal gap.

\begin{Lemma}\label{min-gap}
Let $w^*\in \arg\min_{w\in W}r(w)$. Suppose that $f(w^*,\cdot;z)$ is $L_\theta^*$ Lipschitz continuous with respect to $\theta$. Then we have
$$E_S[\min_{w\in W}r_S(w)-\min_{w\in W}r(w)]\le 4L_\theta^* C_e/\sqrt{n}.$$
\end{Lemma}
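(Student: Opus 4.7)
\textbf{Proof proposal for Lemma \ref{min-gap}.} The plan is to reduce the quantity of interest to a one-sided uniform deviation of $r_S(w^*,\theta)-r(w^*,\theta)$ over a bounded set of $\theta$'s, and then bound that deviation by a symmetrization/Rademacher-complexity argument that exploits the $L_\theta^*$-Lipschitz continuity of $f(w^*,\cdot;z)$.

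First, I would use that $w^*\in\arg\min_{w\in W}r(w)$ is a feasible point for the empirical minimization, so
$$\min_{w\in W}r_S(w)-\min_{w\in W}r(w)\le r_S(w^*)-r(w^*).$$
Next I would restrict the inner maximization to a ball using the capacity definition: by Definition \ref{def:capacity} there exists a minimum-norm maximizer $\hat\theta_S\in\arg\max_{\theta\in\Theta}r_S(w^*,\theta)$ with $\|\hat\theta_S\|\le C_e$. Thus $r_S(w^*)=r_S(w^*,\hat\theta_S)$, while $r(w^*)\ge r(w^*,\hat\theta_S)$ since $r(w^*)$ is the full maximum. Letting $\Theta_e:=\Theta\cap B(0,C_e)$, I obtain the pointwise bound
$$r_S(w^*)-r(w^*)\le r_S(w^*,\hat\theta_S)-r(w^*,\hat\theta_S)\le \sup_{\theta\in\Theta_e}\big[r_S(w^*,\theta)-r(w^*,\theta)\big].$$

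Second, taking expectation over $S$, I would apply a standard ghost-sample symmetrization to get
$$E_S\Big[\sup_{\theta\in\Theta_e}\big(r_S(w^*,\theta)-r(w^*,\theta)\big)\Big]\le 2\,E_{S,\sigma}\Big[\sup_{\theta\in\Theta_e}\tfrac{1}{n}\sum_{i=1}^n\sigma_i f(w^*,\theta;z_i)\Big],$$
with $\sigma_i$ i.i.d.\ Rademacher. Subtracting the mean-zero contribution at $\theta=0$ (which has Rademacher expectation $0$), this reduces to bounding the Rademacher complexity of the centred class $\{z\mapsto f(w^*,\theta;z)-f(w^*,0;z):\theta\in\Theta_e\}$, whose elements are $L_\theta^*$-Lipschitz in $\theta$ and vanish at $\theta=0$. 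A straightforward argument (either a direct calculation via Cauchy--Schwarz on a representative linear majorant $|f(w^*,\theta;z)-f(w^*,0;z)|\le L_\theta^*\|\theta\|$, followed by pulling $\|\theta\|\le C_e$ out of the supremum, or alternatively a Talagrand contraction + linear-class argument) yields the Rademacher bound $L_\theta^* C_e/\sqrt{n}$, which with the factor of $2$ from symmetrization and a factor of $2$ from the centring step gives the advertised $4L_\theta^* C_e/\sqrt{n}$.

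The main obstacle will be the third step: carefully tracking the constants in the Rademacher bound for a general $L_\theta^*$-Lipschitz parametric family indexed by a Euclidean ball of radius $C_e$. Two clean approaches are available: (i) use $|f(w^*,\theta;z_i)-f(w^*,0;z_i)|\le L_\theta^*\|\theta\|$ and bound $\sup_{\|\theta\|\le C_e}\langle \theta,\frac{1}{n}\sum_i\sigma_i v_i(\theta)\rangle$ via a worst-case linearization argument; (ii) apply a chaining/covering argument if a tighter dependence is needed. Either route produces the $1/\sqrt{n}$ rate; the only delicate point is ensuring the constant comes out to $4$ rather than an unspecified universal constant, which the authors appear to handle with a bare-hands computation that leverages only the Lipschitz bound and the explicit radius $C_e$.
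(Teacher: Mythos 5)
Your opening reduction is fine and matches the paper's: since $w^*$ is feasible for the empirical problem, $\min_{w\in W}r_S(w)-\min_{w\in W}r(w)\le r_S(w^*)-r(w^*)$, and one may pass to a maximizer of $r_S(w^*,\cdot)$ of norm at most $C_e$. The gap is in your third step. The class $\{z\mapsto f(w^*,\theta;z)-f(w^*,0;z):\|\theta\|\le C_e\}$ is only \emph{Lipschitz in the parameter} $\theta$, not linear in $\theta$, so there is no ``representative linear majorant'' to which Cauchy--Schwarz applies, and Talagrand's contraction principle does not help either (it handles Lipschitz post-compositions of a base class, not Lipschitz parametrizations). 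The honest route for a Lipschitz parametric class is covering/chaining, which gives $\mathcal{R}_n\lesssim L_\theta^*C_e\sqrt{\dim\Theta}/\sqrt{n}$ --- a dimension factor that the lemma's bound $4L_\theta^*C_e/\sqrt{n}$ does not tolerate (and $\Theta$ may be very high- or infinite-dimensional here, e.g.\ the adversarial-training example). So the asserted Rademacher bound $L_\theta^*C_e/\sqrt{n}$ is not justified and is false for general Lipschitz classes; your argument also never uses concavity or smoothness of $f(w^*,\cdot;z)$, which is a sign something essential is missing, since a dimension-free uniform deviation bound of this form cannot hold without such structure.

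The paper avoids uniform convergence entirely. It defines $\tilde\theta_S$ as the output of $s$ steps of gradient ascent on $r_S(w^*,\cdot)$ started at $0$ with stepsize $1/\ell_{\theta\theta}$, and uses concavity twice: non-expansiveness of the gradient-ascent map gives the stability estimate $\|\tilde\theta_S-\tilde\theta_{S'}\|\le 2sL_\theta^*/(n\ell_{\theta\theta})$ across neighboring datasets, and the standard convergence rate gives the optimization error $r_S(w^*)-r_S(w^*,\tilde\theta_S)\le\ell_{\theta\theta}C_e^2/s$ (Lemma \ref{approximate2}). Feeding the ``virtual algorithm'' $S\mapsto(w^*,\tilde\theta_S)$ into the stability-implies-generalization result (Theorem \ref{hardt}) controls $E_S[r_S(w^*,\tilde\theta_S)-r(w^*,\tilde\theta_S)]$ by $2s(L_\theta^*)^2/(n\ell_{\theta\theta})$, and optimizing the trade-off over $s$ yields $4L_\theta^*C_e/\sqrt{n}$ with no dependence on $\dim\Theta$. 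If you want to salvage your plan, you would need to replace the generic Rademacher step with an argument that exploits concavity and smoothness in exactly this way; as written, the proposal does not establish the lemma.
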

\begin{proof}
We use similar techniques as in the proof of Lemma \ref{main1}. 

\textbf{Step 1.} We define an approximate maximizer $\tilde{\theta}_S$ of the function $r_S(w^*,\cdot)$.
$\tilde{\theta}_S$ is attained by performing $s$ steps of gradient ascent  to $r_S(w^*,\cdot)$ with stepsize $1/\ell_{\theta\theta}$ and being initialized at $0$.

Similar to Lemma \ref{approximate1}, we have the following lemma:
\begin{Lemma}\label{approximate2}
We have the following properties:
\begin{enumerate}
\item $\|\tilde{\theta}_S-\tilde{\theta}_{S'}\|\le 2sL_\theta^*/(n\ell_{\theta\theta})$.
\item $r_S(w^*)-r_S(w^*,\tilde{\theta}_S)\le \ell_{\theta\theta} C_e^2/s$.
\end{enumerate}
\end{Lemma}
\begin{proof}
The proof is similar to the proof of Lemma \ref{approximate1}. To prove the first part, let $\tilde{\theta}_0=\tilde{\theta}_0'=0$. Define $\tilde{\theta}_t,\tilde{\theta}_t'$ recursively as follows:
$$\tilde{\theta}_{t+1}=\tilde{\theta}_t+\nabla_\theta r_S(w^*,\tilde{\theta}_t)/\ell_{\theta\theta}$$
and
$$\tilde{\theta}_{t+1}'=\tilde{\theta}'_t+\nabla_\theta r_{S'}(w^*,\tilde{\theta}_t')/\ell_{\theta\theta}.$$
We prove $\|\tilde{\theta}_t-\tilde{\theta}_t'\|\le L_\theta^*/(n\ell_{\theta\theta})$ by induction.
For $t=0$, $\tilde{\theta}_0-\tilde{\theta}_0'=0$. Assume the induction hypothesis $\|\tilde{\theta}_{t-1}-\tilde{\theta}_{t-1}'\|\le (t-1) L_\theta^*/(n\ell_{\theta\theta})$ holds. We have
\begin{align*}
\|\tilde{\theta}_t-\tilde{\theta}_t'\| &=\|(\tilde{\theta}_{t-1}+\nabla_\theta r_S(w^*,\tilde{\theta}_{t-1})/\ell_{\theta\theta})-(\tilde{\theta}_{t-1}'+\nabla_\theta r_S(w^*,\tilde{\theta}_{t-1}')/\ell_{\theta\theta}) \\
& \qquad \qquad +(\nabla_\theta r_{S}(w^*,\tilde{\theta}_{t-1}')-\nabla_\theta r_{S'}(w^*,\tilde{\theta}_{t-1}'))/\ell_{\theta\theta}\| \\
&\le\|(\tilde{\theta}_{t-1}+\nabla_\theta r_S(w^*,\tilde{\theta}_{t-1})/\ell_{\theta\theta})-(\tilde{\theta}_{t-1}'+\nabla_\theta r_S(w^*,\tilde{\theta}_{t-1}')/\ell_{\theta\theta})\| \\
& \qquad \qquad +\|(\nabla_\theta r_{S}(w^*,\tilde{\theta}_{t-1}')-\nabla_\theta r_{S'}(w^*,\tilde{\theta}_{t-1}'))/\ell_{\theta\theta}\| \\
&\le\|\tilde{\theta}_{t-1}- \tilde{\theta}_{t-1}'\|+\ell \|w-w'\|/\ell_{\theta\theta}\\
&\le (t-1)\frac{2L_\theta^*}{n\ell_{\theta\theta}} +\frac{2L_\theta^*}{n\ell_{\theta\theta}}\\
&= t\frac{2L_\theta^*}{n\ell_{\theta\theta}},
\end{align*} 
where the first inequality follows from the triangle inequality, the second inequality follows from non-expansiveness of gradient ascent for concave functions and the $L_{\theta}^*$-Lipschitz continuity of $ f(w^*, \cdot; z)$, and the third inequality follows from the induction hypothesis.

Therefore, letting $t=s$ completes the proof of the first part. The second part of this lemma is just the convergence result for gradient ascent on smooth concave functions (see e.g., \citep{nesterov2013introductory}).
\end{proof}
We then define the virtual algorithm $\tilde{A}$ given by $w_S^{\tilde{A}}=w^*$ and $\theta_S^{\tilde{A}}=\tilde{\theta}_S$.
Since the output argument $w$ of $\tilde{A}$ is always  $w^*$, the stability of $\tilde{A}$ only depends on $\tilde{\theta}_S$.
Then the stability bound of this virtual algorithm is given in the following lemma:
\begin{Lemma}
The stability of Algorithm $\tilde{A}$ is given by $\epsilon_{sta}(\tilde{A})=2s{(L_\theta^*)^2}/(n\ell_{\theta\theta})$.
\end{Lemma}
Then by the standard stability theory in \citep{hardt2016train}, we have
\begin{equation}\label{tildeoriginal}
|E_SE_A[r_S(w^*,\tilde{\theta}_S)-r(w^*,\tilde{\theta}_S)]|\le 2s(L_\theta^*)^2/(n\ell_{\theta\theta}).
\end{equation}
\textbf{Step 2.}
We have
\begin{align*}
E_S[\min_{w\in W}r_S(w)-\min_{w\in W}r(w)] &\stackrel{ \mbox{\scriptsize(i)}}= E_S[r_S(w_S)-r(w^*,\theta^*)]\\
&\stackrel{ \mbox{\scriptsize(ii)}}\le E_S[r_S(w^*)-r(w^*,\theta^*)]\\
&\stackrel{ \mbox{\scriptsize(iii)}}\le E_S[r_S(w^*,\tilde{\theta}_S)-r(w^*,\theta^*)]+\ell_{\theta\theta} C_e^2/s\\
&\stackrel{ \mbox{\scriptsize(iv)}}\le E_S[r_S(w^*,\tilde{\theta}_S)-r(w^*,\tilde{\theta}_S)]+\ell_{\theta\theta} C_e^2/s,
\end{align*}  
where (i) follows from the definition of $w^*,\theta^*$, (ii) follows since $w_S$ minimizes  $r_S(w)$,  (iii) follows from Lemma \ref{approximate2}, and (iv) follows from the optimality of $\theta^*$ given $w^*$.
Then by \eqref{tildeoriginal}, we have
\begin{align}
E_S[\min_{w\in W}r_S(w)-\min_{w\in W}r(w)] &\le E_S[r_S(w^*,\tilde{\theta}_S)-r(w^*,\tilde{\theta}_S)]+\ell_{\theta\theta} C_e^2/s\\
&\le 2s (L_\theta^*)^2/(n\ell_{\theta\theta})+\ell_{\theta\theta} C_e^2/s\\
&\le 4L_\theta^* C_e/\sqrt{n}
\end{align}
which completes the proof. 
\end{proof}
The final statement of the theorem follows from Lemma \ref{min-gap} and Lemma \ref{main1}. \hfill \qed
  
\subsection{Proof of Lemma \ref{sufficient}}

We only prove the first part of this lemma and the others can be proved  similarly.
Let $s=[D_p/\gamma]+1$, where $[r]$ denotes the largest integer no more than $r$.
To prove the first part, let $\theta_0=\theta_0'=0$. Define $\theta_t,\theta_t'$ recursively as follows:
$$\theta_{t+1}=\theta_t+c_0\nabla_\theta r(w,\theta_t)/t$$
and
$$\theta_{t+1}'=\theta'_t+c_0\nabla_\theta r(w',\theta_t')/t.$$
We prove $\|\theta_t-\theta_t'\|\le t \frac{\ell}{\ell_{\theta\theta}} \|w-w'\|$ by induction.
For $t=0$, $\theta_0-\theta_0'=0$. Assume the induction hypothesis $\|\theta_{t-1}-\theta_{t-1}'\|\le (t-1) \frac{\ell}{\ell_{\theta\theta}}\|w-w'\|$. We have
\begin{align}
\|\theta_t-\theta_t'\| &=\|(\theta_{t-1}+c_0\nabla_\theta r(w,\theta_{t-1})/t)-(\theta_{t-1}'+c_0\nabla_\theta r(w,\theta_{t-1}')/t) \\
& \qquad \qquad +c_0(\nabla_\theta r(w,\theta_{t-1}')-\nabla_\theta r(w',\theta_{t-1}'))/t\| \\
&\le\|(\theta_{t-1}+c_0\nabla_\theta r(w,\theta_{t-1})/t)-(\theta_{t-1}'+c_0\nabla_\theta r(w,\theta_{t-1}')/t)\| \\
& \qquad \qquad +c_0\|(\nabla_\theta r(w,\theta_{t-1}')-\nabla_\theta r(w',\theta_{t-1}'))/t\| \\
&\le(1+c_0\ell_{\theta\theta}/t)\|\theta_{t-1}-\theta_{t-1}'\|+c_0\ell \|w-w'\|/t.
\end{align}
Here the first inequality follows from the triangle inequality, the second inequality follows from the $\ell_{\theta \theta}-$Lipschitz continuity of $\nabla_\theta r$ and $\ell$-Lipschitz continuity of $\nabla r$.
Therefore, we have
$$\|\theta_t-\theta_t'\|\le (1+c_0\ell_{\theta\theta}/t)\|\theta_{t-1}-\theta_{t-1}'\|+c_0\ell \|w-w'\|/t.$$
Let $\delta_{t}=\|\theta_t-\theta_t'\|$. Then by the above recursion, we have
$$\delta_t+\ell/\ell_{\theta\theta} \|w-w'\|\le \prod_{i=1}^{t}(1+c_0\ell_{\theta\theta}/i)\ell \|w-w'\|/\ell_{\theta\theta}.$$
Using the inequalities $e^a\ge 1+a$ and $\sum_{i=1}^t 1/i\le \log t$, we have
$$\delta_t\le \frac{t\ell}{\ell_{\theta\theta}}\|w-w'\|.$$ 
Letting $t=s$ yields 
$$\|\theta_p^{\gamma}(w)-\theta_p^{\gamma}(w')\|\le \frac{s\ell}{\ell_{\theta\theta}}\|w-w'\|.$$
Since $D_p>\gamma$, we have
$$s\le [D_p/\gamma]+1\le 2D_p/\gamma.$$
Hence, $$s\frac{\ell}{\ell_{\theta\theta}}\cdot \gamma\le 2D_p\ell/\ell_{\theta\theta}.$$
Setting $\lambda_p=2D_p\ell/\ell_{\theta\theta}$ yields the desired result.

\subsection{Proof of Lemma \ref{ncncp}}

This is similar to the proof of Lemma \ref{main1}. We first define the virtual algorithm $\bar{A}$ which outputs $(w_S^A,\theta_p^{\gamma}(w_S^A))$.
By Assumption \ref{oracle1}, it can be easily seen that $\bar{A}$ is $(1+\lambda_p/\gamma)\epsilon$-stable.
Then by Theorem \ref{hardt}, we have
$$E_SE_A[r(w_S^A,\theta_p^{\gamma}(w_S^A))-r_S(w_S^A,\theta_p^{\gamma}(w_S^A))]\le L(1+\lambda_p/\gamma)\epsilon.$$
This gives us:
\begin{eqnarray*}
E_SE_A[r(w_S^A)-r_S(w_S^A)]&\le& E_SE_A[r(w_S^A,\theta_p^{\gamma}(w_S^A))-r_S(w_S^A,\theta_p^{\gamma}(w_S^A))]+\gamma\\
&\le&L\epsilon+L\lambda_p\epsilon /\gamma+\gamma.
\end{eqnarray*}
Taking $\gamma=\sqrt{L\lambda_p} \sqrt{\epsilon}$, we have
$$\zeta_{gen}^p(A)\le L\epsilon+\sqrt{L\lambda_p}\sqrt{\epsilon}.$$

\begin{algorithm*}[t]
	\caption{GDA}
	\begin{algorithmic}[1]
		\REQUIRE initial iterate $(w_S^0, \theta_S^0) = (0,0)$, stepsizes $\alpha_t, \beta_t$, projection operators $P_W$ and $P_\Theta$;
		\FOR{$t = 0,\dots,T-1$}
		\STATE $w_S^{t+1} = P_W \left( w_S^t - \alpha_t \nabla_w r_S(w, \theta) \right)$
		\STATE$ \theta^{t+1}_S = P_\Theta \left(  \theta^t_S + \beta_t  \nabla_\theta r_S(w, \theta) \right)$
		\ENDFOR
	\end{algorithmic}
	\label{alg_gda}
\end{algorithm*}

\begin{algorithm*}[t]
	\caption{GDMax}
	\begin{algorithmic}[1]
		\REQUIRE initial iterate $(w_S^0, \theta_S^0) = (0,0)$, stepsizes $\alpha_t$, projection operators $P_W$ and $P_\Theta$;
		\FOR{$t = 0,\dots,T-1$}
		\STATE $w_S^{t+1} = P_W \left( w_S^t - \alpha_t \nabla_w r_S(w, \theta) \right)$
		\STATE$ \theta^{t+1}_S = \underset{\theta \in \Theta} {\operatorname{argmax}} \ r_S(w_S^{t+1}, \theta)$
		\ENDFOR
	\end{algorithmic}
	\label{alg_gdmax}
\end{algorithm*}

\section{Proofs in Section \ref{sec:comparison}}\label{appendix:gda_gdmax}

\subsection{Proof of Theorem \ref{populated_primal_gap}}
First, we have
\begin{align}
&E_SE_{A_0}[r(w_S^{A_0,T}) -\min_{w\in W}r(w)] \nonumber \\
&=E_SE_{A_0}[r_S(w_S^{A_0,T})-\min_{w\in W}r_S(w)]+E_SE_{A_0}[r(w_S^{A_0,T})-r_S(w_S^{A_0,T})] \nonumber \\
& \qquad \qquad +E_SE_{A_0}[\min_{w\in W}r_S(w)-\min_{w\in W}r(w)].
\end{align}
Furthermore, by Assumption \ref{convergence_GDA} and Theorem \ref{main3}, we have
$$E_SE_{A_0}[r(w_S^{A_0,T})-\min_{w\in W}r(w)]\le 
(\phi_{A_0}(M_w) + \phi_{A_0}(C_e(\Theta_0)))/\psi_{A_0}(T)+\zeta_{gen}^P(A_0) +L_\theta^*C_e(\Theta_0)/\sqrt{n}.$$
Next, notice that the output of $A_0$ is equal to the output of $A$ with probability at least $1-\delta$ and $ \| r(w) \| \le 1$. Therefore, we have
$$|E_SE_A[r(w_S^{A,T})]-E_SE_{A_0}[r(w_S^{A_0,T})]|\le \delta,$$ 
which gives the desired result. \hfill \qed

\subsection{Proof of Lemma \ref{lemma:gda_stab}}
Define $\delta_t=\|(w_S^t,\theta_S^t)-(w_{S'}^t,\theta_{S'}^t)\|$.
We have
\begin{align}
\delta_{t+1} &\le (1+c_0\ell/t)\delta_t+ 2c_0L_{\Theta_\theta^{GDA}}/nt.  \nonumber 
\end{align}
Therefore, 
\begin{align}
\delta_{t+1} + \frac{2 L_{\Theta_\theta^{GDA}}}{ \ell n} &\leq  (1+c_0\ell/t) \left( \delta_t +  \frac{2 L_{\Theta_\theta^{GDA}}}{ \ell n} \right) \leq \frac{2 L_{\Theta_\theta^{GDA}}}{ \ell n} T^{c_0 \ell},
\end{align}
which completes the proof. \hfill \qed

\subsection{Proof of Lemma \ref{gda_bounded}}
For a fixed dataset $S$, let $g_t=\nabla r_S(w^t,\theta^t)$ and $d_t=\|(w^0,\theta^0)-(w^t,\theta^t)\|$.
Then we have 
$g_t\le L_0+d_t\ell$
and
$d_{t+1}\le d_t+c_0g_t/t$.
Substituting the first inequality into the second one, we have
$$d_{t+1}\le d_t+c_0d_t/t+L_0c_0/t,$$
which gives us
$$d_{t+1}+L/\ell\le (1+c_0\ell/t)(d_t+L_0/\ell).$$
Multiplying this inequality from $0$ to $T-1$ yields
$$d_T\le T^{c_0\ell}L_0/\ell,$$
which completes the proof. \hfill \qed

\begin{figure}[!t]
    \centering
    \includegraphics[width=0.3\linewidth]{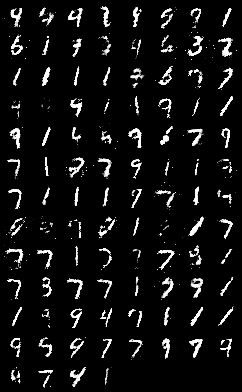}
    \qquad\qquad\qquad\includegraphics[width=0.3\linewidth]{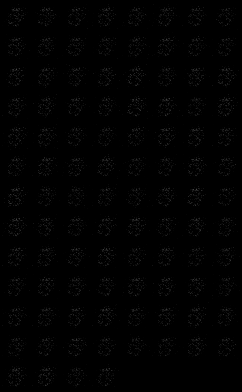}\\
    (a)~~GDA\qquad\qquad\qquad\qquad\qquad\qquad\qquad (b)~~GDMax 
    \caption{Comparison of the results on MNIST generated by GDA and GDMax.}
    \label{fig:gda_gdmax_MNIST_comp}
\end{figure}

\subsection{Proof of Lemma \ref{capacity_gan}}
Let $u=[1,1,\cdots,1,0,\cdots,0]^T\in \mathbb{R}^{2n}$. Then $\theta_S(w)$ satisfies $Q_S^T\theta_S(w)=u-b_0e$, where $e=[1,1,\cdots,1]^T\in \mathbb{R}^{2n}$. It can be easily seen that $\|u-b_0e\|\ge \sqrt{n}/2$.

We can also show that $\sigma_{\max}(Q_S)\le 2\sigma_{\max }\cdot \sigma_{\max} (P)$, where $P\in \mathbb{R}^{2n\times m}$ is full row-rank and independent rows. Moreover, every row of $P$ has covariance matrix $I_m/\sqrt{m}$.
Then by random matrix theory (see \citep{vershynin2010introduction}), we have $\sigma_{\max}(P)\le \mathcal{O}(\sqrt{m}/\sqrt{m}-C\sqrt{n}/\sqrt{m} + \log(1/\delta)/\sqrt{m})=\mathcal{O}(1)$ with probability $1 - C\delta$. Therefore, we have $\theta_{S}(w)\ge \Omega(\sqrt{n})$. \hfill \qed

\subsection{Experiments on GAN-training}\label{sec:exp_GAN}

In this section, we provide some numerical results to corroborate our theoretical findings.  

\subsubsection{Setup}

We train a GAN on MNIST data using two algorithms -- GDA and GDMax. Since the stability is improved by using adaptive methods like Adam, we use Adam-descent-ascent (ADA) and Adam-descent-max (ADMax) instead. ADA  simultaneously trains the generator and the discriminator, while ADMax  trains the optimal discriminator for each generator step. We simulate this by taking $10$ steps of ascent for every descent step.  Figure   \ref{fig:gda_gdmax_MNIST_comp} plots the images generated by GANs trained using these two algorithms. Finally,  in Figure \ref{fig:gda_gdmax_norm_comp}, we plot the norms of the discriminator trained by these two algorithms. 

\subsubsection{Results}

Figure  \ref{fig:gda_gdmax_MNIST_comp} plots the images generated by GANs trained using GDA and GDMax (using Adam instead of the simple gradient step). As predicted by the theory in Section \ref{appendix:gda_gdmax}, we can see that GDA produces better images than the corresponding GAN trained using GDMax. Furthermore, the claim that $C_e >> C_p$ can be seen from Figure  \ref{fig:gda_gdmax_norm_comp} where we see that the norm of the discriminator trained using GDMax is much larger than the norm of the discriminator trained using GDA. This follows from the results in Section \ref{subsec:C_e_C_p}. GDMax trains the discriminator to exactly distinguish between the empirical data generated by the true and fake distributions. Therefore, when they are nearly the same, their empirical distributions would be close as well. This would imply that the discriminator would need to have a very large slope (Lipschitz constant) to exactly distinguish between the two empirical datasets, and this in turn leads to a large discriminator norm (which captures the Lipschitz constant of the discriminator).

\begin{figure}[!t]
    \centering
    \includegraphics[width=0.6\linewidth]{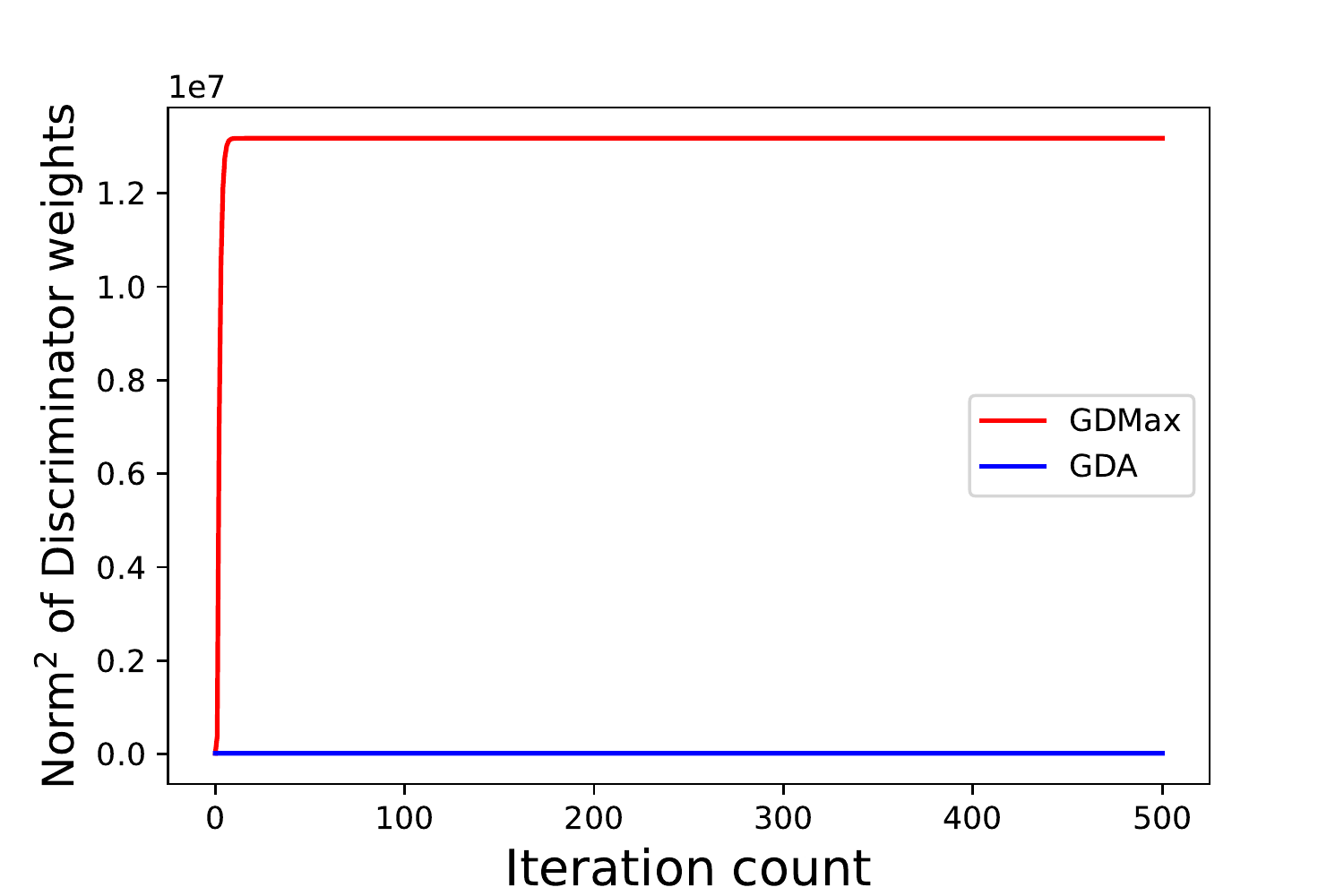}
    \caption{Comparison of the norm squares of discriminator weights.}
    \label{fig:gda_gdmax_norm_comp}
\end{figure}

%
%

 \section{Generalization Error for Primal-Dual  Risk} 
\label{sec:connection_to_pd_gap}

If the saddle-point exists, the {\it primal-dual  risk} is often a good measure of generalization: 

\begin{Definition}{[Primal-dual risk]}
The population and empirical primal-dual (PD) risks are defined as: 
$$\Delta^{PD}(w,\theta)=\max_{\theta'\in \Theta}r(w,\theta')-\min_{w'\in W}r(w',\theta),$$
and
$$\Delta_S^{PD}(w,\theta)=\max_{\theta'\in \Theta}r_S(w,\theta')-\min_{w'\in W}r_S(w',\theta).$$
\end{Definition}

A point $(w,\theta)$ is called a saddle-point of $r_S$ (or $r$) if $\Delta_S^{PD}(w,\theta)=0$ (or $\Delta^{PD}(w,\theta)=0$). Furthermore, if a saddle-point $(w_S,\theta_S)$ exists for $r_S(\cdot,\cdot)$, we have $w_S=\min_{w\in W}r_S(w)$.
Moreover, if $w_S\in \arg\min_{w\in W}r_S(w)$ and $\theta_S\in \arg\max_{\theta\in \Theta}r_S(w_S,\theta)$, then $(w_S,\theta_S)$ is a saddle point of $r_S(\cdot,\cdot)$.

Notice that  if we can find an approximate  saddle point $(w_S,\theta_S)$ of $r_S(w,\theta)$, i.e., $\Delta_S^{PD}(w_S,\theta_S) < \epsilon$ and guarantee that $\Delta^{PD}(w_S,\theta_S)-\Delta_S^{PD}(w_S,\theta_S)$ is small, we can guarantee that $\Delta(w_S,\theta_S)$ is small and therefore $(w_S,\theta_S)$ is an approximate saddle point of $r(\cdot,\cdot)$.
Hence if the saddle point exists for $r_S(\cdot,\cdot)$, the {generalization error of the primal-dual risk  can be a good measure for the generalization of the solution to the empirical problem.} We define the expected generalization error  for the primal-dual risk as follows:
\begin{Definition}
The generalization error  for the primal-dual risk is defined as $$\zeta_{gen}^{PD}(A)=E_SE_A[\Delta^{PD}(w_S^A,\theta_S^A)-\Delta_S^{PD}(w_S^A,\theta_S^A)].$$
\end{Definition}

\subsection{The generalization of the primal-dual risk for convex-concave problems}

Similar to Definition \ref{def:capacity}, we define the $W$-capacity as follows: 

\begin{Definition}[W-Capacity]
\label{def:Wcapacity}
Let 
\begin{align}
W^*(\theta)=\min_{w\in W}r(w,\theta) , \text{ and } \quad W_S(\theta)=\min_{w\in W}r_S(w,\theta). \nonumber
\end{align}
The $W$-capacities $C_e^w$ and $C_p^w$ are defined as
\begin{align}
C_p^w &= \max_{\theta}\mathrm{dist}(0,W^*(\theta)  \nonumber \\
C_e^w &= \max_{S,\theta}\mathrm{dist}(0,W_S(\theta)).
\end{align}
\end{Definition}
Next, we also define the following:
\begin{Definition}
Let $f^-(\theta,w;z)=-f(w,\theta;z)$. We first have
\begin{align}
r^-(\theta,w)=E_{z\sim P_z}[f^-(\theta,w;z)], \qquad r_S^-(\theta,w)=\frac{1}{n}\sum_{i=1}^nf^-(\theta,w;z_i). 
\end{align}
Furthermore, we define:
\begin{align}
r^-(\theta) &= \max_{w\in W}r^-(\theta,w)=-(\min_{w\in W}r(w,\theta)) \nonumber \\
r^-_S(\theta) &= \max_{w\in W}r^-_S(\theta,w)=-(\min_{w\in W}r_S(w,\theta)).
\end{align} 
\end{Definition}
Now, we have the following bound for the generalization error of the primal-dual risk, $\zeta_{gen}^{PD}(A)$ for an $\epsilon$-stable Algorithm  $A$:
\begin{Theorem}
\label{thm:pd_cc}
Suppose that Algorithm $A$ is  $\epsilon$-stable. The generalization error $\zeta_{gen}^{PD}(A)$ for convex-concave problem, i.e., when $f(\cdot, \cdot; z)$ is convex-concave for all $z$, is bounded by: 
$$\zeta_{gen}^{PD}(A) \leq  \left( \sqrt{4L\ell C_p^2} +  \sqrt{4L\ell (C_p^w)^2} \right) \sqrt{\epsilon} + 2\epsilon L.$$
\end{Theorem}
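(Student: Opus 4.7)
\textbf{Proof proposal for Theorem \ref{thm:pd_cc}.} The plan is to decompose the primal-dual generalization error into two pieces, each of which has the structure of a primal-risk generalization error to which Lemma \ref{main1} can be applied. Concretely, for any fixed $(w,\theta)$ one can rewrite
\begin{align*}
\Delta^{PD}(w,\theta)-\Delta_S^{PD}(w,\theta)
&=\bigl[\max_{\theta'\in\Theta} r(w,\theta')-\max_{\theta'\in\Theta} r_S(w,\theta')\bigr]
+\bigl[\min_{w'\in W} r_S(w',\theta)-\min_{w'\in W} r(w',\theta)\bigr]\\
&=[r(w)-r_S(w)]+[r^-(\theta)-r^-_S(\theta)],
\end{align*}
using the identities $\min_{w'} r(w',\theta)=-r^-(\theta)$ and $\min_{w'} r_S(w',\theta)=-r^-_S(\theta)$ from the definitions in Section~\ref{sec:connection_to_pd_gap}. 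Taking $E_SE_A$ and plugging in $(w,\theta)=(w_S^A,\theta_S^A)$ then splits
\[
\zeta_{gen}^{PD}(A)=E_SE_A[r(w_S^A)-r_S(w_S^A)]+E_SE_A[r^-(\theta_S^A)-r^-_S(\theta_S^A)].
\]

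First I would bound the first piece by directly invoking Lemma~\ref{main1}. Since $f(\cdot,\cdot;z)$ is convex-concave, Assumption~\ref{ass:noncvx_conc} is satisfied (the inner maximization in $\theta$ is concave), and the population capacity of the $\theta$-maximizer is $C_p$. Thus Lemma~\ref{main1} gives
\[
E_SE_A[r(w_S^A)-r_S(w_S^A)]\le \sqrt{4L\ell C_p^2}\sqrt{\epsilon}+\epsilon L.
\]
Second, I would apply the same lemma to the auxiliary minimax problem $\min_{\theta\in\Theta}\max_{w\in W} f^-(\theta,w;z)$. Because $f^-(\theta,w;z)=-f(w,\theta;z)$ is convex-concave (now with $\theta$ playing the role of the outer minimizer and $w$ playing the role of the inner concave maximizer), and because the algorithm $A$ is still $\epsilon$-stable when viewed as an algorithm for this swapped problem (both $\|w_S^A-w_{S'}^A\|$ and $\|\theta_S^A-\theta_{S'}^A\|$ are $\le\epsilon$), Lemma~\ref{main1} yields
\[
E_SE_A[r^-(\theta_S^A)-r^-_S(\theta_S^A)]\le \sqrt{4L\ell (C_p^w)^2}\sqrt{\epsilon}+\epsilon L,
\]
where the capacity constant is now the $W$-capacity $C_p^w$ of Definition~\ref{def:Wcapacity}, and the Lipschitz/smoothness constants of $f^-$ match those of $f$ under Assumptions~\ref{Lipschitz-smooth}--\ref{Lipschitz-continuous}. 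Summing the two bounds gives the claimed inequality.

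The main obstacle is verifying that Lemma~\ref{main1} transfers cleanly to the swapped problem: the constants $L$ and $\ell$ in Lemma~\ref{main1} were defined via a ball of radius involving $C_p$ (the $\theta$-side capacity), whereas the swapped application will involve a ball of radius involving $C_p^w$. I would argue that convexity-concavity makes both capacities finite and, since $W$ is compact (so $C_p^w$ is finite) while the natural Lipschitz/smoothness constants on $f^-$ over $W\times\Theta_1$ for the relevant $\Theta_1$ agree with those on $f$, the same numerical constants $L$ and $\ell$ suffice; at worst one may need to enlarge the reference ball in Assumption~\ref{Lipschitz-continuous} to cover both $2C_p+1$ and $2C_p^w+1$, which only affects the constant $L$ and not the structure of the bound. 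The second mild point is checking that the virtual-algorithm stability argument used in the proof of Lemma~\ref{main1} continues to work on the swapped problem; this follows because the $\epsilon$-stability of $A$ was defined symmetrically in $w$ and $\theta$.
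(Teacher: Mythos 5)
Your proposal is correct and follows essentially the same route as the paper's proof: the identical decomposition of $\zeta_{gen}^{PD}(A)$ into the two primal-risk generalization errors $E_SE_A[r(w_S^A)-r_S(w_S^A)]$ and $E_SE_A[r^-(\theta_S^A)-r^-_S(\theta_S^A)]$, each bounded by Lemma~\ref{main1} with capacities $C_p$ and $C_p^w$ respectively. Your additional remarks verifying that Lemma~\ref{main1} transfers to the swapped problem (symmetry of the stability definition, matching Lipschitz/smoothness constants for $f^-$) are sound and in fact more careful than the paper's one-line invocation of the lemma.
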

\begin{proof}
Notice that 
\begin{eqnarray}
\zeta_{gen}^{PD}(A)&=& E_SE_A[\Delta^{PD}(w_S^A,\theta_S^A)-\Delta_S^{PD}(w_S^A,\theta_S^A)]\\
&=&E_SE_A[r(w_S^A)-r_S(w_S^A)]+E_SE_A[r^-(\theta_S^A)-r^-_S(\theta_S^A)].
\end{eqnarray}
The two terms can be bounded by Lemma \ref{main1} respectively.
By Lemma \ref{main1}, we have
$$E_SE_A[r(w_S^A)-r_S(w_S^A)]\le \sqrt{4L\ell C_p^2} \sqrt{\epsilon} + \epsilon L$$
and
$$E_SE_A[r^-(\theta_S^A)-r^-_S(\theta_S^A)]\le \sqrt{4L\ell (C_p^w)^2} \sqrt{\epsilon} + \epsilon L.$$
Combining these two inequalities yields the desired result.
\end{proof}

\subsection{$\zeta_{gen}^{PD}(T)$ for the proximal point algorithm}


In this section, we study the generalization behavior  of the proximal point algorithm (PPA) ((See Equation (3) in \citep{farnia2021train})). By \citep{farnia2021train}, the stability of $T$ steps of PPA can be bounded as follows:
\begin{Lemma} [\citep{farnia2021train}]\label{lemma:PPA_stab}
The stability of $T$ steps of PPA can be bounded by $\epsilon \leq \mathcal{O}\left(T/n\right)$.
\end{Lemma}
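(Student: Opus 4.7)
} The plan is to track the iterate drift $\delta_t := \|(w_S^t,\theta_S^t) - (w_{S'}^t, \theta_{S'}^t)\|$ produced by running PPA on two neighboring datasets $S$ and $S'$ differing in exactly one sample, and to show by induction that $\delta_t$ grows at most linearly in $t$ with per-step increment $O(L/n)$. Summing over $T$ iterations then yields $\epsilon \leq O(LT/n) = O(T/n)$ (absorbing the constant $L$ into the big-$O$).

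First I would recall that the PPA update applied to a convex-concave objective $r_S(w,\theta)$ with step size $\eta$ has the form
\begin{equation*}
(w^{t+1}, \theta^{t+1}) = \mathrm{Prox}_{r_S,\eta}(w^t,\theta^t),
\end{equation*}
where the proximal operator solves the regularized saddle problem $\min_w \max_\theta \{ r_S(w,\theta) + \tfrac{1}{2\eta}\|w-w^t\|^2 - \tfrac{1}{2\eta}\|\theta-\theta^t\|^2 \}$. The key analytic fact, which I would prove (or cite) as a preliminary lemma, is that for convex-concave $r_S$ the operator $\mathrm{Prox}_{r_S,\eta}$ is non-expansive in the joint $(w,\theta)$-metric. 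This follows from the monotonicity of the saddle subdifferential of a convex-concave function together with the resolvent identity $\mathrm{Prox}_{r_S,\eta} = (I + \eta M)^{-1}$ where $M(w,\theta) = (\partial_w r_S, -\partial_\theta r_S)$ is monotone.

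Next I would compare the two updates. Write $(w_S^{t+1},\theta_S^{t+1}) = \mathrm{Prox}_{r_S,\eta}(w_S^t,\theta_S^t)$ and $(w_{S'}^{t+1},\theta_{S'}^{t+1}) = \mathrm{Prox}_{r_{S'},\eta}(w_{S'}^t,\theta_{S'}^t)$, and insert the intermediate point $\mathrm{Prox}_{r_S,\eta}(w_{S'}^t,\theta_{S'}^t)$. The triangle inequality yields
\begin{align*}
\delta_{t+1} \leq \underbrace{\| \mathrm{Prox}_{r_S,\eta}(w_S^t,\theta_S^t) - \mathrm{Prox}_{r_S,\eta}(w_{S'}^t,\theta_{S'}^t)\|}_{\le \delta_t \text{ by non-expansiveness}} + \underbrace{\|\mathrm{Prox}_{r_S,\eta}(w_{S'}^t,\theta_{S'}^t) - \mathrm{Prox}_{r_{S'},\eta}(w_{S'}^t,\theta_{S'}^t)\|}_{\text{perturbation term}}.
\end{align*}
For the perturbation term, since $S$ and $S'$ differ in exactly one sample, $\nabla r_S - \nabla r_{S'}$ has norm at most $2L/n$ pointwise. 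Writing the optimality conditions of both proximal problems and subtracting, the difference of fixed points can be bounded by $\eta \cdot 2L/n$ using monotonicity of the saddle operator (a standard resolvent perturbation estimate). Hence $\delta_{t+1} \leq \delta_t + O(L/n)$. Iterating from $\delta_0 = 0$ gives $\delta_T \leq O(LT/n)$, which is the claimed stability.

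The main obstacle is the perturbation bound on the proximal operator under a change of the underlying function, rather than a change of initialization; unlike the initialization case where non-expansiveness is immediate, here one must compare fixed points of two different monotone operators. I would address this by writing out the two saddle-point optimality systems, subtracting, and then using the combination of $\eta$-strong monotonicity of the regularizer $\tfrac{1}{2\eta}(\|\cdot - w^t\|^2 - \|\cdot - \theta^t\|^2)$ and monotonicity of the $r_S$ subdifferential to pin down the $O(\eta L/n)$ displacement. The Lipschitz $L$ and step size $\eta$ are constants of the problem so they are absorbed into the $O(\cdot)$ notation, giving the clean bound $O(T/n)$.
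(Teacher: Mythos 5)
Your proposal is correct, but note that the paper does not prove this lemma at all---it imports the bound directly from \citep{farnia2021train}, so there is no in-paper proof to compare against. Your argument (non-expansiveness of the resolvent of the monotone saddle operator for convex-concave $r_S$, plus the $O(\eta L/n)$ resolvent-perturbation bound from swapping one sample, summed over $T$ steps) is the standard derivation and matches the one in the cited reference; the only details worth making explicit are that the constraint sets $W,\Theta$ enter through normal cones that keep the operator maximal monotone, and that the per-step perturbation bound uses monotonicity of $M_S$ evaluated at the two fixed points exactly as you describe.
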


Therefore, substituting the result of Lemma \ref{lemma:PPA_stab} in Theorem \ref{thm:pd_cc}, we have the following bound for $\zeta_{gen}$ for $T$ steps of PPA:
\begin{Theorem}\label{PPA_gen}
After $T$ steps of PPA, the generalization error of the primal-dual risk can be bounded by:
$$\zeta_{gen}^{PD}(T) \leq \mathcal{O}\left(\sqrt{T/n} + T/n \right).$$ 
\end{Theorem}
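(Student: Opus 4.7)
The plan is that Theorem \ref{PPA_gen} follows by a direct composition of two results already established in the excerpt: the stability bound for PPA in Lemma \ref{lemma:PPA_stab} and the stability-to-generalization reduction for the primal-dual risk in the convex-concave setting from Theorem \ref{thm:pd_cc}. First, I would note that PPA in the convex-concave setting is covered by the hypotheses of Theorem \ref{thm:pd_cc}, so that theorem can be invoked with the stability parameter $\epsilon$ set to whatever upper bound we have on the stability of PPA.

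Next, I would plug in $\epsilon \le \mathcal{O}(T/n)$ from Lemma \ref{lemma:PPA_stab} into the expression
\[
\zeta_{gen}^{PD}(A) \le \left( \sqrt{4L\ell C_p^2} + \sqrt{4L\ell (C_p^w)^2} \right) \sqrt{\epsilon} + 2\epsilon L .
\]
Treating $L$, $\ell$, $C_p$, $C_p^w$ as $\mathcal{O}(1)$ constants, the first (square-root) term becomes $\mathcal{O}(\sqrt{T/n})$ and the second (linear) term becomes $\mathcal{O}(T/n)$, yielding the claimed bound $\mathcal{O}(\sqrt{T/n} + T/n)$.

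There is really no obstacle here beyond a bookkeeping check: I would verify that PPA satisfies the standing smoothness and Lipschitz assumptions used by Theorem \ref{thm:pd_cc} (Assumptions \ref{Lipschitz-smooth} and \ref{Lipschitz-continuous}), and that in the convex-concave setting both $C_p$ and $C_p^w$ are well-defined and finite, so that they can indeed be absorbed into the $\mathcal{O}(\cdot)$ constants. Once that sanity check is done, the result is immediate from the two cited lemmas.
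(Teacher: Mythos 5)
Your proposal is correct and matches the paper's own argument exactly: the paper also obtains Theorem \ref{PPA_gen} by substituting the PPA stability bound $\epsilon \leq \mathcal{O}(T/n)$ from Lemma \ref{lemma:PPA_stab} into the convex-concave primal-dual generalization bound of Theorem \ref{thm:pd_cc} and absorbing $L$, $\ell$, $C_p$, $C_p^w$ into the $\mathcal{O}(\cdot)$ constants. Nothing further is needed.
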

\subsection{The population primal-dual risk of PPA}

Finally, we give the population primal-dual risk after $T$ steps of PPA. By \citep{mokhtari2020convergence}, we  have the following convergence result of PPA.
\begin{Lemma}[\citep{mokhtari2020convergence}]\label{PPA_rate}
Let $(w_S^t, \theta_S^t)$ be the iterates obtained after $t$ iterations of proximal point algorithm on the function $r_S(\cdot, \cdot)$ and $\bar{w}_S^t=\frac{1}{t}\sum_{i=1}^tw_S^i, \bar{\theta}_S^t=\frac{1}{t}\sum_{i=1}^t\theta_S^i$ be the averaged iterates. Then we have
$$\Delta_S^{PD}(\bar{w}_S^T,\bar{\theta}_S^T)\le \ell(C_e^2 +(C_e^w)^2)/T.$$
\end{Lemma}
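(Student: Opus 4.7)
} The approach is the standard variational-inequality analysis of PPA for saddle-point problems, adapted to give a duality-gap bound in terms of the capacities $C_e$ and $C_e^w$. The plan is to (i) write the first-order optimality conditions of the PPA subproblem, (ii) combine them with convex-concavity to get a one-step inequality comparing $r_S(w^{t+1},\theta) - r_S(w,\theta^{t+1})$ to a telescoping squared-distance expression, and (iii) sum and apply Jensen's inequality to transfer the guarantee to the averaged iterates.

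First I would fix a stepsize $\eta \asymp 1/\ell$ and write each PPA iterate as the unique saddle of the strongly-convex-strongly-concave regularized problem
\begin{equation*}
(w^{t+1},\theta^{t+1}) = \arg\min_{w\in W}\max_{\theta\in\Theta} \Bigl\{ r_S(w,\theta) + \tfrac{1}{2\eta}\|w-w^t\|^2 - \tfrac{1}{2\eta}\|\theta-\theta^t\|^2 \Bigr\},
\end{equation*}
and then read off the (constrained) first-order conditions
\begin{equation*}
\langle \nabla_w r_S(w^{t+1},\theta^{t+1}) + \tfrac{1}{\eta}(w^{t+1}-w^t),\, w-w^{t+1}\rangle \ge 0,\qquad \langle -\nabla_\theta r_S(w^{t+1},\theta^{t+1}) - \tfrac{1}{\eta}(\theta^{t+1}-\theta^t),\, \theta-\theta^{t+1}\rangle \ge 0,
\end{equation*}
for all $w\in W$, $\theta\in\Theta$. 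Using the convexity-concavity of $r_S(\cdot,\cdot)$, the standard inequality
\begin{equation*}
r_S(w^{t+1},\theta) - r_S(w,\theta^{t+1}) \le \langle \nabla_w r_S(w^{t+1},\theta^{t+1}),\, w^{t+1}-w\rangle + \langle \nabla_\theta r_S(w^{t+1},\theta^{t+1}),\, \theta-\theta^{t+1}\rangle
\end{equation*}
holds. Substituting the first-order conditions and applying the three-point identity $\langle a-b,c-a\rangle = \tfrac12(\|b-c\|^2 - \|a-b\|^2 - \|a-c\|^2)$ converts the right-hand side into telescoping squared distances in $(w,\theta)$ space.

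Next I would sum the one-step inequality from $t=0$ to $T-1$, at which point the telescoping cancels all intermediate iterates and leaves only $\tfrac{1}{2\eta}(\|w-w^0\|^2 + \|\theta-\theta^0\|^2)$ on the right (the $\|w^{t+1}-w^t\|^2$ and $\|\theta^{t+1}-\theta^t\|^2$ penalties are nonnegative and can be dropped). Jensen's inequality applied through convexity in $w$ and concavity in $\theta$ then upgrades the averaged left-hand side to
\begin{equation*}
T\bigl(r_S(\bar w_S^T,\theta) - r_S(w,\bar\theta_S^T)\bigr) \le \sum_{t=0}^{T-1}\bigl(r_S(w^{t+1},\theta) - r_S(w,\theta^{t+1})\bigr).
\end{equation*}
Taking $\max_\theta$ and $\min_w$ on both sides and using $w^0 = 0, \theta^0 = 0$ along with the capacity definitions $\max_\theta \|\theta\| \le C_e$ and $\max_w \|w\| \le C_e^w$ (restricted to the relevant argmax/argmin sets that realize the duality gap at $(\bar w_S^T,\bar\theta_S^T)$) yields $\Delta_S^{PD}(\bar w_S^T,\bar\theta_S^T) \le (C_e^2 + (C_e^w)^2)/(2\eta T)$, and the stated constant follows by choosing $\eta = 1/(2\ell)$.

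The main obstacle I anticipate is the stepsize/constant calibration and the proper handling of the constraint sets $W,\Theta$: the variational-inequality form of the PPA conditions must be used (rather than gradient equations) so the one-step inequality remains valid at boundary points, and one must verify that the solution of the regularized subproblem is well-defined (which requires $\eta\ell < 1$ or a similar smoothness-compatible choice). A minor secondary point is checking that the sup/inf in the duality gap is indeed attained on sets whose distance to the origin is controlled by $C_e$ and $C_e^w$ respectively, which follows directly from Definitions \ref{def:capacity} and \ref{def:Wcapacity}.
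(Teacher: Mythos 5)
Your proposal is correct: it is the standard telescoping variational-inequality analysis of PPA for convex--concave saddle points, which is exactly the argument the paper imports by citation from \citep{mokhtari2020convergence} rather than proving itself. The one paper-specific step --- bounding the initial distances $\|w^0-w^\star\|$ and $\|\theta^0-\theta^\star\|$ by $C_e^w$ and $C_e$ using the initialization at the origin and Definitions \ref{def:capacity} and \ref{def:Wcapacity} --- is handled correctly, and your worry about needing $\eta\ell<1$ is unnecessary since the regularized subproblem is strongly convex--strongly concave for any $\eta>0$.
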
  
Combining Lemma \ref{PPA_rate} and Theorem \ref{PPA_gen}, we have the following result:
\begin{Theorem}
Let $(w_S^t, \theta_S^t)$ be the iterates obtained after $t$ iterations of proximal point algorithm on the function $r_S(\cdot, \cdot)$ and $\bar{w}_S^t=\frac{1}{t}\sum_{i=1}^tw_S^i, \bar{\theta}_S^t=\frac{1}{t}\sum_{i=1}^t\theta_S^i$ be the averaged iterates.  Then, the expected population primal-dual risk at the point $(\bar{w}_S^t, \bar{\theta}_S^t)$ can be bounded by:
$$E_S[\Delta^{PD} (w_S^t, \theta_S^t)] \le \mathcal{O}\left(1 /T + \sqrt{T/n} + T/n \right).$$
\end{Theorem}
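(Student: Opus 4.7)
The plan is to combine the optimization rate for the proximal point algorithm (Lemma~\ref{PPA_rate}) with the generalization error bound for the primal-dual risk (Theorem~\ref{PPA_gen}) via a simple additive decomposition. Specifically, writing $\hat w := \bar w_S^T$ and $\hat\theta := \bar\theta_S^T$ for brevity, I would start from the identity
\begin{equation*}
\Delta^{PD}(\hat w,\hat\theta) \;=\; \Delta_S^{PD}(\hat w,\hat\theta) \;+\; \bigl[\Delta^{PD}(\hat w,\hat\theta)-\Delta_S^{PD}(\hat w,\hat\theta)\bigr],
\end{equation*}
and take expectation over the dataset $S$ (the PPA iteration is deterministic given $S$, so $E_A$ plays no role). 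The first term is bounded in expectation by Lemma~\ref{PPA_rate}, which gives $E_S[\Delta_S^{PD}(\hat w,\hat\theta)]\le \ell(C_e^2+(C_e^w)^2)/T = \mathcal{O}(1/T)$. The second term is, by Definition, exactly $\zeta_{gen}^{PD}(A)$ for the algorithm $A$ that outputs the averaged PPA iterates, which by Theorem~\ref{PPA_gen} is $\mathcal{O}(\sqrt{T/n}+T/n)$. Summing the two yields the claimed bound.

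The one point that requires a small argument, and which I expect to be the main (though mild) obstacle, is confirming that Theorem~\ref{PPA_gen} indeed applies to the \emph{averaged} iterates $(\bar w_S^T,\bar\theta_S^T)$, since the stability bound in Lemma~\ref{lemma:PPA_stab} is stated for the per-iterate sequence of PPA. I would handle this by the standard observation that stability is preserved under averaging: if for every $t\le T$ we have $\|w_S^t-w_{S'}^t\|\le \epsilon_t$ and $\|\theta_S^t-\theta_{S'}^t\|\le \epsilon_t$, then the triangle inequality gives
\begin{equation*}
\|\bar w_S^T-\bar w_{S'}^T\| \;\le\; \tfrac{1}{T}\sum_{t=1}^{T}\|w_S^t-w_{S'}^t\| \;\le\; \max_{t\le T}\epsilon_t,
\end{equation*}
and likewise for $\theta$. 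Since the per-iterate stability of PPA is $\mathcal{O}(T/n)$ (Lemma~\ref{lemma:PPA_stab}), the averaged iterates are also $\mathcal{O}(T/n)$-stable, and therefore Theorem~\ref{thm:pd_cc} (which is what Theorem~\ref{PPA_gen} is derived from) gives the same $\mathcal{O}(\sqrt{T/n}+T/n)$ bound on $\zeta_{gen}^{PD}$ at the averaged iterates.

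Combining the two pieces yields $E_S[\Delta^{PD}(\bar w_S^T,\bar\theta_S^T)] \le \mathcal{O}(1/T) + \mathcal{O}(\sqrt{T/n}+T/n)$, which is the stated bound. No delicate estimates are needed beyond the ones already assembled in the paper; the value of the statement is really in displaying the classical optimization-versus-generalization tradeoff: the first term decreases in $T$ while the last two grow in $T$, so that balancing (e.g.\ $T\asymp \sqrt n$) gives the best overall population primal-dual risk of order $n^{-1/4}$.
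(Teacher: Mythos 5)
Your proposal is correct and follows essentially the same route as the paper, which proves this result simply by combining the PPA convergence rate (Lemma \ref{PPA_rate}) with the generalization bound for the primal-dual risk (Theorem \ref{PPA_gen}) through the additive decomposition you write down. Your additional observation that the stability bound carries over to the averaged iterates via the triangle inequality is a detail the paper leaves implicit, and it is handled correctly.
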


\end{document}